\documentclass[reqno]{amsart}
\usepackage[a4paper, margin=1in]{geometry}
\usepackage[T1]{fontenc}
\usepackage{natbib}
\usepackage[colorlinks=true,linkcolor=blue,citecolor=purple,urlcolor=blue]{hyperref}
\usepackage{tikz}
\usepackage{graphicx} %
\usepackage{caption}  %
\usepackage{adjustbox} %
\usepackage{multicol} %
\usepackage{amsmath}
\usepackage{fancyhdr}
\usepackage{amsaddr} %
\usepackage{xpatch}
\usepackage[parfill]{parskip}
\usepackage{mathtools}
\usepackage{stmaryrd}
\usepackage{dsfont}
\usepackage{nicematrix}
\usepackage{nicefrac}
\usepackage{wrapfig}
\usepackage{algorithm}
\usepackage[noend]{algpseudocode}
\usepackage{subcaption}  
\usepackage[super]{nth}
\NiceMatrixOptions{allow-duplicate-names}
\setcounter{MaxMatrixCols}{50}
\makeatletter
\renewcommand\section{\@startsection{section}{1}{\z@}%
  {-3.5ex \@plus -1ex \@minus -.2ex}%
  {2.3ex \@plus.2ex}%
  {\raggedright\normalfont\Large\scshape}} %
\makeatother

\makeatletter
\renewcommand\subsection{\@startsection{subsection}{2}{\z@}%
  {-3.25ex\@plus -1ex \@minus -.2ex}%
  {1.5ex \@plus .2ex}%
  {\raggedright\normalfont\large\scshape}}
\makeatother

\captionsetup{
  width=\textwidth,   %
  font=small          %
}

\fancyhf{}
\fancyhead[RO,LE]{\footnotesize\thepage}
\fancyhead[RE]{\footnotesize\leftmark}
\fancyhead[LO]{\footnotesize\leftmark} 

\pagestyle{fancy}

\makeatletter
\xapptocmd{\@sect}{\csname #1mark\endcsname{#7}}{}{}
\makeatother  

\makeatletter
\def\paragraph{\@startsection{paragraph}{4}%
  \z@\z@{-\fontdimen2\font}%
  {\normalfont\bfseries}}
\makeatother

\makeatletter
\renewcommand{\eqref}[1]{Eq.~\hyperref[#1]{\ref*{#1}}}
\makeatother

\usepackage{etoolbox}
\makeatletter
\patchcmd{\@maketitle}{\global\topskip42\p@\relax}
  {\global\topskip42\p@\relax \vspace*{-38pt}}
  {}{}
\makeatother

\def\argmax{\mathop{\rm arg\,max}}

\newtheorem{prop}{Proposition}
\newtheorem{lemma}{Lemma}
\newtheorem{claim}{Claim}

\newcommand{\hOne}{%
\begin{pNiceMatrix}[name=hOne]
    \grayBlock{e_{s_1}} & \verticalBlock{} \grayBlockk{e_{s_2}} & \verticalBlock{} \grayBlockkk{e_{s_3}} & \verticalBlock{} \grayBlockkk{e_{s_4}} & \verticalBlock{} \grayBlockkk{e_{s_5}} & \verticalBlock{} \grayBlock{e_{s_6}} & \verticalBlock{} \grayBlockk{e_{s_7}} & \verticalBlock{} \grayBlockkk{e_{s_8}} & \verticalBlock{} \grayBlock{e_{s_9}} & \verticalBlock{} \grayBlockkk{e_{s_{10}}} \\
    \oneBlock  &  &  &  &  &  &  &  &  &  \\
     & \oneBlock  &  &  &  &  &  &  &  &  \\
     &  & \oneBlock  &  &  &  &  &  &  &  \\
     &  &  & \oneBlock &  &  &  &  &  &  \\
     &  &  &  & \oneBlock &  &  &  &  &  \\
     &  &  &  &  & \oneBlock &  &  &  &  \\
     &  &  &  &  &  & \oneBlock &  &  &  \\
     &  &  &  &  &  &  & \oneBlock &  &  \\
     &  &  &  &  &  &  &  & \oneBlock &  \\
     &  &  &  &  &  &  &  &  & \oneBlock \\
    \grayBlockk{\tilde{s}_1} & \grayBlock{\tilde{s}_2} & \grayBlockkk{\tilde{s}_3}  & \grayBlock{\tilde{s}_4} & \grayBlockkk{\tilde{s}_5} & \grayBlock{\tilde{s}_6}& \grayBlockk{\tilde{s}_7}&  \grayBlock{\tilde{s}_8}&  \grayBlockk{\tilde{s}_9}&  \grayBlockkk{\tilde{s}_{10}}\\
     \grayBlock{1} & \grayBlock{1} & \redBlock{3} & \redBorderBlock \blueBlock{4} &  &  &  &  &  &   \\
     &  & \orangeBlock{3} &  \redBlock{4} &  \goldBorderBlock \blueBlock{5} &  &  &  &  &  \\
     &  &  & \orangeBlock{4} & \redBlock{5} & \blueBorderBlock \blueBlock{6} &  &  &  &  \\
     &  &  &  & \orangeBlock{5} & \redBlock{6} & \redBorderBlock \blueBlock{7} &  &  &  \\
     &  &  &  &  & \orangeBlock{6} & \redBlock{7} & \goldBorderBlock \blueBlock{8}  &  &  \\
     &  &  &  &  &  & \orangeBlock{7} & \redBlock{8} & \blueBorderBlock \blueBlock{9} &  \\
     &  &  &  &  &  &  & \orangeBlock{8} & \redBlock{9} & \redBorderBlock \blueBlock{10} \\
     &  &  &  &  &  &  &  & \orangeBlock{9} & \redBlock{10} \\
     &  &  &  &  &  &  &  &  & \orangeBlock{10} \\
     &  &  &  &  &  &  &  &  &  \\
\end{pNiceMatrix}
}

\newcommand{\hathTwoOneTen}{%
\begin{pNiceMatrix}[name=hathTwoOneTen]
\grayBlockk{\scalebox{0.8}{$\sum\limits_{i= 4,7,10}$} e_{s_i}} \\ %
 \gzero \\
 \gzero \\
 \gzero \\
\oneBlock \\
 \gzero \\
 \gzero \\
\oneBlock \\
 \gzero \\
 \gzero \\
\oneBlock \\
 \grayBlockk{\scalebox{0.8}{$\sum\limits_{i= 4,7,10}$} \tilde{s}_i} \\
 \redBorderBlock \blueBlock{4} \\
 \redBlock{4} \\
 \orangeBlock{4} \\
 \redBorderBlock \blueBlock{7} \\
 \redBlock{7} \\
 \orangeBlock{7} \\
 \redBorderBlock \blueBlock{10} \\
 \redBlock{10} \\
 \orangeBlock{10} \\
 \gzero \\
\end{pNiceMatrix}
}
\newcommand{\hathTwoTwoTen}{%
\begin{pNiceMatrix}[name=hathTwoTwoTen]
\grayBlockk{e_{s_6}\scalebox{0.8}{+}e_{s_9} } \\
\gzero \\
\gzero \\
\gzero \\
\gzero \\
\gzero \\
\oneBlock \\
\gzero \\
\gzero \\
\oneBlock \\
\gzero \\
\grayBlockk{\tilde{s}_6 \scalebox{0.8}{+} \tilde{s}_9} \\
\gzero \\
\gzero \\
\blueBorderBlock\blueBlock{6} \\
\redBlock{6} \\
\orangeBlock{6} \\
\blueBorderBlock\blueBlock{9} \\
\redBlock{9} \\
\orangeBlock{9} \\
\gzero \\
\gzero \\
\end{pNiceMatrix}
}

\newcommand{\hathTwoThreeTen}{%
\begin{pNiceMatrix}[name=hathTwoThreeTen]
\grayBlockk{e_{s_5}\scalebox{0.8}{+}e_{s_8} }\\
\gzero \\
\gzero \\
\gzero \\
\gzero \\
\oneBlock \\
\gzero \\
\gzero \\
\oneBlock \\
\gzero \\
\gzero \\
\grayBlockk{\tilde{s}_5 \scalebox{0.8}{+} \tilde{s}_8} \\
\gzero \\
\goldBorderBlock \blueBlock{5} \\
\redBlock{5} \\
\orangeBlock{5} \\
\goldBorderBlock \blueBlock{8} \\
\redBlock{8} \\
\orangeBlock{8} \\
\gzero \\
\gzero \\
\gzero \\
\end{pNiceMatrix}
}

\definecolor{mgreen}{HTML}{52796f}
\definecolor{mred}{HTML}{c1121f}
\definecolor{mblue}{HTML}{003566}
\definecolor{mgold}{HTML}{ffb703}
\definecolor{mlightblue}{HTML}{5B99C2}
\definecolor{mpale}{HTML}{e9d8a6}
\definecolor{teal}{HTML}{008080} 

\colorlet{mgreenTrans}{mgreen!40!white}
\colorlet{mgoldTrans}{mgold!50!white}
\colorlet{mredTrans}{mred!40!white}
\colorlet{mredTransk}{mred!25!white}
\colorlet{mredTranskk}{mred!10!white}
\colorlet{mblueTrans}{mblue!40!white}
\colorlet{mlightblueTrans}{mlightblue!40!white} 
\colorlet{grayTrans}{gray!40!white}
\colorlet{grayTransk}{gray!25!white}
\colorlet{grayTranskk}{gray!10!white}
\colorlet{tealTrans}{teal!40!white} 
\colorlet{mpaleTrans}{mpale!80!white}

\newcommand{\mgreenBlock}[1]{\Block[fill=mgreenTrans,rounded-corners]{1-1}{}#1}
\newcommand{\mgoldBlock}[1]{\Block[fill=mgoldTrans,rounded-corners]{1-1}{}#1}
\newcommand{\mredBlock}[1]{\Block[fill=mredTrans,rounded-corners]{1-1}{}#1}
\newcommand{\mredBlockk}[1]{\Block[fill=mredTransk,rounded-corners]{1-1}{}#1}
\newcommand{\mredBlockkk}[1]{\Block[fill=mredTranskk,rounded-corners]{1-1}{}#1}
\newcommand{\mblueBlock}[1]{\Block[fill=mblueTrans,rounded-corners]{1-1}{}#1}
\newcommand{\lightblueBlock}[1]{\Block[fill=mlightblueTrans,rounded-corners]{1-1}{}#1} 
\newcommand{\grayBlock}[1]{\Block[fill=grayTrans,rounded-corners]{1-1}{}#1}
\newcommand{\grayBlockk}[1]{\Block[fill=grayTransk,rounded-corners]{1-1}{}#1}
\newcommand{\grayBlockkk}[1]{\Block[fill=grayTranskk,rounded-corners]{1-1}{}#1}
\newcommand{\tealBlock}[1]{\Block[fill=tealTrans,rounded-corners]{1-1}{}#1}
\newcommand{\mpaleBlock}[1]{\Block[fill=mpaleTrans,rounded-corners]{1-1}{}#1}
\newcommand{\mlam}{\textcolor{gray!90}{\scalebox{0.8}{-}\lambda}}
\newcommand{\blam}{\lightblueBlock{\scalebox{0.8}{+}\lambda}} 
\newcommand{\rlam}{\mredBlock{\scalebox{0.8}{+}\lambda}}
\newcommand{\bblam}{\mblueBlock{\scalebox{0.8}{+}\lambda}}
\newcommand{\glam}{\mgoldBlock{\scalebox{0.8}{+}\lambda}}
\newcommand{\plam}{\mpaleBlock{\scalebox{0.8}{+}\lambda}}
\newcommand{\grlam}{\mgreenBlock{\scalebox{0.8}{+}\lambda}}
\newcommand{\gzero}{\textcolor{gray!90}{0}}
\newcommand{\verticall}[1]{%
    \Block[borders={right,tikz={dashed}}]{#1-1}{}%
}

\newcommand{\verticalBlock}[1]{\Block[borders={left,tikz={dashed}}]{22-1}{}#1}

\newcommand{\oneBlock}{\Block[fill=teal!35 ,rounded-corners]{1-1}{}1}
\newcommand{\blueBlock}[1]{\Block[fill=blue!35 ,rounded-corners]{1-1}{}\tilde{p}_{#1,3}}
\newcommand{\redBlock}[1]{\Block[fill=red!35 ,rounded-corners]{1-1}{}\tilde{p}_{#1,2}}
\newcommand{\orangeBlock}[1]{\Block[fill=orange!35 ,rounded-corners]{1-1}{}\tilde{p}_{#1,1}}
\newcommand{\greenBlock}[1]{\Block[fill=mgreen!35 ,rounded-corners]{1-1}{}\tilde{p}_{#1,4}}
\newcommand{\redBorderBlock}{\Block[borders={right,bottom,top,left,tikz={dash pattern=on 6pt off 0pt,line width=2pt, draw=mred!80}}]{3-1}{}}
\newcommand{\blueBorderBlock}{\Block[borders={right,bottom,top,left,tikz={dash pattern=on 6pt off 0pt,line width=2pt, draw=mblue!80}}]{3-1}{}}

\newcommand{\goldBorderBlock}{\Block[borders={right,bottom,top,left,tikz={dash pattern=on 6pt off 0pt,line width=2pt, draw=mgold!80}}]{3-1}{}}

\newcommand{\Aone}{%
\begin{pNiceMatrix}[name=Aone]
  \mlam & \mlam & \mlam & \mlam & \mlam & \mlam & \mlam & \mlam & \mlam & \mlam \\
  \blam & \mlam & \mlam & \mlam & \mlam & \mlam & \mlam & \mlam & \mlam & \mlam \\
  \blam & \blam & \mlam & \mlam & \mlam & \mlam & \mlam & \mlam & \mlam & \mlam \\
  \blam & \blam & \blam & \mlam & \mlam & \mlam & \mlam & \mlam & \mlam & \mlam \\
  \mlam & \blam & \blam & \blam & \mlam & \mlam & \mlam & \mlam & \mlam & \mlam \\
  \mlam & \mlam & \blam & \blam & \blam & \mlam & \mlam & \mlam & \mlam & \mlam \\
  \mlam & \mlam & \mlam & \blam & \blam & \blam & \mlam & \mlam & \mlam & \mlam \\
  \mlam & \mlam & \mlam & \mlam & \blam & \blam & \blam & \mlam & \mlam & \mlam \\
  \mlam & \mlam & \mlam & \mlam & \mlam & \blam & \blam & \blam & \mlam & \mlam \\
  \mlam & \mlam & \mlam & \mlam & \mlam & \mlam & \blam & \blam & \blam & \mlam \\
\end{pNiceMatrix}
}
\newcommand{\calAone}{%
\begin{pNiceMatrix}[name=calAone]
    \grayBlock{1} & \gzero & \gzero & \gzero & \gzero & \gzero & \gzero & \gzero & \gzero & \gzero \\
    \grayBlock{1} & \gzero & \gzero & \gzero & \gzero & \gzero & \gzero & \gzero & \gzero & \gzero\\
    \redBlock{3} & \orangeBlock{2} & \gzero & \gzero & \gzero & \gzero & \gzero & \gzero & \gzero & \gzero\\
    \blueBlock{4} & \redBlock{4} & \orangeBlock{4} & \gzero & \gzero & \gzero & \gzero & \gzero & \gzero & \gzero\\
    \gzero & \blueBlock{5} & \redBlock{5} & \orangeBlock{5} & \gzero & \gzero & \gzero & \gzero & \gzero & \gzero\\
    \gzero & \gzero & \blueBlock{6} & \redBlock{6} & \orangeBlock{6} & \gzero & \gzero & \gzero & \gzero & \gzero\\
    \gzero & \gzero & \gzero & \blueBlock{7} & \redBlock{7} & \orangeBlock{7} & \gzero & \gzero & \gzero & \gzero\\
    \gzero & \gzero & \gzero & \gzero & \blueBlock{8} & \redBlock{8} & \orangeBlock{8} & \gzero & \gzero & \gzero\\
    \gzero & \gzero & \gzero & \gzero & \gzero & \blueBlock{9} & \redBlock{9} & \orangeBlock{9} & \gzero & \gzero\\
    \gzero & \gzero & \gzero & \gzero & \gzero & \gzero & \blueBlock{10} & \redBlock{10} & \orangeBlock{10} & \gzero\\
\end{pNiceMatrix}
}
\newcommand{\hathOne}{%
\begin{pNiceMatrix}[name=hathOne]
    \verticall{11}  \grayBlock{\tilde{s}_1} & \verticall{11}  \grayBlockk{\tilde{s}_2} & \verticall{11}  \grayBlockkk{\tilde{s}_3} & \verticall{11} \grayBlock{\tilde{s}_4} & \verticall{11} \grayBlockkk{\tilde{s}_5} & \verticall{11} \grayBlock{\tilde{s}_6}& \verticall{11} \grayBlockk{\tilde{s}_7}& \verticall{11} \grayBlock{\tilde{s}_8}& \verticall{11} \grayBlockk{\tilde{s}_9}& \grayBlockkk{\tilde{s}_{10}}\\
    \grayBlock{1} & \grayBlock{1} & \redBlock{3} & \blueBlock{4} & \gzero & \gzero & \gzero & \gzero & \gzero & \gzero \\
    \gzero & \gzero & \orangeBlock{3} & \redBlock{4} & \blueBlock{5} & \gzero & \gzero & \gzero & \gzero & \gzero \\
    \gzero & \gzero & \gzero & \orangeBlock{4} & \redBlock{5} & \blueBlock{6} & \gzero & \gzero & \gzero & \gzero \\
    \gzero & \gzero & \gzero & \gzero & \orangeBlock{5} & \redBlock{6} & \blueBlock{7} & \gzero & \gzero & \gzero \\
    \gzero & \gzero & \gzero & \gzero & \gzero & \orangeBlock{6} & \redBlock{7} & \blueBlock{8} & \gzero & \gzero \\
    \gzero & \gzero & \gzero & \gzero & \gzero & \gzero & \orangeBlock{7} & \redBlock{8} & \blueBlock{9} & \gzero \\
    \gzero & \gzero & \gzero & \gzero & \gzero & \gzero & \gzero & \orangeBlock{8} & \redBlock{9} & \blueBlock{10} \\
    \gzero & \gzero & \gzero & \gzero & \gzero & \gzero & \gzero & \gzero & \orangeBlock{9} & \redBlock{10} \\
    \gzero & \gzero & \gzero & \gzero & \gzero & \gzero & \gzero & \gzero & \gzero & \orangeBlock{10} \\
    \gzero & \gzero & \gzero & \gzero & \gzero & \gzero & \gzero & \gzero & \gzero & \gzero \\
\end{pNiceMatrix}
}
\newcommand{\Atwoone}{%
\begin{pNiceMatrix}[name=Atwoone]
\mlam  & \mlam & \mlam & \mlam & \mlam & \mlam & \mlam & \mlam & \mlam & \mlam \\
\mlam  & \mlam & \mlam & \mlam & \mlam & \mlam & \mlam & \mlam & \mlam & \mlam \\
\mlam  & \mlam & \mlam & \mlam & \mlam & \mlam & \mlam & \mlam & \mlam & \mlam \\
\mlam  & \mlam & \mlam & \rlam & \mlam & \mlam & \mlam & \mlam & \mlam & \mlam \\
\mlam  & \mlam & \mlam & \mlam & \rlam & \mlam & \mlam & \mlam & \mlam & \mlam \\
\mlam  & \mlam & \mlam & \mlam & \mlam & \rlam & \mlam & \mlam & \mlam & \mlam \\
\mlam  & \mlam & \mlam & \rlam & \mlam & \mlam & \rlam & \mlam & \mlam & \mlam \\
\mlam  & \mlam & \mlam & \mlam & \rlam & \mlam & \mlam & \rlam & \mlam & \mlam \\
\mlam  & \mlam & \mlam & \mlam & \mlam & \rlam & \mlam & \mlam & \rlam & \mlam \\
\mlam  & \mlam & \mlam & \rlam & \mlam & \mlam & \rlam & \mlam & \mlam & \rlam \\
\end{pNiceMatrix}
}
\newcommand{\calATwoOne}{%
\begin{pNiceMatrix}[name=calATwoOne]
   \gzero &  \gzero &  \gzero &  \gzero &  \gzero &  \gzero &  \gzero &  \gzero &  \gzero & \gzero \\
   \gzero& \gzero & \gzero & \gzero & \gzero & \gzero & \gzero & \gzero & \gzero & \gzero \\
   \gzero&  \gzero& \gzero & \gzero & \gzero & \gzero & \gzero & \gzero & \gzero & \gzero \\
   \gzero& \gzero & \gzero & \mredBlock{1} &  \gzero& \gzero & \gzero & \gzero & \gzero & \gzero \\
   \gzero&  \gzero&  \gzero&  \gzero& \mredBlock{1} & \gzero & \gzero & \gzero & \gzero &  \gzero\\
   \gzero& \gzero & \gzero &  \gzero& \gzero & \mredBlock{1} & \gzero & \gzero & \gzero & \gzero \\
   \gzero& \gzero & \gzero & \mredBlockk{\scalebox{0.8}{\nicefrac{1}{2}}} & \gzero & \gzero & \mredBlockk{\scalebox{0.8}{\nicefrac{1}{2}}} & \gzero & \gzero & \gzero \\
   \gzero& \gzero & \gzero & \gzero & \mredBlockk{\scalebox{0.8}{\nicefrac{1}{2}}} & \gzero & \gzero & \mredBlockk{\scalebox{0.8}{\nicefrac{1}{2}}} &  \gzero& \gzero \\
   \gzero& \gzero &  \gzero& \gzero & \gzero & \mredBlockk{\scalebox{0.8}{\nicefrac{1}{2}}} &  \gzero& \gzero & \mredBlockk{\scalebox{0.8}{\nicefrac{1}{2}}} & \gzero \\
   \gzero& \gzero & \gzero & \mredBlockkk{\scalebox{0.8}{\nicefrac{1}{3}}} & \gzero & \gzero & \mredBlockkk{\scalebox{0.8}{\nicefrac{1}{3}}} & \gzero & \gzero & \mredBlockkk{\scalebox{0.8}{\nicefrac{1}{3}}} \\
\end{pNiceMatrix}
}
\newcommand{\Atwotwo}{%
\begin{pNiceMatrix}[name=Atwotwo]
 \mlam  & \mlam & \mlam & \mlam & \mlam & \mlam & \mlam & \mlam & \mlam & \mlam \\
 \mlam  & \mlam & \mlam & \mlam & \mlam & \mlam & \mlam & \mlam & \mlam & \mlam \\
 \mlam  & \mlam & \mlam & \mlam & \mlam & \mlam & \mlam & \mlam & \mlam & \mlam \\
 \mlam  & \mlam & \mlam & \mlam & \mlam & \mlam & \mlam & \mlam & \mlam & \mlam \\
 \mlam  & \mlam & \mlam & \bblam & \mlam & \mlam & \mlam & \mlam & \mlam & \mlam \\
 \mlam  & \mlam & \mlam & \mlam & \bblam & \mlam & \mlam & \mlam & \mlam & \mlam \\
 \mlam  & \mlam & \mlam & \mlam & \mlam & \bblam & \mlam & \mlam & \mlam & \mlam \\
 \mlam  & \mlam & \mlam & \bblam & \mlam & \mlam & \bblam & \mlam & \mlam & \mlam \\
 \mlam  & \mlam & \mlam & \mlam & \bblam & \mlam & \mlam & \bblam & \mlam & \mlam \\
 \mlam  & \mlam & \mlam & \mlam & \mlam & \bblam & \mlam & \mlam & \bblam & \mlam \\
\end{pNiceMatrix}
}
\newcommand{\Atwothree}{%
\begin{pNiceMatrix}[name=Atwothree]
 \mlam  & \mlam & \mlam & \mlam & \mlam & \mlam & \mlam & \mlam & \mlam & \mlam \\
 \mlam  & \mlam & \mlam & \mlam & \mlam & \mlam & \mlam & \mlam & \mlam & \mlam \\
 \mlam  & \mlam & \mlam & \mlam & \mlam & \mlam & \mlam & \mlam & \mlam & \mlam \\
 \mlam  & \mlam & \mlam & \mlam & \mlam & \mlam & \mlam & \mlam & \mlam & \mlam \\
 \mlam  & \mlam & \mlam & \mlam & \mlam & \mlam & \mlam & \mlam & \mlam & \mlam \\
 \mlam  & \mlam & \mlam & \glam & \mlam & \mlam & \mlam & \mlam & \mlam & \mlam \\
 \mlam  & \mlam & \mlam & \mlam & \glam & \mlam & \mlam & \mlam & \mlam & \mlam \\
 \mlam  & \mlam & \mlam & \mlam & \mlam & \glam & \mlam & \mlam & \mlam & \mlam \\
 \mlam  & \mlam & \mlam & \glam & \mlam & \mlam & \glam & \mlam & \mlam & \mlam \\
 \mlam  & \mlam & \mlam & \mlam & \glam & \mlam & \mlam & \glam & \mlam & \mlam \\
\end{pNiceMatrix}
}

\newcommand{\Atildeone}{%
\begin{pNiceMatrix}[margin, cell-space-limits=1pt]
\Block[fill=mgold!80,rounded-corners]{1-1}{}\log P^{\star \top} & 0 \\
0 & \Block[fill=mlightblue!40,rounded-corners]{1-1}{}A^{(1)}
\end{pNiceMatrix} 
}
\newcommand{\Atildetwo}{%
\begin{pNiceMatrix}[margin]
0 & 0 & \Block[borders={left,bottom,tikz={dashed}}]{2-2}{0} \\
0 &\Block[fill=mred!40,rounded-corners]{1-1}{}A^{(2,1)} &  \\ 
\Block[borders={top,right,tikz={dashed}}]{2-2}{0} & & \Block[]{2-2}{0} \\ 
& & &  \\
\end{pNiceMatrix}
}
\newcommand{\Atildethree}{%
\begin{pNiceMatrix}
& \Block[fill=mpale!80,rounded-corners]{2-2}{} 0 & 0 & \Block[borders={left,bottom,tikz={dashed}}]{2-2}{0} && \Block[borders={left,bottom,tikz={dashed}}]{2-2}{0} && \Block[borders={left,bottom,tikz={dashed}}]{2-2}{0} && \Block[borders={left,bottom,tikz={dashed}}]{2-4}{\ldots} &&&& \Block[borders={left,bottom,tikz={dashed}}]{2-2}{0}&& \Block[borders={left,bottom,tikz={dashed}}]{2-2}{0} \\
& 0 & A^{(3)}&&&&&&&&&&&&&&&\\ 
& \Block[borders={top,tikz={dashed}}]{2-2}{0} && \Block[borders={left,top,tikz={dashed}}]{2-2}{0} && \Block[borders={left,top,right,tikz={dashed}}]{2-2}{0}  && \Block[borders={top,bottom,tikz={dashed}}]{2-2}{0} && \Block[borders={left,bottom,tikz={dashed}}]{2-4}{\ldots} &&&& \Block[borders={left,bottom,tikz={dashed}}]{2-2}{0} && \Block[borders={left,bottom,tikz={dashed}}]{2-2}{0}\\
&&&&&&&&&&&&&&&& \\
&\Block[borders={top,tikz={dashed}}]{2-2}{0} && \Block[borders={left,top,tikz={dashed}}]{2-2}{0} && \Block[borders={right,bottom,top,left,tikz={line width=1pt}}]{4-4}{} \Block[borders={left,top,right,tikz={dashed}}]{2-2}{0}  && \Block[borders={top,bottom,tikz={dashed}}]{2-2}{0} && \Block[borders={left,bottom,tikz={dashed}}]{2-4}{\ldots} &&&& \Block[borders={left,bottom,tikz={dashed}}]{2-2}{0} && \Block[borders={left,bottom,tikz={dashed}}]{2-2}{0}\\
&&&&&&&&&&&&&&&& \\
&\Block[borders={top,tikz={dashed}}]{2-2}{0} && \Block[borders={left,top,tikz={dashed}}]{2-2}{0} &&  \Block[fill=teal!40,rounded-corners]{2-2}{} 0 & 0 & \Block[borders={left,top,right,tikz={dashed}}]{2-2}{0}  && \Block[borders={left,bottom,tikz={dashed}}]{2-4}{\ldots} &&&& \Block[borders={left,bottom,tikz={dashed}}]{2-2}{0} && \Block[borders={left,bottom,tikz={dashed}}]{2-2}{0}\\
&&&&&0&B^{(3)}&&&&&&&&\\ 
&\Block[borders={top,tikz={dashed}}]{2-2}{\vdots} && \Block[borders={left,top,tikz={dashed}}]{2-2}{\vdots} && \Block[borders={left,top,right,tikz={dashed}}]{2-2}{\vdots}  && \Block[borders={top,bottom,tikz={dashed}}]{2-2}{\vdots} && \Block[borders={left,bottom,tikz={dashed}}]{2-4}{\ddots} &&&& \Block[borders={bottom,left,tikz={dashed}}]{2-2}{\vdots} && \Block[borders={bottom,left,tikz={dashed}}]{2-2}{\vdots}\\
&&&&&&&&&&&&&&&& \\
&\Block[borders={top,tikz={dashed}}]{2-2}{0} && \Block[borders={left,top,tikz={dashed}}]{2-2}{0} && \Block[borders={left,top,right,tikz={dashed}}]{2-2}{0} && \Block[borders={left,top,right,tikz={dashed}}]{2-2}{0} && \Block[borders={left,tikz={dashed}}]{2-4}{\ldots} &&&& \Block[borders={right,bottom,top,left,tikz={line width=1pt}}]{4-4}{} \Block[borders={left,right,tikz={dashed}}]{2-2}{0} && \Block[borders={left,top,right,tikz={dashed}}]{2-2}{0}\\
&&&&&&&&&&&&&&&& \\
& \Block[borders={top,tikz={dashed}}]{2-2}{0} && \Block[borders={left,top,tikz={dashed}}]{2-2}{0} && \Block[borders={left,top,right,tikz={dashed}}]{2-2}{0}  && \Block[borders={top,tikz={dashed}}]{2-2}{0} && \Block[borders={left,top,tikz={dashed}}]{2-4}{\ldots} &&&& \Block[fill=teal!40,rounded-corners]{2-2}{} 0 & 0 & \Block[borders={left,top,tikz={dashed}}]{2-2}{0}\\
&&&&&&&&&&&&&0&B^{(3)}&&\\ 
\end{pNiceMatrix}
}

\newcommand{\AtildethreeV}{%
\begin{pNiceMatrix}
& \Block[fill=mpale!40,rounded-corners]{2-2}{} 0 & 0 & \Block[borders={left,bottom,tikz={dashed}}]{2-2}{0} && \Block[borders={left,bottom,tikz={dashed}}]{2-2}{0} && \Block[borders={left,bottom,tikz={dashed}}]{2-2}{0} && \Block[borders={left,bottom,tikz={dashed}}]{2-4}{\ldots} &&&& \Block[borders={left,bottom,tikz={dashed}}]{2-2}{0}&& \Block[borders={left,bottom,tikz={dashed}}]{2-2}{0} \\
& 0 & A^{(3)}&&&&&&&&&&&&&&&\\ 
& \Block[borders={top,tikz={dashed}}]{2-2}{0} && \Block[borders={left,top,tikz={dashed}}]{2-2}{0} && \Block[borders={left,top,right,tikz={dashed}}]{2-2}{0}  && \Block[borders={top,bottom,tikz={dashed}}]{2-2}{0} && \Block[borders={left,bottom,tikz={dashed}}]{2-4}{\ldots} &&&& \Block[borders={left,bottom,tikz={dashed}}]{2-2}{0} && \Block[borders={left,bottom,tikz={dashed}}]{2-2}{0}\\
&&&&&&&&&&&&&&&& \\
& \Block[borders={right,bottom,top,left,tikz={line width=1pt}}]{4-4}{} \Block[borders={top,tikz={dashed}}]{2-2}{0} && \Block[borders={left,top,tikz={dashed}}]{2-2}{0} && \Block[borders={left,top,right,tikz={dashed}}]{2-2}{0}  && \Block[borders={top,bottom,tikz={dashed}}]{2-2}{0} && \Block[borders={left,bottom,tikz={dashed}}]{2-4}{\ldots} &&&& \Block[borders={left,bottom,tikz={dashed}}]{2-2}{0} && \Block[borders={left,bottom,tikz={dashed}}]{2-2}{0}\\
&&&&&&&&&&&&&&&& \\
&\Block[fill=teal!40,rounded-corners]{2-2}{} 0 & 0 & \Block[borders={left,top,tikz={dashed}}]{2-2}{0} && \Block[borders={left,top,tikz={dashed}}]{2-2}{0}  & & \Block[borders={left,top,right,tikz={dashed}}]{2-2}{0}  && \Block[borders={left,bottom,tikz={dashed}}]{2-4}{\ldots} &&&& \Block[borders={left,bottom,tikz={dashed}}]{2-2}{0} && \Block[borders={left,bottom,tikz={dashed}}]{2-2}{0}\\
&0&B^{(3,1)}&&&&&&&&&&&&\\ 
&\Block[borders={top,tikz={dashed}}]{2-2}{\vdots} && \Block[borders={left,top,tikz={dashed}}]{2-2}{\vdots} && \Block[borders={left,top,right,tikz={dashed}}]{2-2}{\vdots}  && \Block[borders={top,bottom,tikz={dashed}}]{2-2}{\vdots} && \Block[borders={left,bottom,tikz={dashed}}]{2-4}{\ddots} &&&& \Block[borders={bottom,left,tikz={dashed}}]{2-2}{\vdots} && \Block[borders={bottom,left,tikz={dashed}}]{2-2}{\vdots}\\
&&&&&&&&&&&&&&&& \\
&\Block[borders={right,bottom,top,left,tikz={line width=1pt}}]{4-4}{} \Block[borders={top,tikz={dashed}}]{2-2}{0} && \Block[borders={left,top,tikz={dashed}}]{2-2}{0} && \Block[borders={left,top,right,tikz={dashed}}]{2-2}{0} && \Block[borders={left,top,right,tikz={dashed}}]{2-2}{0} && \Block[borders={left,tikz={dashed}}]{2-4}{\ldots} &&&&  \Block[borders={left,right,tikz={dashed}}]{2-2}{0} && \Block[borders={left,top,tikz={dashed}}]{2-2}{0}\\
&&&&&&&&&&&&&&&& \\
&\Block[fill=teal!40,rounded-corners]{2-2}{}0 & 0 & \Block[borders={left,top,tikz={dashed}}]{2-2}{0} && \Block[borders={left,top,tikz={dashed}}]{2-2}{0}  &  & \Block[borders={left,top,right,tikz={dashed}}]{2-2}{0}  && \Block[borders={left,top,tikz={dashed}}]{2-4}{\ldots} &&&& \Block[borders={left,top,tikz={dashed}}]{2-2}{0} && \Block[borders={left,top,tikz={dashed}}]{2-2}{0}\\
&0&B^{(3,H_2)}&&&&&&&&&&&&\\ 
\end{pNiceMatrix}
}

\newcommand{\AThree}{
\begin{pNiceMatrix}[name=AThree]
\plam & \mlam & \mlam & \mlam & \mlam & \mlam & \mlam & \mlam & \mlam & \mlam \\
\plam & \plam & \mlam & \mlam & \mlam & \mlam & \mlam & \mlam & \mlam & \mlam \\
\plam & \plam & \plam & \mlam & \mlam & \mlam & \mlam & \mlam & \mlam & \mlam \\
\mlam & \plam & \plam & \plam & \mlam & \mlam & \mlam & \mlam & \mlam & \mlam \\
\mlam & \mlam & \plam & \plam & \plam & \mlam & \mlam & \mlam & \mlam & \mlam \\
\mlam & \mlam & \mlam & \plam & \plam & \plam & \mlam & \mlam & \mlam & \mlam \\
\mlam & \mlam & \mlam & \mlam & \plam & \plam & \plam & \mlam & \mlam & \mlam \\
\mlam & \mlam & \mlam & \mlam & \mlam & \plam & \plam & \plam & \mlam & \mlam \\
\mlam & \mlam & \mlam & \mlam & \mlam & \mlam & \plam & \plam & \plam & \mlam \\
\mlam & \mlam & \mlam & \mlam & \mlam & \mlam & \mlam & \plam & \plam & \plam \\
\end{pNiceMatrix}
}

\newcommand{\BOne}{%
\begin{pNiceMatrix}[name=BOne]
  \gzero &   \tealBlock{1} &  \gzero &   \gzero &  \tealBlock{1} &  \gzero &  \gzero &  \tealBlock{1} &  \gzero & \gzero\\
\gzero & \gzero & \tealBlock{1} & \gzero & \gzero & \tealBlock{1} & \gzero & \gzero & \tealBlock{1} & \gzero \\
\tealBlock{1} & \gzero & \gzero & \tealBlock{1} & \gzero & \gzero & \tealBlock{1} & \gzero & \gzero & \tealBlock{1} \\
\gzero & \tealBlock{1} & \gzero & \gzero & \tealBlock{1} & \gzero & \gzero & \tealBlock{1} & \gzero & \gzero\\
\gzero & \gzero & \tealBlock{1} & \gzero & \gzero & \tealBlock{1} & \gzero & \gzero & \tealBlock{1} & \gzero\\
\tealBlock{1} & \gzero & \gzero & \tealBlock{1} & \gzero & \gzero & \tealBlock{1} & \gzero & \gzero & \tealBlock{1} \\
\gzero & \tealBlock{1} & \gzero & \gzero & \tealBlock{1} & \gzero & \gzero & \tealBlock{1} & \gzero & \gzero\\
\gzero & \gzero & \tealBlock{1} & \gzero & \gzero & \tealBlock{1} & \gzero & \gzero & \tealBlock{1} & \gzero\\
\tealBlock{1} & \gzero & \gzero & \tealBlock{1} & \gzero & \gzero & \tealBlock{1} & \gzero & \gzero & \tealBlock{1} \\
\gzero & \tealBlock{1} & \gzero & \gzero & \tealBlock{1} & \gzero & \gzero & \tealBlock{1} & \gzero & \gzero\\
\end{pNiceMatrix}
}

\newcommand{\BOoone}{%
\begin{pNiceMatrix}[name=BOne]
 \gzero &  \tealBlock{1} &  \gzero &  \gzero &  \gzero &  \tealBlock{1} &  \gzero &  \gzero &  \gzero &  \tealBlock{1} &  \gzero &  \gzero \\
\gzero & \gzero & \tealBlock{1} & \gzero & \gzero & \gzero & \tealBlock{1} & \gzero & \gzero & \gzero & \tealBlock{1} & \gzero  \\
\gzero & \gzero & \gzero & \tealBlock{1} & \gzero & \gzero & \gzero & \tealBlock{1} & \gzero & \gzero & \gzero & \tealBlock{1} \\
 \tealBlock{1} & \gzero & \gzero & \gzero & \tealBlock{1} & \gzero & \gzero & \gzero & \tealBlock{1} & \gzero & \gzero & \gzero  \\
\gzero & \tealBlock{1} & \gzero & \gzero & \gzero & \tealBlock{1} & \gzero & \gzero & \gzero & \tealBlock{1} & \gzero & \gzero \\
\gzero & \gzero & \tealBlock{1} & \gzero & \gzero & \gzero & \tealBlock{1} & \gzero & \gzero & \gzero & \tealBlock{1} & \gzero \\
\gzero & \gzero & \gzero & \tealBlock{1} & \gzero & \gzero & \gzero & \tealBlock{1} & \gzero & \gzero & \gzero & \tealBlock{1} \\
 \tealBlock{1} & \gzero & \gzero & \gzero & \tealBlock{1} & \gzero & \gzero & \gzero & \tealBlock{1} & \gzero & \gzero & \gzero  \\
\gzero & \tealBlock{1} & \gzero & \gzero & \gzero & \tealBlock{1} & \gzero & \gzero & \gzero & \tealBlock{1} & \gzero & \gzero  \\
\gzero & \gzero & \tealBlock{1} & \gzero & \gzero & \gzero & \tealBlock{1} & \gzero & \gzero & \gzero & \tealBlock{1} & \gzero  \\
\gzero & \gzero & \gzero & \tealBlock{1} & \gzero & \gzero & \gzero & \tealBlock{1} & \gzero & \gzero & \gzero & \tealBlock{1}  \\
\tealBlock{1} & \gzero & \gzero & \gzero & \tealBlock{1} & \gzero & \gzero & \gzero & \tealBlock{1} & \gzero & \gzero & \gzero  \\
\end{pNiceMatrix}
}

\usetikzlibrary{arrows.meta, positioning, matrix, decorations.pathreplacing}

\title{\Large Selective induction Heads: How Transformers \\ Select Causal Structures in Context}
\author{\NoCaseChange{Francesco D'Angelo, Francesco Croce, Nicolas Flammarion}}
\address{Theory of Machine Learning Lab, EPFL, Lausanne, Switzerland}
\email{francesco.dangelo@epfl.ch}
\thanks{\textit{Published as a conference paper at ICLR 2025 \url{https://openreview.net/forum?id=bnJgzAQjWf}.}}
\date{}

\begin{document}

\begin{abstract}
Transformers have exhibited exceptional capabilities in sequence modeling tasks, leveraging self-attention and in-context learning. Critical to this success are induction heads, attention circuits that enable copying tokens based on their previous occurrences. In this work, we introduce a novel framework that showcases transformers' ability to dynamically handle causal structures. Existing works rely on Markov Chains to study the formation of induction heads, revealing how transformers capture causal dependencies and learn transition probabilities in-context. However, they rely on a fixed causal structure that fails to capture the complexity of natural languages, where the relationship between tokens dynamically changes with context.  To this end, our framework varies the causal structure through interleaved Markov chains with different lags while keeping the transition probabilities fixed. This setting unveils the formation of \textit{Selective Induction Heads}, a new circuit that endows transformers with the ability to select the correct causal structure in-context. We empirically demonstrate that transformers learn this mechanism to predict the next token by identifying the correct lag and copying the corresponding token from the past. We provide a detailed construction of a 3-layer transformer to implement the selective induction head, and a theoretical analysis proving that this mechanism asymptotically converges to the maximum likelihood solution. Our findings advance the understanding of how transformers select causal structures, providing new insights into their functioning and interpretability.
\end{abstract}

\maketitle

\section{Introduction}
\definecolor{mred}{HTML}{c1121f}
\definecolor{customblue}{rgb}{0.678, 0.710, 0.741}
\definecolor{fig1}{HTML}{eddea4}
\definecolor{fig2}{HTML}{eb6424}
\definecolor{fig3}{HTML}{bc4b51}
\newcommand{\TaskDiagram}{
    \begin{tikzpicture}[->, >=Stealth, node distance=1.0cm, 
        every node/.style={circle, draw, line width=0.3mm, inner sep=0mm, outer sep=0mm, text width=8mm, fill=lightgray!30, font=\sffamily, align=center}, 
        every edge/.style={draw, thick}, 
        scale=1, transform shape
        ]

    \node[draw, dashed, fill=customblue!70] (X1) at (1.5, 0) {$s_1$};
    \node[draw, dashed, fill=customblue!70] (X2) at (3, 0) {$s_2$};
    \node (X3) at (4.5, 0) {$s_3$};
    \node (X4) at (6, 0) {$s_4$};
    \node[draw=none, fill=none] (dots) at (7.5, 0) {$\ldots$}; %
    \node (X5) at (9, 0) {$s_{i-2}$};
    \node (X6) at (10.5, 0) {$s_{i-1}$};
    \node (X7) at (12, 0) {$s_i$};
    \node[draw, color=red, fill=white] (X8) at (13.5, 0) {$s_{i+1}$}; %

    \draw[thick, color=gray!40] (X1) edge[bend right=40] (X3);
    \draw[thick, color=gray!40] (X2) edge[bend left=40] (X4);
    \draw[thick, color=gray!40] (X3) edge[bend right=40] (dots);
    \draw[thick, color=gray!40] (dots) edge[bend left=40] (X6);
    \draw[thick, color=gray!40] (X5) edge[bend right=40] (X7);

    \draw[thick, color={rgb,255:red,10; green,147; blue,150}] (X7) edge[bend right=40] 
        node[shape=rectangle, draw=none, fill=none, pos=.5, below, sloped, xshift=0mm, yshift=-1mm, text width=2cm] 
        {\small \textcolor{rgb,255:red,10; green,147; blue,150}{$k=1?$}} (X8);

    \draw[thick, color={rgb,255:red,238; green,155; blue,0}] (X6) edge[bend left=40] 
        node[shape=rectangle, draw=none, fill=none, pos=.5, above, sloped, xshift=0mm, yshift=1mm, , text width=2cm] 
        {\small \textcolor{rgb,255:red,238; green,155; blue,0}{$k=2?$}} (X8);

    \draw[-, thick, decorate, decoration={brace, amplitude=10pt, mirror}] 
        (1.2, -.8) -- (12.1, -.8) 
        node[shape=rectangle, draw=none, fill=none, midway, below=4mm, text width=8cm] 
        {\small \textrm{\textbf{Input sequence} with true lag \textcolor{rgb,255:red,238; green,155; blue,0}{$k=2$}}};

    \end{tikzpicture}
}

\newcommand{\InterleavedMarkovChains}{
    \begin{tikzpicture}[->, >=Stealth, node distance=2cm, 
        every node/.style={circle, draw, line width=0.3mm, minimum size=8mm, fill=lightgray!30, font=\sffamily}, 
        every edge/.style={draw, thick}, 
        scale=1, transform shape]

    \definecolor{customblue}{rgb}{0.678, 0.710,0.741}

    \node[draw, dashed, fill=customblue!70] (X1) at (2, 0) {$X_1$};
    \node[draw, dashed, fill=customblue!70] (X2) at (4, 0) {$X_2$};
    \node (X3) at (6, 0) {$X_3$};
    \node (X4) at (8, 0) {$X_4$};
    \node (X5) at (10, 0) {$X_5$};

    \draw[thick, color={rgb,255:red,10; green,147; blue,150}] (X2) edge[bend left] (X3);
    \draw[thick, color={rgb,255:red,10; green,147; blue,150}] (X3) edge[bend left] (X4);
    \draw[thick, color={rgb,255:red,10; green,147; blue,150}] (X4) edge[bend left] (X5);

    \draw[thick, color={rgb,255:red,238; green,155; blue,0}] (X1) edge[bend right] (X3);
    \draw[thick, color={rgb,255:red,238; green,155; blue,0}] (X2) edge[bend right] (X4);
    \draw[thick, color={rgb,255:red,238; green,155; blue,0}] (X3) edge[bend right] (X5);

    \begin{scope}[]  %
        \node[draw=fig1!50, rectangle, anchor=west, line width=0.0mm, fill=fig1!30] (legend) at (11, 0) {\small \begin{tabular}{l}
            \textcolor[rgb]{0.04,0.576,0.588}{\rule{1.5mm}{1.5mm}} \ \textrm{lag} $k=1$ \\
            \textcolor[rgb]{0.933,0.608,0.0}{\rule{1.5mm}{1.5mm}} \ \textrm{lag} $k=2$ \\
        \end{tabular}};
    \end{scope}

    \end{tikzpicture}
}

\newcommand{\CreateFigureOne}{
\begin{figure}[t]
    \centering
    \begin{adjustbox}{angle=0,origin=c,scale=0.8}
    \begin{tikzpicture} 
        \node[draw=fig1!80, thick, fill=fig1!20, rounded corners] (topbox) {
            \begin{minipage}[t]{2.8cm}
                \vspace{2mm}
                \textbf{Data:} interleaved Markov Chains with lag 1 and 2
            \end{minipage}
            \begin{minipage}[t]{13cm}
                \centering
                \vspace{3mm}
                \InterleavedMarkovChains
            \end{minipage}
        };
    \end{tikzpicture}
    \end{adjustbox}
    
    \vspace{0.2cm}
    \begin{adjustbox}{angle=0,origin=c,scale=0.8}
    \begin{tikzpicture}
        \node[draw=orange!50, thick, fill=orange!10, rounded corners] (middlebox) { %
            \begin{minipage}[t]{2.8cm}
                \vspace{4mm}
                \textbf{Task:} predict the next token by selecting in-context the correct causal structure
            \end{minipage}
            \begin{minipage}[t]{13cm}
                \centering
                \vspace{2mm}
                \TaskDiagram
            \end{minipage}
        };
    \end{tikzpicture}
    \end{adjustbox}
    
    \vspace{0.2cm}
    \begin{adjustbox}{angle=0,origin=c,scale=0.8}
    \begin{tikzpicture}
        \node[draw=purple!50, thick, fill=purple!10,  rounded corners, inner sep=5pt, minimum width=14cm] (bottombox) { %
            \begin{minipage}{15.8cm}
                \centering
                \textbf{\Large {How do 3-layer attention-only transformers solve this task?}} \\[2ex]
                \begin{multicols}{3}
                    \includegraphics[width=0.7\linewidth]{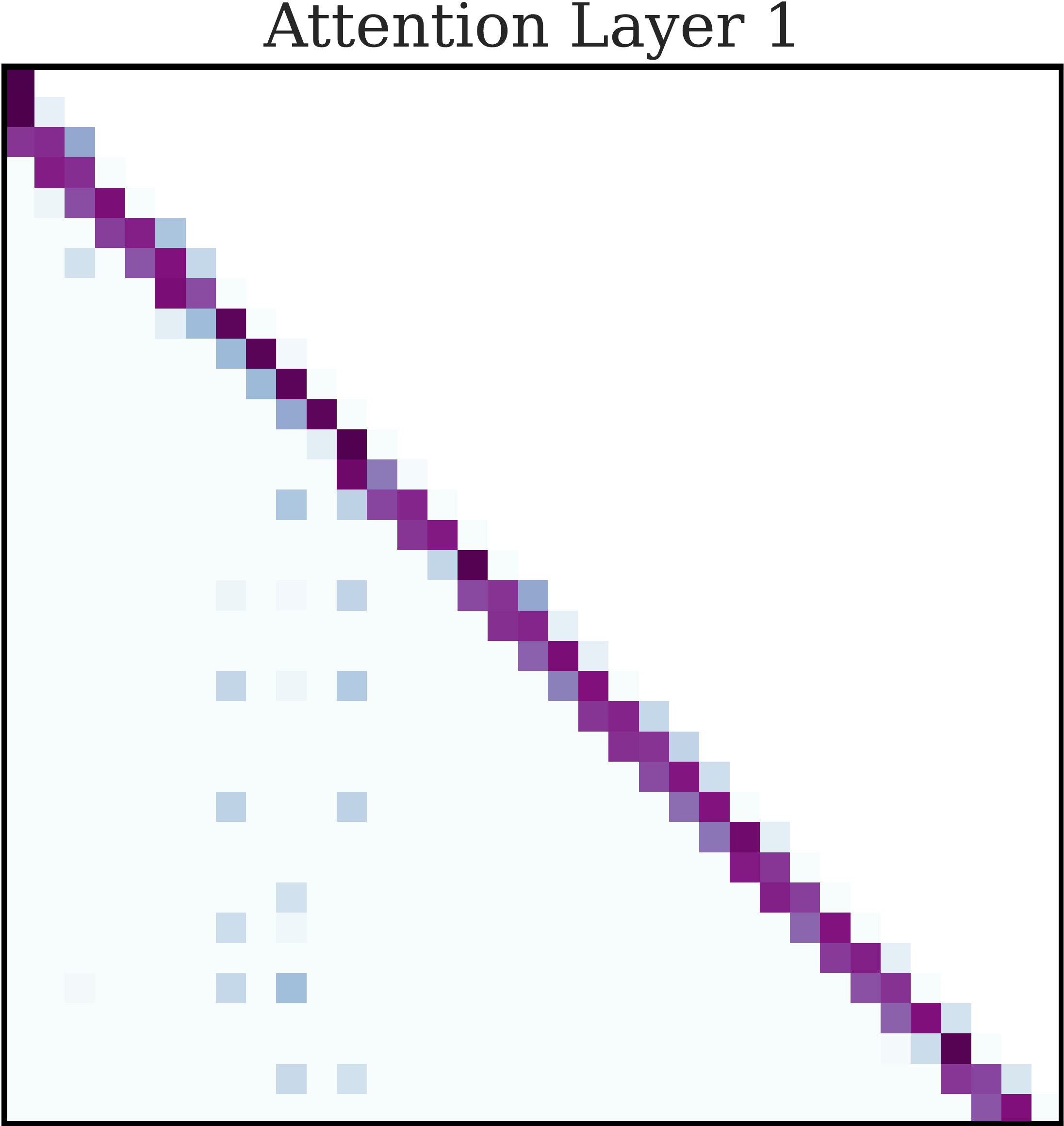} \\[1ex]
                    \centering L1: extracts single transition probabilities for each lag \\

                    \includegraphics[width=0.7\linewidth]{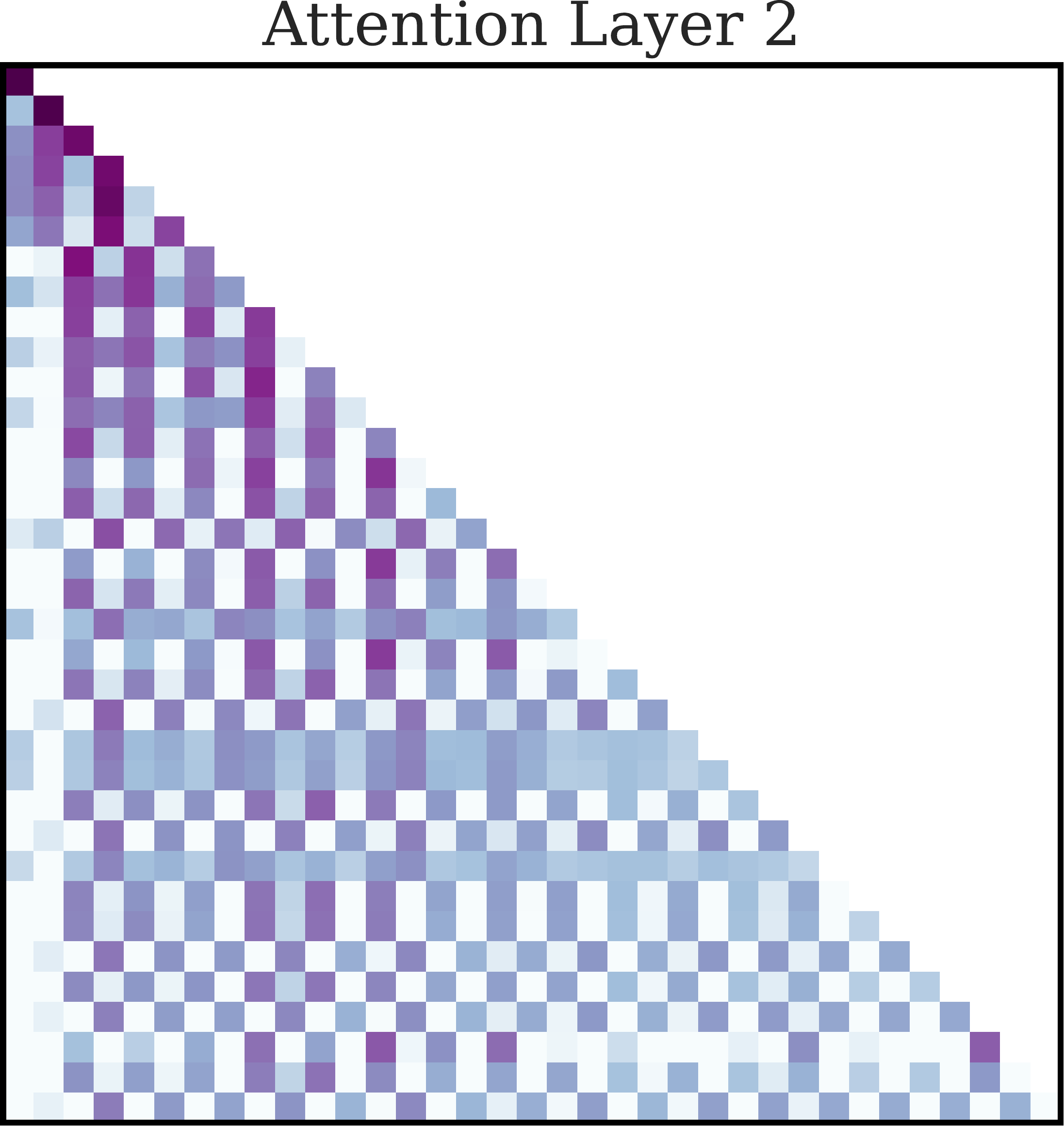} \\[1ex]
                    \centering L2: aggregates transition probabilities from the past \\

                    \includegraphics[width=0.7\linewidth]{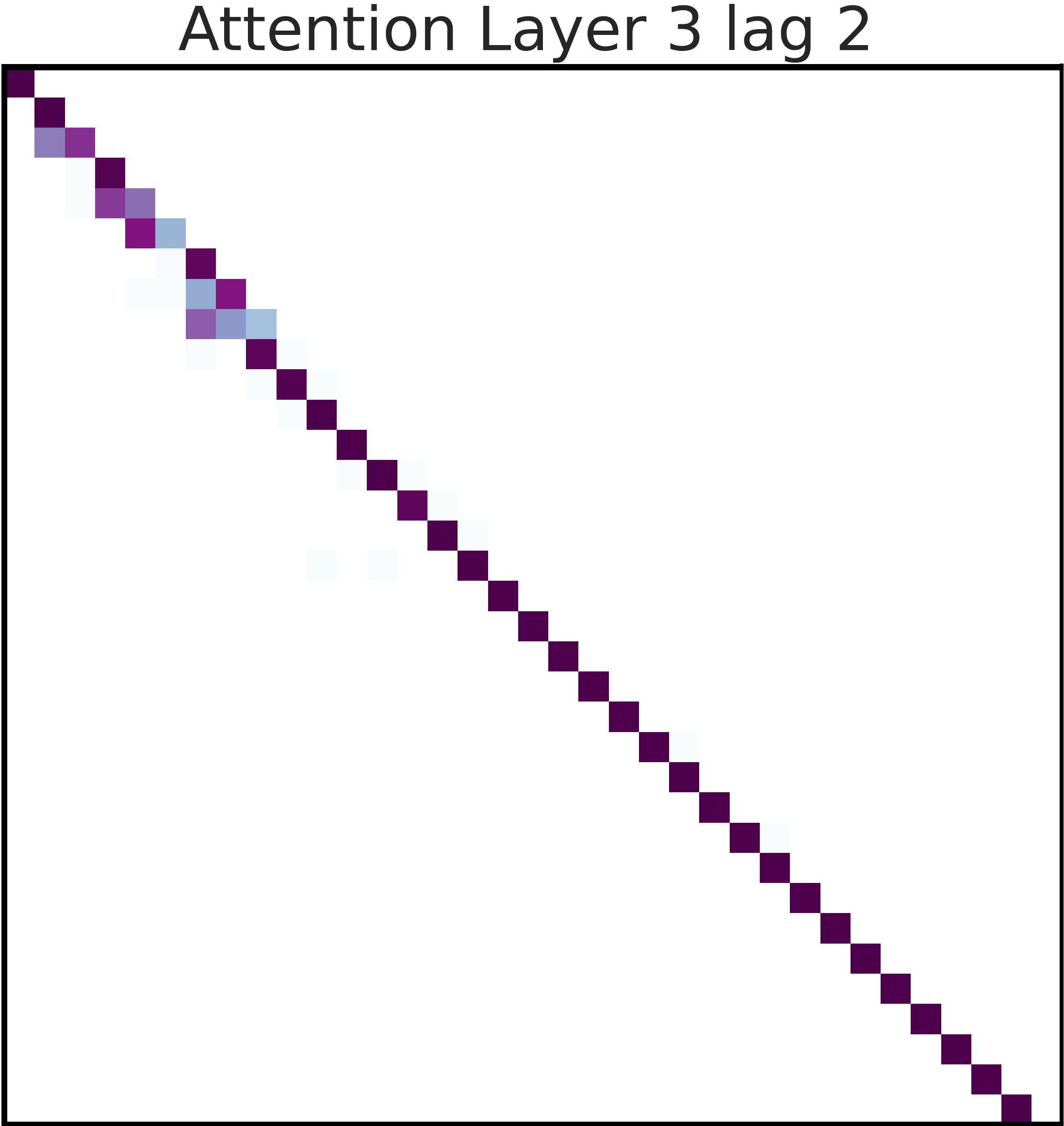} \\[1ex]
                    \centering L3: \textbf{selective induction head} select the most likely lag
                \end{multicols}
            \end{minipage}
        };
    \end{tikzpicture}
    \end{adjustbox}
\caption{\textbf{Summary of the framework.} \textbf{Top:} We define a new task based on Interleaved Markov Chains of different lags ($k=1$ and $k=2$ in the example).
\textbf{Middle:} given a sequence generated from a chain of unknown lag, the model has to identify the true lag, and use it to predict the distribution of the next token.
\textbf{Bottom:} attention-only transformers can solve this task with 3 layers. The first computes the transition probabilities for each lag seen during training, the second aggregates these probabilities over the entire past, and finally the third layer implements the selective induction head, which selects the correct lag. 
}
\label{fig:teaser}
\end{figure}
}

As autoregressive generative models continue to scale and are increasingly deployed in real-world applications,
the question of how Transformer models  \citep{vaswani2017attention} function internally becomes pressing. 
Yet the inherent complexity of %
natural language %
hinders the ability to fully comprehend how these models make decisions %
and work internally.
To address this challenge, many recent works have attempted to formulate synthetic frameworks that simplify the problem %
and enable theoretical analysis while still capturing the remarkable properties and phenomena observed in large language models, such as in-context learning \citep{brown2020language,garg2022can,bai2024transformers,von2023transformers,sander2024how, chan2022data}.
Mechanistic interpretability \citep{olsson2022context} emerges %
as a line of research focused on reverse-engineering the complex computations performed inside a transformer in order to understand how a certain output is produced for a given input.
This research has uncovered the formation of induction heads \citep{olsson2022context} i.e., interpretable circuits embedded within the transformer's weights, capable of simple operations such as copying tokens. By examining such circuits and their combinations, one can understand the algorithms that transformers implement to solve a given task. For instance, \citep{nichani2024how, bietti2024birth, edelman2024evolution} demonstrated that induction heads enable transformers to implement in-context bigrams for next-token predictions in Markov Chains. Such mechanisms are not limited to simplified models: \citet{nguyen2024understanding} showed that transformers may rely on N-gram rules even in natural language processing. 
Yet the process by which transformers select between such learned rules remains poorly understood.
While in-context learning studies how transformers solve tasks from demonstrations in the prompt, \textit{in-context selection} focuses on how transformers select the most suitable approach to solve a given task from those encountered during training, using instances present in the context. For example, \citet{bai2024transformers} examines how transformers perform in-context algorithm selection when pre-trained on mixtures of linear and logistic regression  with different noise. Similarly, \citet{yadlowsky2023pretraining} studied the ability of transformers to perform model selection between different function class families. Our work takes another step toward understanding this selection process, leveraging Markov chains with different causal structures.  
\newpage

\textbf{In-context causal structure selection.}
\CreateFigureOne
Recently, Markov chains have been employed to formulate interesting sequence-to-sequence tasks that can be solved by transformers %
with interpretable solutions \citep{ildiz2024from, nichani2024how,makkuva2024attention,edelman2024evolution}.
In particular, \citet{nichani2024how} show that transformers trained on Markov Chain sequences learn circuits that capture the causal structure, i.e., the set of parent tokens for each token in the sequence and estimate transition probabilities in-context.
The existing works relying on Markov chains fail to model the nuanced relationships typical of natural language. The same word pair can have different causal relationships depending on the surrounding context. While an effective model should recognize these contextual dependencies, previous research has overlooked this consideration by adopting fixed causal structures.
To address this limitation, we propose \textit{a new synthetic task} designed to mimic different causal dependencies (Sec.~\ref{sec:task}). We consider \textit{Interleaved Markov Chains}, with fixed transition probabilities between states but different underlying causal structures (Fig.~\ref{fig:teaser}),  and theoretically study how 3-layer attention-only transformers learn to correctly predict the next token in a sequence. 

\textbf{Selective induction heads.}
To solve the task at hand---correctly predicting the next token in-context in a sequence generated within this setup (middle of Fig.~\ref{fig:teaser})---transformers need to learn a circuit that adapts to the given context to select the correct causal structure among those seen during training. We call this circuit a \textit{selective induction head}, as it differs fundamentally from the induction heads introduced so far in the literature, where the circuit learns either to copy a token from a certain position fixed by the unique structure of the data or by comparing its semantics.
In our task, the transformer (with attention maps depicted in Fig.~\ref{fig:teaser}) needs to learn to aggregate all past information to determine from which past position the corresponding token should be copied in order to predict the next token.

\textbf{A transformer construction for in-context selection.}
To understand and formalize the selective heads, we provide an \textit{interpretable} construction of the self-attention layer weights in a 3-layer attention-only disentangled transformer \citep{friedman2023learning} that implements this mechanism (Sec.~\ref{sec:theoretical_transformer}).
We empirically demonstrate that the constructed transformer matches the performance of both disentangled and standard transformers trained from scratch (Sec.~\ref{sec:experiments}) and that 2-layer attention-only transformers cannot solve the task.
Moreover, we observe that the attention maps of the trained and constructed transformers present the same patterns, further supporting the validity of our algorithm. Finally, we theoretically analyze the predictor implemented by this construction (Sec.~\ref{sec:statistical_analysis}) showing that, in certain cases, it asymptotically converges to the maximum likelihood solution. Our findings provide valuable insights into the mechanisms by which transformers perform model selection. 

Additional theoretical analyses, omitted proofs (App.~\ref{sec:extended_statistical_analysis}), extra experiments (Apps.~\ref{app:additional_plots}, \ref{app:emp_val_claim}, \ref{app:scaling_heads_layers}), and generalizations of our transformer construction (Apps.~\ref{App:alternative_third}, \ref{App:costr_any_order}, \ref{App:two_orders_constr}) are deferred to the appendix.

\section{Related Work}
\label{sec:extended_related_work}
Following the initial empirical observations of the emergent in-context learning capabilities of transformers~\citep{brown2020language}, several works have attempted to understand this phenomenon. \citet{xie2021explanation} sought to formulate in-context learning as Bayesian inference, while \citet{garg2022can} studied the ability of transformers to learn simple functions, such as linear models or multilayer perceptrons, in context. 
A subsequent line of work~\citep{akyurek2022learning, bai2024transformers, von2023transformers,von2023uncovering,raventos2024pretraining} shows that transformer layers might implement gradient descent to solve in-context linear regression. 
\citet{ahn2023transformers} extends this idea to higher-order algorithms. Importantly,  \citet{olsson2022context} postulates that in-context learning is tied to the emergence of induction heads. 
\citet{bietti2024birth}, subsequently extended this idea, showing the development of induction heads to learn bigrams in-context and showcasing a connection with associative memories. More closely related to our work is the literature analyzing transformers through the lens of Markov chains. In particular, \citet{nichani2024how} shows how transformers trained on sequences generated by Markov chains on a graph learn simple circuits to capture the underlying causal structure and implement the Bayes-optimal solution by estimating transition probabilities in context. 
Similarly, \citet{edelman2024evolution} illustrate the formation of statistical induction heads that accurately compute posterior probabilities based on bigram statistics. 
\cite{makkuva2024attention} used Markov chains to study the loss landscape of transformers, while \cite{rajaraman2024transformers} show that a constant depth is sufficient to learn k-th order Markov chains.  
 \citet{svete2024transformers} demonstrate that transformers with hard or sparse attention can exactly represent any n-gram model. \citet{hu2024limitation} highlights the limitations of transformers in learning HMMs compared to RNNs.
\citet{nguyen2024understanding} recently studied how rules formed out of simple N-gram statistics can approximate transformer predictions; however, the mechanism through which such rules are selected remains unexplained. On the problem of in-context selection, \citet{bai2023transformers} demonstrates that a single transformer can adaptively select between different base algorithms—or even qualitatively different tasks like regression and classification—based on the context provided. Similarly, \citet{yadlowsky2023pretraining} studied the ability of transformers to
perform model selection between different function class families.

\section{(Disentangled) Transformer Models}
\label{sec:models}

In the following we introduce the necessary background and notation about the models we use later. %

\textbf{Transformers.}
The architecture of decoder-only transformers is built on two fundamental components,  the attention mechanism and the multi-layer perceptron (MLP).
Given a finite alphabet $\mathcal{S}$, transformers map an input sequence $s = s_{1:T} =(s_1,\dots,s_T) \in \mathcal{S}^T$ to a sequence of vectors $z=(z_1,\dots,z_T)$ where $z_i \in \mathbb{R}^{d}$.
Each element of the input sequence $s_i$ is first encoded using its corresponding one-hot vector, $e_{s_i} \in \{0, 1\}^{|\mathcal{S}|}$. These one-hot representations are then mapped to $d$-dimensional vectors via an embedding matrix $E \in \mathbb{R}^{d \times |\mathcal{S}|}$.  To incorporate positional information, a positional embedding matrix $F \in \mathbb{R}^{d \times T}$ is added. %
With a slight abuse of notation, let $e_i$ denote the $i$-th element of the canonical basis of $\mathbb{R}^T$ such that each input element $s_i$ is mapped to a vector $x_i \in \mathbb{R}^d$ via $x_i = E e_{s_i} + F e_i$.
The information of the different tokens is then mixed by the causal self-attention heads: denoting the key, query and value matrices $K, Q \in \mathbb{R}^{d \times d_{QK}}$, $V \in \mathbb{R}^{d\times d}$, and given an input $h \in \mathbb{R}^{d\times T}$, one gets 
\begin{align*}
     &\text{Attn}(h; Q, K) \coloneq \mathcal{A}(h; Q, K) h^\top, 
     \quad \textrm{with} \quad
    \mathcal{A}(h; Q, K) \coloneq \textrm{Softmax}\left(\mathcal{M}(h^\top Q K^\top h);\alpha\right),
\end{align*}
where $\textrm{Softmax}(v;\alpha)_i \coloneq \frac{\exp{(v_i/\alpha)}}{\sum_{j} \exp{(v_j/\alpha)}}$ is applied row-wise and $\alpha > 0 $ is a temperature parameter.
In the following, we call $A = QK^\top \in \mathbb{R}^{d\times d}$ the \textit{attention matrix}, $\mathcal{A} \in \mathbb{R}^{T \times T}$ the \textit{attention}, and $\textrm{Attn}:\mathbb{R}^{d\times T}\rightarrow \mathbb{R}^{d\times T}$ the \textit{attention layer}.
The causality of the self-attention is enforced by a mask $\mathcal{M}$, to prevent the model from attending to future tokens i.e., 
$\mathcal{M}(A)_{ij} = A_{ij}$ if $i \geq j$, $-\infty$ otherwise.
For a model with $L$ layers and $\{H_l\}_{l \in [L]}$ attentions heads per layer, we  denote by $Q^{(l,h)}, K^{(l,h)}, V^{(l,h)}$  the attention parameters for the $i$-th head in the $l$-th layer, $W_1^{(l)}, W_2^{(l)} \in \mathbb{R}^{d \times d_\textrm{FF}}$ the parameters of the MLP at layer $l$, and $W_O \in \mathbb{R}^{|\mathcal{S}| \times d}$ the parameters of the output linear layer.
Then, with $h^{(0)} = (x_1, \ldots, x_T) \in \mathbb{R}^{d \times T}$ as computed above, the decoder transformer $\mathcal{T}(s_{1:T})$ can be written for $l = 1,\dots,L$, as
\begin{equation*}
    \tilde{h}^{(l)} = h^{(l-1)} + \sum_{h=1}^{H_{l}}  \text{Attn}(h^{(l-1)}; Q^{(l,h)},K^{(l,h)})V^{(l,h)}, \quad
    h^{(l)} = \tilde{h}^{(l)} + W_2^{(l)} \sigma\left(W_1^{(l)\top}\tilde{h}^{(l)} \right)
\end{equation*}
where the output is given by $W_O h^{(L)} \in \mathbb{R}^{|\mathcal{S}|\times T}$.

\textbf{Disentangled Transformers.}
To improve the interpretability of the operations implemented by the models, \citet{friedman2023learning} propose a transformer architecture in which each layer's output is concatenated, rather than added, to its input.
This construction makes the residual stream explicitly disentangled, but increases the embedding dimension (constant for standard transformers) with depth.
Additionally, in such \textit{disentangled transformers} the MLP layers are removed, the attention heads are parameterized by a single matrix $\tilde{A} \coloneq QK^\top \in \mathbb{R}^{d_\ell \times d_\ell}$, and the value matrices are absorbed into the output layer $\widetilde{W}_O$. %
Both the token and positional embedding are one-hot encoding, i.e., $E$ and $F$ are identity matrices, and we encode the input $s_i$ as $[e_{s_i}, e_i]$ via concatenation rather than addition.
Altogether, the disentangled transformer $\widetilde{\mathcal{T}}(s_{1:T})$ is formalized for $ l = 1,\dots,L$ as 
\begin{align*}
    &\hat{h}^{(l,h)} = \text{Attn}(h^{(l-1)}; \tilde{A}^{(l,h)}) \quad \text{for} \quad h = 1,\dots, H_l, \quad \text{and} \quad 
     {h}^{(l)} = [h^{(l-1)}, \hat{h}^{(l, 1)}, \ldots, \hat{h}^{(l, H_l)}], %
\end{align*}
where the output is $\widetilde{W}_O h^{(L)}$. Due to the concatenation, the embedding dimension grows over layers as $d_l = (1 + H_l) \cdot d_{l-1}$ with $d_0 = |\mathcal{S}| + T$.
Importantly, \cite{nichani2024how} demonstrate that disentangled transformers are equivalent to standard transformers using only attention layers.

\section{Markov Chains and Causal Structure Selection}
\label{sec:task}

To address the limitations of existing synthetic settings based on Markov chains and better capture the complexity of natural language, we propose a novel framework. In this framework, the model must learn to select the correct causal structure in-context in order to solve the task and generate the input sequence. In the following, we describe this task in detail and outline its solution.

\textbf{Interleaved Markov Chains.}
The framework consists of sequences of length $T$ on a finite alphabet of tokens $\mathcal{S}$, generated by $K$ distinct sources. Let $\mathcal{U} = \{U_1, \dots, U_K\}$ be the set of sources and $\mathcal{K} = \{k_1,\dots, k_K \}$ a set of positive integers;  each source $U_j$ consists of $k_j$ interleaved and identical irreducible aperiodic Markov chains~\citep{batu2004inferring, minot2014separation}. All the sources are defined by the same transition matrix  $P^\star \in \mathcal{P}^{|\mathcal{S}| \times |\mathcal{S}|}$, where $\mathcal{P}$ is the set of row-stochastic matrices.
This model is equivalent to a time-homogeneous Markov chain $(X^{(j)}_t)_{t \geq 0}$ of order $k_j$ , whose transition probabilities depend only on a single state $k_j$ steps back:
\begin{equation*}
    \mathbb{P}(X_{t} = s_{t} \mid X_{t-1} = s_{t-1}, \dots, X_{1} = s_{1}) = \mathbb{P}(X_{t} = s_{t} \mid X_{t-k_j} = s_{t-k_j}) =  s_{t-k_j}^\top P^\star  s_{t}  \, .
\end{equation*}
Here, we call $k_j \in \mathcal{K}$ the \textit{lag} parameter, as defined by \citet{berchtold2002mixture}, where $\mathcal{K} \subseteq \llbracket 1,T \rrbracket$ is the set of possible lags. The lag, represented by the edges in Fig.~\ref{fig:teaser}, encodes the causal structure by explicitly representing the causal relationship between the variables in the Markov chain. 

\textbf{Data.}
Given $P^\star$ and $\mathcal{K}$,  a lag is uniformly sampled from $\mathcal{K}$ for each sequence. Denoting the maximum lag by $\hat{k} = \max(\mathcal{K})$, the first $\hat{k}$ elements of each sequence are sampled from the stationary distribution $\pi$ of $P^\star$, ensuring a constant number of independent variables for all sources.
The likelihood of a sequence of lag $k$ is
$\mathbb{P}(X_1, \dots, X_T \mid k) = \prod_{i=1}^{\hat{k}} \pi(X_i) \prod_{j = \hat{k}+1}^T \mathbb{P}(X_j \mid X_{j-k}).$

\textbf{Task.}
In this setting, the task is to predict the next state $s_{T+1}$ given an input sequence $s_{1:T}$ generated from one of the sources, sampled at random.
However, the identity of the source, and therefore the lag, is unknown.
This task amounts to solving  the following minimization problem:
\begin{equation}
f^\star = \inf_f \, \mathbb{E}_{\substack{k \sim \text{Unif}[1, \dots, \hat{k}] \\ (X_{1:T}) \sim \mathbb{P}(X_1, \dots, X_T \mid k)}} \mathcal{D}_{KL} \left( \mathbb{P}(X_{T+1} \mid X_{T-k+1}) \, \middle\lVert \, f(X_1, \dots, X_T) \right) 
 \, ,
\label{eqn:task_min_problem}
\end{equation}
where $\mathcal{D}_{KL}$ is the Kullback–Leibler divergence. 
\eqref{eqn:task_min_problem} admits a closed form solution which is the Bayesian model average (BMA), defined as the average of the transition probabilities for each lag, weighted by their posterior probabilities:
\begin{equation*}
    \mathbb{P}(X_{T+1} \mid X_{1:T}) = \sum_{k \in \mathcal{K}} w_k(X_{1:T}) \mathbb{P}(X_{T+1} \mid X_{T-k+1}) \; \text{with} \; w_k(X_{1:T}) = \frac{\mathbb{P}(X_{1:T} \mid k ) \mathbb{P}(k)}{ \sum_{j \in \mathcal{K}} \mathbb{P}(X_{1:T} \mid j) \mathbb{P}(j)} \, .
\end{equation*}
Asymptotically, the posterior distribution concentrates around the maximum likelihood estimate (MLE) \citep{rousseau2011asymptotic}. Let $k^*$ be the lag that maximizes the likelihood for a sequence $(s_1, \dots, s_T)$, i.e., 
$k^* = \argmax_{k \in \mathcal{K}}\, \mathbb{P}(X_1 = s_1, \dots, X_T = s_T \mid k)$.
As $T \to \infty$, the posterior probability $w_k$ converges to 1 for $k^*$ and to 0 for the other lags, i.e., $w_k \to \mathds{1}[k = k^*]$ where $\mathds{1}$ is the indicator function. Then, BMA reduces to selecting the lag with the highest likelihood:
\begin{equation}
    \mathbb{Q}(X_{T+1} \mid X_1, \dots, X_T) = \sum_{k \in \mathcal{K}} \mathds{1}[k = k^*] \mathbb{P}(X_{T+1} \mid X_{T-k+1}) \, .
        \label{eqn:MLE}
\end{equation}
It is important to note that an interleaved Markov chain of lag $k$ is mathematically equivalent to a $k$-th order Markov chain with a specific transition structure. Thus, given a set of orders $\mathcal{K}$ and a sequence generated according to one such order, one could theoretically solve the task by learning in-context the corresponding $(\hat{k}+1)$-gram transition probabilities \citep{nichani2024how, edelman2024evolution}. However, such an approach fails to leverage the low-dimensional structure of the problem, resulting in a suboptimal sample complexity of $\mathcal{O}(|\mathcal{S}|^{\hat{k}+1})$. 

\section{%
How Can Transformer Do In-Context Selection?}
\label{sec:theoretical_transformer}

We now want to understand which algorithm transformers learn during training.
We focus on disentangled transformers as defined in Sec.~\ref{sec:models}, which allow for a more interpretable analysis of the model internal computations.
The following proposition, which is the main result of the paper, shows how a disentangled transformer can implement a predictor to solve the in-context selection task.

\begin{prop}
\label{prop:transformer_construction}
Let $\mathcal{K}$ be a contiguous subset of integers, i.e., $\mathcal{K} = \llbracket \hat{k} -K+1, \hat{k} \rrbracket$ for $K = |\mathcal{K}|$ and $\hat k = \max(\mathcal{K})$.
For any $T \geq \hat k$ 
there exists a three-layer disentangled transformer $\widetilde{\mathcal{T}}$ with $K$ heads in the second layer such that, defining the normalized transition probabilities $\tilde{p}_{i,k} \coloneq  \frac{X_{i-k}^\top P^\star X_{i}}{\sum_{l\in \mathcal{K}, l<i} X_{i-l}^\top P^\star X_{i}}$ for $i>1$:
\begin{equation}
\begin{split}
\widetilde{\mathcal{T}}(X_{1:T})_T = &\sum_{k \in \mathcal{K}} \tilde{w}_k(X_{1:T}) \mathbb{P}(X_{T+1} \mid X_{T-k+1})
\quad \text{with} \quad
\tilde{w}_k(X_{1:T}) = \frac{\exp \left(  \frac{\beta}{(T-\hat{k})}\sum_{i=\hat{k}+1}^T \tilde{p}_{i,k} \right)}{\sum_{m \in \mathcal{K}} \exp \left( \frac{\beta}{(T-\hat{k})}\sum_{i=\hat{k}+1}^T \tilde{p}_{i,m}\right)}\,.
\end{split}
\label{eqn:transf_solution}
\end{equation}
\end{prop}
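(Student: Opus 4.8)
The plan is to prove the proposition by \emph{explicit construction}: I specify the attention matrix of each of the three layers and then track the content of the disentangled (hence concatenated) residual stream position by position, verifying at the end that the readout at column $T$ equals the right-hand side of \eqref{eqn:transf_solution}. The whole argument rests on the one-hot encoding $x_i=[e_{s_i},e_i]$, which makes every bilinear score $x_i^\top \tilde A x_j$ split into a \emph{token} term $(\tilde A_{\mathrm{tok}})_{s_i,s_j}$ and a \emph{position} term $(\tilde A_{\mathrm{pos}})_{ij}$ that can be designed independently. \textbf{Layer 1 (transition probabilities).} I take $\tilde A^{(1)}$ block-diagonal with token block $\log P^{\star\top}$ and a position block that adds $+\lambda$ exactly on the $K$ sub-diagonals $\{i-k:k\in\mathcal K\}$ and $-\lambda$ elsewhere. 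The score of query $i$ on key $j=i-k$ is then $\log P^\star_{s_{i-k},s_i}+\lambda$, so as $\lambda\to\infty$ the causal softmax concentrates on the $K$ positions $\{i-k\}_{k\in\mathcal K}$ and reweights within them by $P^\star_{s_{i-k},s_i}$; the attention weight on $j=i-k$ is therefore exactly $\tilde p_{i,k}$. Since the value stream still carries the positional one-hots $e_j$, this head writes $\tilde p_{i,k}$ into coordinate $i-k$ of the newly appended block at column $i$, with the first $\hat k$ tokens handled automatically by the causal mask.

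\textbf{Layer 2 ($K$ heads, aggregation).} The reason $K$ heads are needed is a bookkeeping obstacle: $\tilde p_{i,k}$ lives at coordinate $i-k$, so two positions that must both feed lag $k$ occupy \emph{different} coordinates, whereas positions that are congruent modulo $K$ and lie $K$ apart occupy pairwise \emph{disjoint} length-$K$ coordinate windows. I therefore let head $m$ attend (again uniformly, via a $\lambda\to\infty$ band) to the positions of a single residue class modulo $K$ inside $\llbracket \hat k+1,T\rrbracket$. Within each head the gathered values then land in disjoint coordinates, so nothing is overwritten, and the concatenation across the $K$ heads keeps the classes separate. After this layer, column $T$ stores every $\tilde p_{i,k}$ with $i\in\llbracket \hat k+1,T\rrbracket$, each scaled by the reciprocal $1/c_m$ of its (known) class size.

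\textbf{Layer 3 (selection) and readout.} I choose the position block of $\tilde A^{(3)}$ to add a large constant only on the $K$ keys $\{T{+}1{-}k\}_{k\in\mathcal K}$, thereby restricting attention at the last position to exactly these columns, and I set its remaining entries so that the query at $T$ contracts the stored values into the logit $\tfrac{\beta}{T-\hat k}\sum_{i=\hat k+1}^{T}\tilde p_{i,k}$ for the key $T{+}1{-}k$; this is where the class sizes $c_m$ from Layer 2 are cancelled and the factors $\beta/(T-\hat k)$ and the softmax temperature are absorbed. The softmax over these $K$ keys then returns precisely the weights $\tilde w_k$ of \eqref{eqn:transf_solution}, and since the value stream still carries the token one-hots, the attention output at $T$ is $\sum_{k\in\mathcal K}\tilde w_k\,e_{s_{T+1-k}}$. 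Finally I let $\widetilde W_O$ act as $P^{\star\top}$ on the token block, turning each $e_{s_{T+1-k}}$ into $\mathbb P(X_{T+1}\mid X_{T-k+1})=X_{T+1-k}^\top P^\star$, which yields the claimed mixture.

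\textbf{Main obstacle.} The two $\lambda\to\infty$ softmax limits in Layers 1 and 3 are routine; the genuinely delicate step is Layer 2. One must verify, for a general contiguous $\mathcal K$ rather than the displayed $K=\hat k=3$ example, three facts: that the residue-class construction makes the per-head coordinate footprints disjoint (no intra-head collisions), that exactly the positions $\llbracket \hat k+1,T\rrbracket$ are aggregated with the correct boundary treatment of the first $\hat k$ tokens, and that the per-class scalings $1/c_m$ can be exactly undone by the entries of $\tilde A^{(3)}$ so that the final logit is the clean average $\tfrac{\beta}{T-\hat k}\sum_{i}\tilde p_{i,k}$. Establishing these collision-free and normalization bookkeeping claims is the heart of the argument; the remainder is direct verification.
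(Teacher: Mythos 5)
Your construction follows the paper's proof essentially step for step: the same block-structured first layer (token block $\log P^{\star\top}$, banded positional block) extracting the weights $\tilde p_{i,k}$ in the $\lambda\to\infty$ limit, the same $K$ residue-class-mod-$K$ heads in the second layer to keep the stored probabilities in disjoint coordinates, and the same third layer combining a positional band on the keys $\{T+1-k\}_{k\in\mathcal K}$ with a contraction block that turns the stored values into the logits $\tfrac{\beta}{T-\hat k}\sum_i \tilde p_{i,k}$, followed by a $P^{\star\top}$ readout. The one place you diverge is precisely the normalization bookkeeping you flag as the main obstacle: the paper does not exactly cancel the per-head class sizes but keeps $\sum_{h=1}^{K}\tfrac{\beta}{\tau_{h,T}+1}\sum_{n=0}^{\tau_{h,T}}\tilde p_{\hat k+h+nK,k}$ and invokes the approximation $\tau_{h,T}+1\approx(T-\hat k)/K$ (absorbing $K$ into $\beta$), so the clean average in \eqref{eqn:transf_solution} is recovered exactly only when $K$ divides $T-\hat k$; your proposed exact cancellation via per-head scaling of the third-layer block would, if carried out, slightly tighten the paper's own argument rather than contradict it.
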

The predictor implemented by the transformer in  \eqref{eqn:transf_solution} resembles BMA but differs in how it aggregates past information. 
Instead of using the posterior of each model as in BMA, our method employs weights proportional to the exponential of the average of normalized transition probabilities.
We analyze this predictor in Sec.~\ref{sec:statistical_analysis},  and discuss its convergence to ML.
Proposition \ref{eqn:transf_solution} illustrates how transformers can implement \textit{selective induction heads}, a mechanism that adapts to the input sequence by copying the token correspondent to the lag that maximizes the average normalized transition probabilities.

The proof of Proposition~\ref{prop:transformer_construction}, in Sec.~\ref{sec:proof_contiguous_orders} below, involves an explicit construction for the weights of the disentangled transformer implementing the solution in \eqref{eqn:transf_solution} (an alternative construction for the third layer is in App.~\ref{App:alternative_third}).
Notably, this construction produces attention maps similar to those in standard transformers (see Fig.~\ref{fig:teaser}), suggesting our algorithm closely aligns with trained transformer implementations. 
Moreover, we discuss in Sec.~\ref{sec:single-head} different generalizations, including the construction for non-contiguous lags.
The same construction using a single head implements the same algorithm but results in worse sample complexity. However, in the specific case where $|\mathcal{K}|=2$, we provide a different single-head construction that recovers the sample complexity of the multi-head version. 

\subsection{Proof of Proposition~\ref{prop:transformer_construction}: Construction for Contiguous Lags}
\label{sec:proof_contiguous_orders}

To aid intuition, we use a running example with visual illustrations for $T = 10$, $\mathcal{K} = \{1, 2, 3\}$. We recall that each input element $s_i$ is encoded as $h^{(0)}_i = [e_{s_i}, e_i] \in \{0, 1\}^{|\mathcal{S}| + T}$.

\textbf{First layer: extraction of transition probabilities.}
The first attention matrix, $\tilde{A}^{(1)}$, consists of two blocks: the first block operates on the semantic component of the input tokens, learning the transpose of the logarithm of the transition matrix\footnote{Here, $\log P$ denotes the element-wise logarithm, i.e., $(\log P)_{ij} = \log(P_{ij})$.}. The second block $A^{(1)}$ learns %
the causal relationships induced by each possible lag $s_{i-k} \to s_i$ for $k \in \mathcal{K}$:
\begin{align*}
\begin{split}
    &\widetilde{A}^{(1)} = \Atildeone \\ 
    &A^{(1)}_{ij} = \begin{cases}
        + \lambda \quad \text{if} \quad i-j \in \mathcal{K} \\
        - \lambda \quad \text{if} \quad i-j \not\in \mathcal{K} \, .  
    \end{cases}
\end{split}
\begin{adjustbox}{angle=0,origin=c,scale=1} 
$\qquad A^{(1)} =$
\end{adjustbox} 
\begin{adjustbox}{angle=0,origin=c,scale=0.58} %
    $\;\; \Aone$ 
\end{adjustbox}    
\end{align*}
We can compute the first layer's attention as:
\begin{equation*}
        [e_{s_i},e_i]^\top \tilde{A}^{(1)} [e_{s_j},e_j]  =  (\log P)_{s_j,s_i} +  \lambda \text{sign}(A^{(1)}_{ij})
\end{equation*}
 and applying the softmax:
\begin{equation*}
    \mathcal{A}^{(1)}(h^{(0)}_{1:T};\tilde{A}^{(1)})_{ij}  =  \frac{e^{(\log P)_{s_j,s_i} +  \lambda \text{sign}(A^{(1)}_{i,j})}}{\sum_{r=1}^i e^{(\log P)_{s_r,s_i} + \lambda \text{sign}(A^{(1)}_{i,r})}} =  \frac{e^{(\log P)_{s_j,s_i} +  \lambda \text{sign}(A^{(1)}_{ij})}}{\sum_{r \in \mathcal{K}} e^{(\log P)_{s_r,s_i} + \lambda} + \sum_{r \not\in \mathcal{K}} e^{(\log P)_{s_r,s_i} - \lambda}} \, .
\end{equation*}
For \mbox{$\lambda \to \infty$} (in practice, for $\lambda$ large enough) and having denoted $\tilde{p}_{i,k} \coloneq  \frac{ P_{s_{i-k},s_i}}{ \sum_{r \in \mathcal{K}, r < i } P_{s_{i-r},s_i}}$ for $i>1$, 
\begin{equation*}
  \lim_{\lambda \to \infty }  \mathcal{A}^{(1)}(h^{(0)}_{1:T};\tilde{A}^{(1)})_{ij}  = 
  \left\{
  \begin{array}{cl}
  \tilde{p}_{i,i-j}  & \text{if} \; i-j \in \mathcal{K} \\
    1 & \text{if} \; i=j=1  \\ 
    0   & \text{elsewhere}
   \end{array}.
   \right.
\end{equation*}
Therefore, the output at index $i$ after the first layer corresponds to a weighted average of the past tokens $h^{(0)}_{i-k}$ for $k \in \mathcal{K}$ where the weights are given by the normalized probabilities $\tilde{p}_{i,k}$: 
\begin{equation*}
    \hat{h}^{(1)}_i=\text{Attn}(h^{(0)}_{1:T};\tilde{A}^{(1)})_i  =  
    \left\{ \begin{array}{cl}\sum_{j=1}^i \mathds{1}\left[i-j \in \mathcal{K} \right] \frac{ P_{s_j,s_i}}{ \sum_{r \in \mathcal{K}} P_{s_r,s_i}} h^{(0)}_j  & \text{if} \; i>1 \\ 
     h_1^{(0)} & \text{if} \; i=1 \end{array} \right.
    = \left\{ \begin{array}{cl}
        \sum\limits_{k \in \mathcal{K}, k < i}  \tilde{p}_{i,k} h^{(0)}_{i-k} & \text{if} \; i>1 \\
        h_1^{(0)} & \text{if} \; i=1
    \end{array} \right.
\end{equation*}
With the input vectors $h^{(0)}_i$ being the concatenation of the one-hot encoding of the state and position $[e_{s_i},e_i]$, the first $|\mathcal{S}|$ entries of $\hat{h}^{(1)}_i$  correspond to $\tilde{s}_i = \sum_{k \in \mathcal{K}, k < i }  \tilde{p}_{i,k} e_{s_{i-k}}$ for $i>1$ and $\tilde{s}_1 = e_{s_1}$. The remaining entries, due to the one-hot positional encoding,  directly copy the normalized transition probabilities for the transition $s_{i-k} \to s_i$ into the $|\mathcal{S}|+ (i-k)$-th element of $\hat{h}^{(1)}_i$. 
To build intuition, we refer to the example in \eqref{eqn:attn_1} where the colors highlight transition probabilities of the same lag:
\begin{equation}
\hspace{-2mm}
\mathcal{A}^{(1)} = 
\begin{adjustbox}{angle=0,origin=c,scale=0.6}
     $\calAone$ %
\end{adjustbox}
\;
\hat{h}^{(1)} = 
\begin{adjustbox}{angle=0,origin=c,scale=0.6}
     $\hathOne$ %
\end{adjustbox}
\hspace{-1mm}
\label{eqn:attn_1}
\end{equation} 
The operation of the first layer is now explicit: for each token $h^{(0)}_i$, it extracts the normalized transition probabilities $\tilde{p}_{i,k}$ for each possible lag and stores them in the element $ \hat{h}^{(1)}_{i,S+T-k}$. The resulting vector is subsequently concatenated to the residual stream to be fed to the second layer $h^{(1)}_i = [[e_{s_i},e_i],\hat{h}^{(1)}_i]$.

\textbf{Second layer: aggregation of transition probabilities.}
To predict the next token, the model needs to determine which lag generated the sequence based on the past transitions. 
This selection requires aggregating the normalized transition probabilities from the past, and storing them in the embedding of the current token.
However, since consecutive tokens store transition probabilities in overlapping positions, the attention needs to learn a convex combination of tokens that avoid mixing information from different transitions while maximizing the number of $\tilde{p}$ stored (to not discard useful information).
For instance, when aggregating the past for the token at $i=10$ in \eqref{eqn:attn_1}, summing $\hat{h}^{(1)}_9$ and $\hat{h}^{(1)}_{10}$ would mix $\tilde{p}_{10,2}$ and $\tilde{p}_{9,1}$ together. 
This mixing can be avoided, for example, by only selecting  tokens every 3 steps  ($\hat{h}^{(1)}_4,\hat{h}^{(1)}_7,\hat{h}^{(1)}_{10}$) copying transitions without blending information. 
More generally, the attention $\mathcal{A}^{(2, 1)}$ should attend to every $K$-th token from the current one, which is equivalent to %
having non-zero entries along the diagonals at positions $nK$ for $n \in \mathbb{N}$ and $nK < T$.
This structure can be enforced by constructing the attention matrix $\tilde{A}^{(2,1)}$ with a single non-zero block operating on the tokens' positional encoding, as follows:
\begin{align*}
 \tilde{A}^{(2,1)} =
 \begin{adjustbox}{angle=0,origin=c,scale=0.7} 
 $\Atildetwo$ 
 \end{adjustbox} 
\begin{adjustbox}{angle=0,origin=c,scale=1} 
$\, A^{(2, 1)}=$
\end{adjustbox} 
\begin{adjustbox}{angle=0,origin=c,scale=0.6}
$\Atwoone$ 
\end{adjustbox}
\,
\mathcal{A}^{(2, 1)} = 
\begin{adjustbox}{angle=0,origin=c,scale=0.6}   
$\calATwoOne$ 
\end{adjustbox}
\end{align*}
where the first $\hat{k}$ rows and columns are empty because the first $\hat{k}$ elements of the sequence are sampled independently from the stationary distribution and therefore %
not informative.

This construction resolves the issue of overlapping transitions, but copying only a subset of tokens implies losing information from the excluded $\hat{h}_i$.
Introducing additional attention heads $\tilde{A}^{(2, 2)}, \ldots, \tilde{A}^{(2, H_2)}$ with the same form as $\tilde{A}^{(2, 1)}$ above overcomes this limitation.
The resulting attentions $\mathcal{A}^{(2, 2)}, \ldots \mathcal{A}^{(2, H_2)}$ still follow a diagonal structure as $\mathcal{A}^{(2, 1)}$ to avoid overlapping transitions, but they are shifted to copy different tokens. 
For the given example, we can design $A^{(2,2)}$ as in \eqref{eqn:A_2_sum} to attend $\hat{h}^{(1)}_9$ and $\hat{h}^{(1)}_6$, and similarly, construct $A^{(2,3)}$ for $\hat{h}^{(1)}_8$ and $\hat{h}^{(1)}_5$ when computing the output at $i=10$.  
\begin{equation}
A^{(2, 2)} = 
\begin{adjustbox}{angle=0,origin=c,scale=0.6}
$\;\; \Atwotwo$ 
\end{adjustbox}
\qquad
A^{(2, 3)} = 
\begin{adjustbox}{angle=0,origin=c,scale=0.6}
$\;\; \Atwothree$ 
\end{adjustbox}
\label{eqn:A_2_sum}
\end{equation}
Each head has a diagonal structure with non-zero entries along the diagonals at position $nK+h-1$ for $n \geq0$ and $h \in \{1,\dots,H_2\}$, the attention matrices can be formalized as: 
\begin{equation*}
    A_{ij}^{(2,h)} = \lambda
    \begin{cases}
     +1, & \text{if } i \geq j > \hat{k} \text{ and } (i - j) \mod K = h - 1 \\
    -1, & \text{otherwise},
    \end{cases}
\end{equation*}
where the condition $i \geq j \geq \hat{k}$ ensures that all entries in the first $\hat{k}$ rows and columns are set to $-\lambda$ and imposes a lower triangular structure due to causal masking. 
The modulo operation instead assigns each diagonal multiple of $K$ to $+\lambda$ (allowing  attention) and the remaining diagonals to $-\lambda$ (masking attention), while $h$ determines the shift of the the first positive diagonal to ensure the heads do not overlap. 
The output of each head in the second layer is given by computing: 
\begin{equation*}  [[e_{s_i},e_i],\hat{h}^{(1)}_i]^\top \tilde{A}^{(2,h)} [[e_{s_j},e_j],\hat{h}^{(1)}_j]  \, = \, A^{(2,h)}_{i,j} \, = \,  \lambda \text{sign}(A^{(2,h)}_{i,j}) \, ,\end{equation*}
and applying softmax and in the limit as $\lambda \to \infty$, the rows of the attention become uniform for positive entries and zero otherwise: 
\begin{equation*}
 \mathcal{A}^{(2,h)}_{ij} = %
 \textrm{Softmax}(A^{(2, h)})_{ij} = \frac{\mathds{1}\left[ A^{(2,h)}_{ij} = \lambda \right]}{\sum_{m=1}^{i}\mathds{1}\left[ A^{(2,h)}_{im} = \lambda \right]} \, .
\end{equation*}
The output $\hat{h}_i^{(2, h)}$ of each head is then concatenated into the residual stream. 
Fig.~\ref{fig:att_2_mechanism} shows the output for the $10$th token\footnote{To simplify the notation we depict with $s$ and single gray block instead of $|\mathcal{S}|$ elements, the entries correspondent to the semantics and their average after the attention which are not used in the mechanism.} for each attention head $\hat{h}^{(2,1)}_{10},\hat{h}^{(2,2)}_{10}$ and $\hat{h}^{(2,3)}_{10}$ to visualize the mechanism of the second layer. 
The arrows of different colors represent how each head aggregates transition probabilities by attending to non-overlapping past tokens and averaging them with uniform weights.
When concatenating the output of the different heads $h^{(2)}_i = [h^{(0)}_i,\hat{h}^{(1)}_i,\hat{h}^{(2,1)}_i,\dots,\hat{h}^{(2,H_2)}_i]$, we can see how the $10$th token stores the transition probabilities of its entire past for each lag.

\begin{figure}
\centering
\begin{adjustbox}{angle=0,origin=c,scale=0.6}

$
\hathTwoTwoTen
\begin{NiceMatrix}
\Block[fill=mblue!35 ,rounded-corners]{1-1}{}\hspace{2mm}\displaystyle\frac{1}{2}
\end{NiceMatrix}
\hspace{3cm}
\hathTwoThreeTen
\begin{NiceMatrix}
\Block[fill=mgold!35 ,rounded-corners]{1-1}{}\hspace{2mm}\displaystyle\frac{1}{2}
\end{NiceMatrix}
\hspace{3cm}
\hOne
\hspace{3cm}
\vspace{-2cm}
\begin{NiceMatrix}
\Block[fill=mred!35 ,rounded-corners]{1-1}{}\hspace{2mm}\displaystyle\frac{1}{3}
\end{NiceMatrix}
\hspace{1mm}
\hathTwoOneTen
$
\begin{tikzpicture}[overlay, remember picture]
    \draw[->, bend right, color=mred, line width=1.5pt, shorten >=15pt, shorten <=8pt] (hOne-21-10) to node[midway, above] {$\mathcal{A}^{(2,1)}$} (hathTwoOneTen-21-1);
    \draw[->, bend left, color=mred, line width=1.5pt,shorten >=23pt, shorten <=5pt] (hOne-16-7) to node[pos=0.7, below] {$\mathcal{A}^{(2,1)}$} (hathTwoOneTen-18-1);
    \draw[->, bend left, color=mred, line width=1.5pt,shorten >=23pt, shorten <=5pt] (hOne-13-4) to node[pos=0.8, below] {$\mathcal{A}^{(2,1)}$} (hathTwoOneTen-15-1);
    \draw[->, bend left=25, color=mblue, line width=1.5pt,shorten >=18pt, shorten <=5pt] (hOne-20-9) to node[pos=0.53, above] {$\mathcal{A}^{(2,2)}$} (hathTwoTwoTen-19-1);
    \draw[->, bend left, color=mblue, line width=1.5pt,shorten >=18pt, shorten <=5pt] (hOne-17-6) to node[pos=0.5, below] {$\mathcal{A}^{(2,2)}$} (hathTwoTwoTen-16-1);
    \draw[->, bend left, color=mgold, line width=1.5pt,shorten >=18pt, shorten <=5pt] (hOne-19-8) to node[pos=0.4, below] {$\mathcal{A}^{(2,2)}$} (hathTwoThreeTen-18-1);
    \draw[->, bend left, color=mgold, line width=1.5pt,shorten >=18pt, shorten <=5pt] (hOne-16-5) to node[pos=0.6, below] {$\mathcal{A}^{(2,2)}$} (hathTwoThreeTen-15-1);
\end{tikzpicture}
\end{adjustbox}
\vspace{2mm}
\caption{\textbf{Visualization of the mechanism of the second attention layer in our construction.}
The matrix represents the input of the second layer $h^{(1)}$ whereas the single vectors represent the output for the \nth{10} token $\hat{h}^{(2,1)}_{10}, \hat{h}^{(2,2)}_{10}, \hat{h}^{(2,3)}_{10}$. Each of the three attention heads (arrows of different colors) copies non-overlapping transition probabilities at distance $3$ from each other from the past. By doing this, the output of the second layer for the current token ($10$) contains all $\tilde{p}$ for each lag without loss of information.
}
\label{fig:att_2_mechanism}
\end{figure}

\textbf{Third layer: average of transition probabilities and lag selection.}
To build some intuition, suppose the current token is at position $i=10$, and we are predicting the \nth{11} token in the sequence. Given a set of possible lags $\{1, 2, 3\}$, the third attention mechanism must concentrate around one of the tokens at positions $8$, $9$, or $10$. This ensures that transitions for all possible lags are considered: If the sequence was generated from the source of lag 1, the token at position $10$ needs to be copied to predict the transition probabilities for the \nth{11} token. For lag $2$, the token at position $9$ is copied, and so on.
To determine the correct lag based on the sequence’s history, our construction relies on the sum of past transitions up to the current token, $\sum_{j<i} \tilde{p}_{j,k}$. Therefore, the third attention is constructed such that the entries corresponding to the transitions of possible lags are proportional to the respective cumulative sums. For example, to select which token among the ones in position $8,9,10$ should be copied to predict $11$, the third attention must be such that $\mathcal{A}^{(3)}_{10,10}$ is proportional exclusively to the sum of transitions of lag 1, i.e., $\sum_{j\leq10} \tilde{p}_{j,1}$ while $\mathcal{A}^{(3)}_{10,9}$ is exclusively proportional to $\sum_{j\leq10} \tilde{p}_{j,2}$ and $\mathcal{A}^{(3)}_{10,8} \propto \sum_{j\leq10} \tilde{p}_{j,3}$. Then, in the limit of the softmax converging to the hardmax, the attention collapses to the entry corresponding to the larger sum and selects the correspondent lag by copying the associated token.    
More generally, for this to apply to all rows, the third attention matrix must be constructed such that $\mathcal{A}^{(3)}_{i,i} \propto \sum_{j\leq i} \tilde{p}_{j,1}$, while $\mathcal{A}^{(3)}_{i,i-1} \propto \sum_{j \leq i} \tilde{p}_{j,2}$, and $\mathcal{A}^{(3)}_{i,i-2}$ to $\sum_{j\leq i} \tilde{p}_{j,3}$, with all remaining entries set to zero.

This selection mechanism is implemented through the combination of multiple blocks within the third attention matrix, $\tilde{A}^{(3)}$, which acts on the concatenated tokens $h^{(2)}_i = [h^{(0)}_i,\hat{h}^{(1)}_i,\hat{h}^{(2,1)}_i,\dots,\hat{h}^{(2,H_2)}_i]$ and is structured as follows:
\begin{align*}
            \tilde{A}^{(3)} &= 
    \begin{adjustbox}{angle=0,origin=c,scale=0.49}
    $\Atildethree$
    \end{adjustbox} \ \ 
         A^{(3)} = \begin{adjustbox}{angle=0,origin=c,scale=0.47}
       $\AThree$
        \end{adjustbox} \ \ 
         B^{(3)} = \beta \begin{adjustbox}{angle=0,origin=c,scale=0.47}
        $\BOne$
        \end{adjustbox}
\end{align*}
with the general formulation of the two matrices $A^{(3)}$ and $B^{(3)}$ given by: 
\begin{align*}
    A^{(3)}_{ij} = \lambda \begin{cases}
            +1 & \text{if }  i-j+1 \in \mathcal{K} \\ 
            -1 & \text{if }  i-j+1 \not\in \mathcal{K} \\ 
            \end{cases}, 
            \qquad
         B^{(3)}_{ij} =
    \beta \begin{cases}
    +1, & \text{if } \ (i-j) \mod K = K-1\\
    0, & \text{otherwise} \, . 
    \end{cases} 
\end{align*}
The matrix $A^{(3)}$ acts on the positional embedding of the input, similarly to the matrix $A^{(1)}$ in the first layer. The difference is that the position of the diagonals is now shifted by one.  This shift ensures that the only non-zero entries after softmax are the ones on the diagonals corresponding to $k-1$ for $k \in \mathcal{K}$.
The matrix $B^{(3)}$ is instead responsible for the sum of the normalized transitions. Each block operates on the output of a corresponding head in the second layer. To understand how, consider the following tokens in output of the first head in the second layer,
\begin{equation*} 
 \begin{adjustbox}{angle=0,origin=c,scale=0.65} $
\begin{NiceMatrix}[] \hat{h}^{(2,1)}_{10} = & \Block[fill=mred!35 ,rounded-corners]{1-1}{}\nicefrac{1}{3} & \cdot  {\Big(} & \grayBlockk{ \scalebox{0.8}{$\sum\limits_{i= 4,7,10}$} e_{s_i} } & 0 & 0 & 0 & \oneBlock & 0 & 0 & \oneBlock & 0 & 0 & \oneBlock & \grayBlockk{\scalebox{0.8}{$\sum\limits_{i= 4,7,10}$} \tilde{s}_i } & \blueBlock{4} & \redBlock{4} & \orangeBlock{4} & \blueBlock{7} & \redBlock{7} & \orangeBlock{7} & \blueBlock{10} & \redBlock{10} & \orangeBlock{10} & 0 {\Big)}\\[-3mm]
\\ 
\hat{h}^{(2,1)}_9 = & \Block[fill=mred!35 ,rounded-corners]{1-1}{}\nicefrac{1}{2} & \cdot {\Big(} & \grayBlockk{e_{s_6} + e_{s_9}}  & 0 & 0 & 0 & 0 & 0 & \oneBlock & 0 & 0 & \oneBlock & 0 & \grayBlockk{\tilde{s}_6 + \tilde{s}_9} & 0 & 0 & \blueBlock{6} & \redBlock{6} & \orangeBlock{6} & \blueBlock{9} & \redBlock{9} & \orangeBlock{9} & 0 & 0 {\Big)}\\[-3mm]
\\ %
\hat{h}^{(2,1)}_8 = & \Block[fill=mred!35 ,rounded-corners]{1-1}{}\nicefrac{1}{2} & \cdot  {\Big(} & \grayBlockk{e_{s_5} + e_{s_8}} & 0 & 0 & 0 & 0 & \oneBlock & 0 & 0 & \oneBlock & 0 & 0 & \grayBlockk{\tilde{s}_5 + \tilde{s}_8} & 0 & \blueBlock{5} & \redBlock{5} & \orangeBlock{5} & \blueBlock{8} & \redBlock{8} & \orangeBlock{8} & 0 & 0 & 0 {\Big)} \\ %
  & & & & & \Block[]{1-8}{\underbrace{\phantom{\hspace{5.1cm}}}_{\textstyle \hat{m}^{(2,1)}_{8}}} &&&&&&&&&&&
  \Block[]{1-8}{\underbrace{\phantom{\hspace{8.8cm}}}_{%
  \textstyle \hat{p}^{(2,1)}_{8}  %
  }} &&&&&&&&
\end{NiceMatrix}$
\end{adjustbox}
\end{equation*}
where we define $\hat{p}^{(2,h)}_i \in \mathbb{R}^T$ as the block of $\hat{h}^{(2,h)}_i$ which contains the normalized transition probabilities and  $\hat{m}^{(2,h)}_i \in \mathbb{R}^T$ contains a copy of the second attention. 
 With the structure of $\tilde{A}^{(3)}$, we can see how $B^{(3)}$ acts on these two blocks such that:
$
h_i^{(2)\top} \tilde{A}^{(3)} h^{(2)}_j  = \sum_{h=1}^K p_i^{(2,h)\top} B^{(3)} m_{j}^{(2,h)} + e_i A^{(3)} e_j
$. This operation selectively sums the normalized transition probabilities such that the entry of the attention in the third layer corresponding to the transition with lag $k$  for the next token contains only the sum of the transitions with the same lag: $h^{(2)\top}_i \tilde{A}^{(3)} h^{(2)}_{i-k+1} \propto \sum_{j \leq i} \tilde{p}_{j,k}$. This process is illustrated in the following:

\begin{equation}
    \hat{p}^{(2,1)\top}_{10} B^{(3)} \hat{m}^{(2,1)}_8 = \frac{\beta}{3}
    \hspace{2mm}
    \begin{adjustbox}{angle=0,origin=c,scale=0.65}
    $
    \begin{pNiceMatrix}[] \blueBlock{4} %
    \\ \redBlock{4} \\ \orangeBlock{4} \\ \blueBlock{7} \\ \redBlock{7} \\ \orangeBlock{7} \\ \blueBlock{10} \\ \redBlock{10} \\ \orangeBlock{10} \\ 0 %
   \end{pNiceMatrix}^{\textstyle\top}
    \BOne
   \begin{pNiceMatrix}[] 0 \\ 0 \\ 0 \\ 0 \\ \grayBlockk{\nicefrac{1}{2}} \\ 0 \\ 0 \\ \grayBlockk{\nicefrac{1}{2}} \\ 0 \\ 0  \end{pNiceMatrix}$
   \end{adjustbox} =
   \frac{\beta}{3}
    \hspace{2mm}
   \begin{adjustbox}{angle=0,origin=c,scale=0.65}
    $
   \begin{pNiceMatrix}[] \blueBlock{4} \\ \redBlock{4} \\ \orangeBlock{4} \\ \blueBlock{7} \\ \redBlock{7} \\ \orangeBlock{7} \\ \blueBlock{10} \\ \redBlock{10} \\ \orangeBlock{10} \\ 0   \end{pNiceMatrix}^{\textstyle\top}
    \begin{pNiceMatrix}[]  \grayBlock{1} \\ 0 \\ 0 \\ \grayBlock{1} \\ 0 \\ 0 \\ \grayBlock{1} \\ 0 \\ 0 \\ 0  \end{pNiceMatrix} $
    \end{adjustbox} = \frac{\beta}{3}%
    \begin{adjustbox}{angle=0,origin=c,scale=0.65}
    $
    (\begin{NiceMatrix}[]
    \blueBlock{4} &+& \blueBlock{7} &+& \blueBlock{10}
    \end{NiceMatrix}).
    $
    \end{adjustbox}
   \label{eqn:mask_example_1} 
\end{equation}

Notably, this mechanism allows the transformer to average the transition probabilities in the past of each lag independently. The key idea behind it is that $B^{(3)} m_{i-k+1}^{(2,h)}$ is a boolean vector, such that when multiplied by $\hat{p}^{(2,h)}_i$, it sums only the entries that correspond to transitions of lag $k$. For instance the product $\hat{p}^{(2,1)}_{10} B^{(3)} \hat{m}^{(2,1)}_8$ in \eqref{eqn:mask_example_1}, it should sum the transitions of lag $3$ stored in $\hat{h}^{(2,1)}$ to give $\mathcal{A}_{10,8}$ such that the \nth{8} token can be copied to predict the \nth{11} if the  lag of the sequence is $3$. However, we can notice how simply taking the inner product ${\hat{p}^{(2,1)\top}_{10}} \hat{m}^{(2,h)}_8$ would lead to the wrong computations summing over the transitions of lag $2$ instead of $3$ and excluding the transition correspondent to $\tilde{p}_{4,2}$. This happens because all the transitions are stored starting from the element $i-1$ of $\hat{p}^{(2,1)}_{i}$ and not $i$. To account for this, the matrix $B^{(3)}$ performs a permutation such that the mask is shifted by one position.  Along with permuting the mask, the matrix $B^{(3)}$ also removes the normalization factor ($\nicefrac{1}{2}$) and adds the missing entries in the mask due to $j<i$. To achieve this, each column of $B^{(3)}$ follows a pattern in which the entries are spaced at intervals of $K$, and the pattern shifts by one position between successive columns such that all possible sequences are present, allowing to sum over all possible lags. This shift creates a cyclic arrangement where the columns repeat every $K$ as the transitions within the vector $\hat{p}^{(2,h)}_i$.

The additional blocks containing $B^{(3)}$ act on the outputs of the other heads, performing the same operation by selectively summing the transitions of the same lag stored in the respective outputs. 
Considering all heads and only the non-zero entries after softmax, occurring at $j = i - k + 1$ due to $A^{(3)}$, we get 
\begin{equation*}
    h^{(2)\top}_i \tilde{A}^{(3)} h^{(2)}_{i-k+1}  = 
    \sum_{h=1}^K p_i^{(2,h)\top} B^{(3)} m_{i-k+1}^{(2,h)} + \lambda
    =
    \sum_{h=1}^{K} \frac{\beta}{\tau_{h,i} + 1}\sum_{n=0}^{\tau_{h,i}} \tilde{p}_{\hat{k} + h + nK,k} + \lambda,  
\end{equation*}
where $\tau_{h,i} = \lfloor \frac{i - \hat{k} - h}{K}\rfloor$. Applying the softmax, taking the limit $\lambda \to \infty$ results in non-zero entries of the attention only where $A^{(3)}_{i,j} = +\lambda $ similarly to the first layer. Moreover, for large $i$ for which $\tau_{h,i} +1 \approx \frac{i-\hat{k}}{K}$ and absorbing $K$ inside the temperature $\beta$, considering the last token $T$ we can see how the attention weights for the final token $T$ recover the weights $\tilde{w}_k(X_{1:T})$ from \eqref{eqn:transf_solution} in Proposition \ref{prop:transformer_construction}:

\begin{equation}
    \mathcal{A}^{(3)}(h^{(2)}_{1:T};\tilde{A}^{(3)})_{T,j} = \begin{cases} \frac{\exp \left(\frac{\beta}{(T-\hat{k})}\sum\limits_{n=\hat{k}+1}^T \tilde{p}_{n,k} \right)}{\sum\limits_{r \in  \mathcal{K}} \exp \left( \frac{\beta}{(T-\hat{k})} \sum\limits_{n=\hat{k}+1}^T \tilde{p}_{n,r}  \right) } \quad &\text{if} \quad T-j+1 = k \quad \text{for} \quad k \in \mathcal{K} \\ \qquad 0 \quad & \text{elsewhere} \end{cases}
    \label{eqn:Att_3}
\end{equation}
The third attention layer, $\mathcal{A}^{(3)}(h^{(2)}_{1:T};\tilde{A}^{(3)})$, is non-zero only at positions $j$ where the position $k=T-j+1$ corresponds to a valid lag in the set $\mathcal{K}$. At these positions, the attention value is precisely the weight $\tilde{w}_k$:
$\tilde{w}_k(X_{1:T}) = \mathcal{A}^{(3)}(h^{(2)}_{1:T};\tilde{A}^{(3)})_{T, j=T-k+1}$ for $k \in \mathcal{K}$. The output of the third attention has the following form: 
\begin{equation}
\hat{h}^{(3)}_i = \bigg[
\begin{adjustbox}{angle=0,origin=c,scale=1.0}
$
\begin{NiceMatrix} %
\Block[fill=mlightblue!70,rounded-corners]{1-1}{} \sum_k \tilde{w}_k e_{s_{i-k+1}} & ,\sum_k \tilde{w}_k  e_{i-k^\star+1}, & \sum_k \tilde{w}_k  \hat{h}^{(1)}_{i-k^\star+1}, & \sum_k \tilde{w}_k  \hat{h}^{(2,1)}_{i-k^\star+1}, & \ldots & \sum_k \tilde{w}_k  \hat{h}^{(2,H_2)}_{i-k^\star+1}
\end{NiceMatrix}
$ \bigg]
\end{adjustbox}
\label{eqn:output_third_att}
\end{equation}
which is then concatenated to the residual stream: 
\begin{equation*}
h^{(3)}_i =
\begin{adjustbox}{angle=0,origin=c,scale=1.0}
$
\begin{bNiceMatrix} %
e_{s_i},&e_i,& \hat{h}^{(1)}_i,&\hat{h}^{(2, 1)}_i,&\ldots & \hat{h}^{(2, H_2)}_i, & \hat{h}^{(3)}_i
\end{bNiceMatrix} \, .
$
\end{adjustbox}
\end{equation*}

\textbf{Output layer:}
Finally, the output layer $\widetilde{W}_O\in \mathbb{R}^{S \times \sum_l d_l}$ contains all zero blocks, except for the one acting on the semantics of the token copied by the third attention (blue block in Eq.\ref{eqn:output_third_att}). This block learns the transition matrix $P^\star$:
\begin{equation*}
    \widetilde{W}_O = 
   \begin{adjustbox}{angle=0,origin=c,scale=1.0}
    $
    \begin{pNiceMatrix}[] 
    \Block[borders={right,tikz={dashed}}]{1-1}{} 0_{S \times S} & \Block[borders={right,tikz={dashed}}]{1-1}{} 0_{S \times T} & \Block[borders={right,tikz={dashed}}]{1-1}{} 0_{S \times d_0} & \Block[borders={right,tikz={dashed}}]{1-1}{} 0_{S \times 2d_0} & \Block[borders={right,tikz={dashed}}]{1-1}{} \dots & \Block[borders={right,tikz={dashed}}]{1-1}{} 0_{S \times 2d_0} & \Block[fill=mgold!70, rounded-corners]{1-1}{} P^{\star\top} & \Block[borders={left,right,tikz={dashed}}]{1-1}{} 0_{S \times T} & \Block[borders={right,tikz={dashed}}]{1-1}{} 0_{S \times d_0} & \Block[borders={right,tikz={dashed}}]{1-1}{} 0_{S \times 2d_0} & \Block[borders={right,tikz={dashed}}]{1-1}{} \dots & 0_{S \times 2d_0}
  \end{pNiceMatrix}.
   $
   \end{adjustbox}
\end{equation*}
Therefore, the output of the disentangled transformer for the last token becomes:
\begin{equation*}
    \widetilde{\mathcal{T}}(X_{1:T})_T =\widetilde{W}_O h^{(3)}_T = \sum_k \tilde{w}_kP^{\star\top} e_{s_{T-k+1}} = \sum_k \tilde{w}_k P^\star_{s_{T-k+1}}
\end{equation*}
which concludes the constructive proof of Proposition~\ref{prop:transformer_construction}.
\subsection{Selective induction head and next token prediction}
We have shown in \eqref{eqn:output_third_att} how the last attention layer computes a weighted average of the tokens at a distance $k$ from the next one, with weights proportional to the average of the normalized transition probabilities of lag $k$. However, we observe empirically (see  Fig.~\ref{fig:att_maps_main}) that trained models learn large values of $\beta$ for which the softmax converges to the hardmax, effectively concentrating attention on the single token with the largest average normalized transition probability. This observation motivates our analysis of the limiting case as $\beta \to \infty$:
\begin{equation*}
    \mathcal{A}^{(3)}_{i,j} = \tilde{w}_k = \mathds{1}\left[ i-j+1=k^\star\right] \quad \text{with} \quad k^\star = \text{argmax}_k \sum_{n=\hat{k}+1}^i\tilde{p}_{n,k}.
\end{equation*}
Here, the transformer selects the causal structure (i.e., the lag) corresponding to the largest $\sum_{n=\hat{k}+1}^i \tilde{p}_{n,k}$.  Given the current token $i$, the third layer then copies  the token from the position $i-k^\star +1$, i.e., 
$\hat{h}_i^{(3)} = \sum_{j=\hat{k}}^{i} \mathds{1}\left[ i-j+1=k^\star\right] h^{(2)}_j =  h^{(2)}_{i-k^\star+1}$. 
After concatenation to the residual stream, the tokens are of the following form: 
\begin{equation*}
h^{(3)}_i =
\begin{adjustbox}{angle=0,origin=c,scale=1.0}
$
\begin{bNiceMatrix} %
e_{s_i}, & e_i, & \hat{h}^{(1)}_i, & \hat{h}^{(2, 1)}_i, & \ldots & \hat{h}^{(2, H_2)}_i, & \Block[fill=mlightblue!70,rounded-corners]{1-1}{} e_{s_{i-k^\star+1}} &, e_{i-k^\star+1}, & \hat{h}^{(1)}_{i-k^\star+1}, & \hat{h}^{(2,1)}_{i-k^\star+1}, & \ldots & \hat{h}^{(2,H_2)}_{i-k^\star+1}
\end{bNiceMatrix}
$
\end{adjustbox}
\end{equation*}
The output\footnote{Notice that we omitted the final softmax for simplicity. However, our construction would remain equivalent, with the only difference that the relevant block of the output layer $\widetilde{W}_O$ would learn $\log P^{\star\top}$ instead of $P^{\star\top}$. Since the selective induction head copies the one-hot embedding $e_{s_{T-k^\star+1}}$, the model's logits would become the correct row of the log-transition matrix, and the subsequent softmax operation would recover the probability distribution.} of the disentangled transformer becomes the transition probability corresponding to $k^\star$:     
\begin{equation*}
    \widetilde{\mathcal{T}}(X_{1:T})_T =\widetilde{W}_O h^{(3)}_T = P^\star_{s_{T-k^\star+1}} \, .
\end{equation*}
\begin{center}
\fbox{\begin{minipage}{0.9\linewidth}
We define a \textbf{selective induction head} as a multi-layer attention circuit that generalizes the standard induction head \citep{olsson2022context}. It is a mechanism where the position to copy from is not determined by a fixed rule (e.g., a previous token match), but is instead \textit{selected in-context} based on an evidence-aggregation score ($\sum_{n=\hat{k}+1}^i \tilde{p}_{n,k}$ in this case).
Unlike a standard induction head, which implements a static copying pattern, a selective induction head dynamically chooses which causal rule to apply from a set of learned possibilities. In our construction, this is achieved by:
\begin{enumerate}
    \item Preceding layers computing a score for each causal structure (i.e., each lag $k$).
    \item The final layer acting as the selector, attending to and copying the token corresponding to the lag with the highest aggregated score.
\end{enumerate}
\end{minipage}}
\end{center}

\subsection{Equivalence with Maximum Likelihood}
\label{sec:statistical_analysis}
The disentangled transformer we propose does not rely on likelihood nor computes the BMA. Due to the normalization applied by the softmax function, the model infers the lag of the sequence using the sum of normalized probabilities $\tilde p$. For the inference to be accurate, the cumulative sum corresponding to the correct lag $\sum_{n=\hat{k}+1}^i \tilde{p}_{n,k^\star}$ must be larger than that of any other lag. This fact is formalized  in terms of expected values in the following claim:
\begin{claim}\label{claim_open}
Let $\mathcal{K}$ be a  subset of integers and $X_t$ an interleaved Markov chain of lag $k \in \mathcal{K}$, then, for $r\in   \mathcal{K}$ and $i \geq \hat k$,
\begin{equation*}
\mathbb E \Big[\frac{X_{i-r}^\top P^\star X_{i}}{\sum_{l\in \mathcal{K}} X_{i-l}^\top P^\star X_{i}}\Big] \leq \mathbb E \Big[\frac{X_{i-k}^\top P^\star X_{i}}{\sum_{l\in \mathcal{K}} X_{i-l}^\top P^\star X_{i}}\Big]. %
\end{equation*}  
\end{claim}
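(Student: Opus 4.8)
The plan is to read the normalized transition probabilities as Bayesian posteriors. Write $a_l := X_{i-l}$ for the parent states and $b := X_i$, so that $X_{i-l}^\top P^\star X_i = P^\star_{a_l,b} =: q_l(b)$ is the likelihood of the observed symbol $b$ under the transition of lag $l$, and set $\Sigma(b) := \sum_{m\in\mathcal K} q_m(b)$. Conditionally on the parents $(a_l)_{l\in\mathcal K}$ and on $b$, the ratio in the claim is exactly the uniform-prior posterior $\rho_l(b) := q_l(b)/\Sigma(b)$ that lag $l$ produced $b$. First I would record the structural facts implied by the interleaved construction: stationarity, so each $X_t \sim \pi$; that under the true lag $k$ the pair $(a_k,b)$ has law $\pi(a_k)P^\star_{a_k,b}$, i.e.\ $b \sim q_k$ given $a_k$; and the subchain (in)dependence of the competitors, namely $a_r$ is independent of $(a_k,b)$ unless $r$ is a multiple of $k$, in which case $(a_r,\dots,a_k,b)$ is a stationary segment of a single subchain. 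With these in hand, the claim reduces to showing, after averaging over the parent law, that $\mathbb E_{b\sim q_k}[\rho_k(b)-\rho_r(b)] \ge 0$.

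Second, I would settle the clean case $|\mathcal K| = 2$, which I expect to hold for \emph{every} true lag $k$ by a pointwise computation that needs no information about the joint law of the parents. Conditionally on any fixed realization of the parents,
\[
\mathbb E_{b\sim q_k}\!\left[\rho_k(b)-\rho_r(b)\right] = \sum_b \frac{q_k(b)\,\bigl(q_k(b)-q_r(b)\bigr)}{q_k(b)+q_r(b)} = \tfrac12\sum_b \frac{\bigl(q_k(b)-q_r(b)\bigr)^2}{q_k(b)+q_r(b)} \ \ge\ 0,
\]
where the middle identity uses $\sum_b\bigl(q_k(b)-q_r(b)\bigr)=0$ to symmetrize the two roles. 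The right-hand side is half the triangular-discrimination divergence between $q_k$ and $q_r$, hence nonnegative for every value of the parents; averaging over them preserves the bound. This disposes of $|\mathcal K|=2$ unconditionally.

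Third, the general case $|\mathcal K|\ge 3$ is where the difficulty lies, and the above pointwise argument genuinely breaks. With extra terms in $\Sigma(b)$, the conditional quantity $\sum_b q_k(q_k-q_r)/\Sigma$ can be \emph{negative} for adversarially chosen stochastic vectors $q_l$ (the additional likelihoods reweight the points of the alphabet unevenly, defeating the cancellation used above). Consequently the result must use that the parents are not arbitrary but drawn from the stationary law of $P^\star$. The route I would take is to average over the parents first: factor the independent competitors out of $(a_k,b)$ by subchain independence, and exploit the stationarity identity $\mathbb E_{a_r\sim\pi}[q_r(b)] = \sum_a \pi(a)P^\star_{a,b} = \pi(b)$, so that a spurious parent contributes, on average, the flat stationary likelihood while the true parent contributes the sharper $q_k$. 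One can then try to push the competitor expectation through the denominator using concavity of $x\mapsto x/(x+c)$ (Jensen) and compare against this $\pi$-weighted baseline.

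The hard part, and the reason I expect this to require real structure (and to be stated as an open-type claim), is the coupling of all lags through the shared denominator together with the negative correlation between $q_k(b)$ and $1/\Sigma(b)$: the functional is not monotone in $q_k$, so the clean two-term cancellation is unavailable and one seems to need quantitative mixing/contraction properties of $P^\star$ to control the averaged expression. Additional care is needed precisely when a competitor lag $r$ is a multiple of $k$, since then $a_r$ is not independent of $b$ but is a \emph{multi-step} predecessor — a transition that is closer to the product $\pi\otimes\pi$ by mixing than the one-step transition governing $a_k$ — which is the intuitive source of the inequality but must be made rigorous through a contraction estimate for $P^\star$.
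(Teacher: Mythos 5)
You should first know that the paper does \emph{not} prove this claim in general: it explicitly leaves the full statement as future work, validates it only empirically, and proves three special cases in App.~\ref{sec:extended_statistical_analysis} (two lags, independent lags, and an unnormalized variant). Measured against that, your proposal is on par with the paper's own treatment. Your two-lag argument is correct and is essentially the paper's Lemma~\ref{lemma_two} by another route: the paper conditions on the parents and applies Cauchy--Schwarz to $\sum_b q_kq_r/(q_k+q_r)$, while you symmetrize directly; both hinge on the same identity $\sum_b (q_k^2-q_r^2)/(q_k+q_r)=\sum_b(q_k-q_r)=0$, and both are valid pointwise in the parents, so the joint parent law is irrelevant there. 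Your diagnosis of why $|\mathcal K|\ge 3$ breaks the pointwise argument is also correct and checkable: e.g.\ with $|\mathcal S|=2$, $q_k=(0.5,0.5)$, $q_r=(0.9,0.1)$, $q_3=(0,1)$ one gets $-0.2/1.4+0.2/1.6<0$, so distributional information about the parents is genuinely needed, exactly as the paper implicitly concedes by leaving the claim open.

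Two remarks on what you did not do. First, the paper's independent-lags case is proved by precisely the symmetrization trick you deploy for two lags: when $(X_{i-l})_{l\in\mathcal K}$ are independent, the joint law is invariant under swapping $s_{i-k}\leftrightarrow s_{i-r}$ and the full denominator $\sum_l P_{s_{i-l},s_i}$ is symmetric under that swap, so averaging the expression with its swapped version produces $\sum \mu\,(P_{s_{i-r},s_i}-P_{s_{i-k},s_i})^2/\Sigma\ge 0$. You had all the ingredients (you even note subchain independence of the competitors) but stopped short of this case, which covers every $r$ not a multiple of $k$. Second, the paper's unnormalized Lemma~\ref{lemma_uno} is proved by Cauchy--Schwarz plus the observation that the parent pair is a coupling of $\pi$, which is the rigorous version of your heuristic $\mathbb E_{a_r\sim\pi}[q_r(b)]=\pi(b)$. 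Your proposed route for the general normalized case (Jensen through the denominator plus a contraction estimate for $P^\star$) is plausible but unexecuted; since the authors themselves could not close this, that is not a defect relative to the paper, but you should state clearly that your argument establishes only the two-lag case.
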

While simpler cases (e.g., two lags, no normalization, or independent lags) are proven in App.~\ref{sec:extended_statistical_analysis}, we leave the complete proof of Claim~\ref{claim_open} for future work. However, we provide empirical validation of this claim in Fig.~\ref{fig:hist_temperature} (left); more details and additional experiments are reported in App \ref{app:emp_val_claim}. Due to ergodicity, the average $\frac{1}{T}\sum_{i=1}^T \tilde{p}_{i,r}$ converges to its expected value, and for large enough $T$ it is higher for the correct lag. Therefore, applying the exponential and scaling the temperature $\beta$ leads to the same result as MLE in the asymptotic limit, as shown in Fig.~\ref{fig:hist_temperature} (right).
\begin{figure}[h]
\centering
\includegraphics[width=0.35\columnwidth]{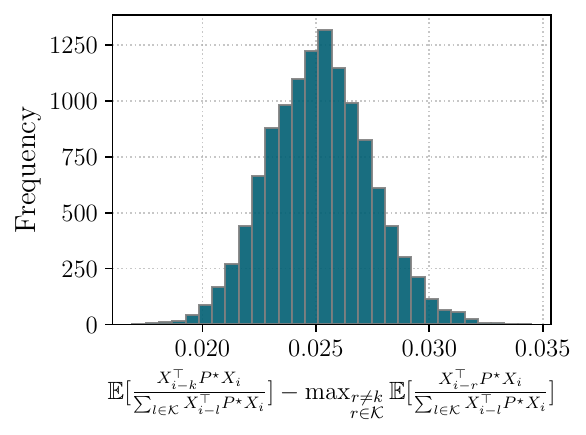}
\includegraphics[width=0.30\columnwidth]{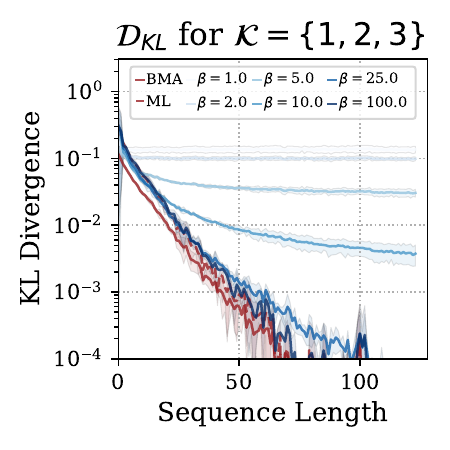}
\caption{\small \textbf{Left} difference of the expected normalized transition probabilities for the true lag and the maximum over all other lags for $|\mathcal{{S}}| = 10$. \textbf{Right: }the estimator in \eqref{eqn:transf_solution} matches MLE for large $\beta$.}
\label{fig:hist_temperature}
\end{figure}
\vspace{-2mm}
\subsection{%
Generalizations and special cases}
\label{sec:single-head}

\textbf{Single-head transformers.}
The %
construction above allows the model to store all the past transition probabilities in the %
embedding of the current token %
by scaling the number of heads with the total number of lags $K$.
Since all heads perform the same operation,  reducing the number of heads implements an equivalent algorithm but with \textit{worse sample complexity}, as some past transitions are discarded.
Thus, a 3-layer single-head transformer still solves the task by implementing the algorithm in Proposition~\ref{prop:transformer_construction}, but it only uses $\frac{T-\hat{k}-1}{K}+1$ samples to estimate the correct lag. 

\textbf{Non-contiguous lags.}
In App.~\ref{App:costr_any_order}, we provide examples of constructions to handle non-contiguous lags, where the core approach remains similar to that in Sec.~\ref{sec:proof_contiguous_orders}. %
Depending on the specific case, the number of heads needed ranges between the number of lags and $\hat{k} - \min (\mathcal K) + 1$.

\textbf{Two lags.} By the construction in Sec.~\ref{sec:proof_contiguous_orders}, handling two contiguous lags requires two attention heads in the second layer for optimal sample complexity. However, we provide in App.~\ref{App:two_orders_constr} an alternative construction for the third layer which enables a single-head model to match the performance of the two-head model for any two lags, whether contiguous or not (see empirical results below).

\section{Experiments and Discussion}
\label{sec:experiments}
\begin{figure}[h]
    \centering
\includegraphics[width=0.32\linewidth]{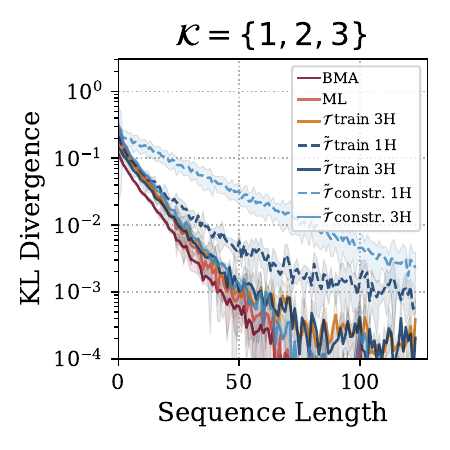}
\includegraphics[width=0.30\linewidth]{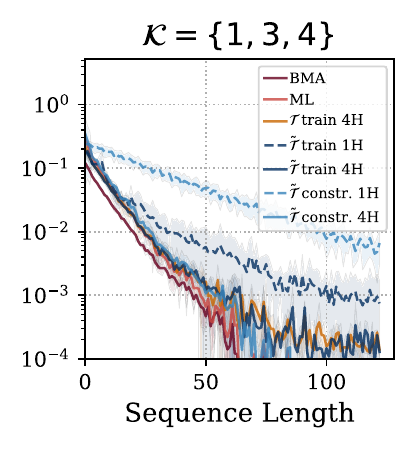}
\includegraphics[width=0.30\linewidth]{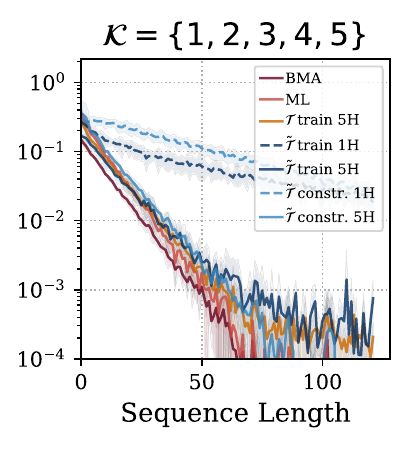}
\includegraphics[width=0.32\linewidth]{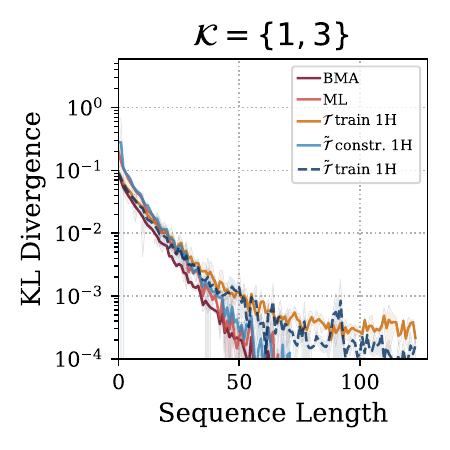}
\includegraphics[width=0.30\linewidth]{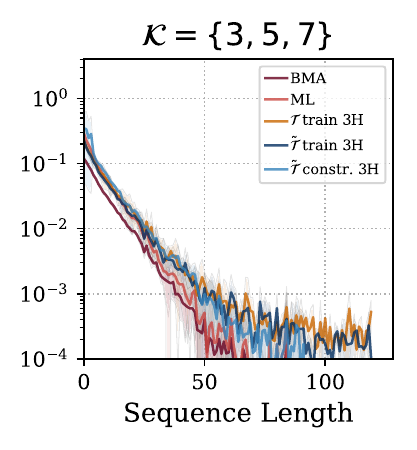}
\caption{\small \textbf{Performance of our constructed transformers, trained transformers, and theoretical estimator (BMA, ML).} First plot: lags 1,2,3. Second: the model solves the task with non-contiguous lags. Third: the model is effective with additional lags. %
    Fourth: one head is enough for two lags.} 
    \label{fig:kl_performance}
\end{figure}
We conduct a series of experiments to empirically validate our construction and determine whether transformers trained via gradient descent learn it.

\textbf{Setup.} We train $3$-layer disentangled transformers ($\tilde{\mathcal{T}}$) and $3$-layer standard transformers ($\mathcal{T}$) with learned positional and semantic embeddings both with $\alpha = \sqrt{d_{QK}}$ using cross-entropy loss. At each step, we generate a fresh batch (size $256$) of sequences (length $128$) via Alg.~\ref{algo:data_creation}, and train using Adam optimizer with fixed learning rate $0.001$ and no weight decay. For the standard transformer, we tested embedding sizes of $64$ and $128$, and $d_{QK}=32$. For the constructions, we fix $\beta = 100$ and $\lambda = 500$. 
We report $\mathcal{D}_{KL}$ between the true and predicted next-token distribution along the sequence.
We generate different tasks with alphabet size $|\mathcal{S}|=5$ (no differences are observed for other sizes) varying the number and values of lags: $\mathcal{K}=\{1, 2, 3\}$ (our example) and $\mathcal{K}=\{1, 2, 3, 4, 5\}$ for the case of contiguous lags (optimal number of heads $3$ and $5$ according to Proposition~\ref{prop:transformer_construction}), $\mathcal{K}=\{1, 3, 4\}$ to show non-contiguous lags ($4$ heads needed, see App.~\ref{App:costr_any_order}), and $\mathcal{K}=\{1, 3\}$ for the special case of two lags. 

\textbf{Main results.} We observe in Fig.~\ref{fig:kl_performance} how the construction with optimal number of heads, indicated as $\tilde{\mathcal{T}}$ constr. in the plots,  matches the performance of the maximum likelihood.
Moreover, both the disentangled transformers trained from scratch ($\tilde{\mathcal{T}}$ train) and the standard one ($\mathcal{T}$ train) match the performance of the theoretical construction.
Interestingly, we instead observe that when the number of heads is fixed to $1$ the trained transformers can find solutions which perform better than the construction: this indicates the existence of more efficient, yet elusive and non-interpretable, ways of aggregating the transition probabilities, %
and that gradient descent can find them.
Finally, we illustrate how for the simple case of two lags (last plot in Fig.~\ref{fig:kl_performance}) our construction with single head (detailed in App~\ref{App:two_orders_constr}) attains the optimal sample complexity, while the construction in App.~\ref{App:costr_any_order} would assume using more heads. Moreover, in this case, it appears that the trained transformers can obtain performances closer to the BMA rather than the ML for small sequence lengths.

\textbf{Trained vs constructed attention maps.}
\begin{figure}[h]
\centering
\includegraphics[width=0.65\textwidth]{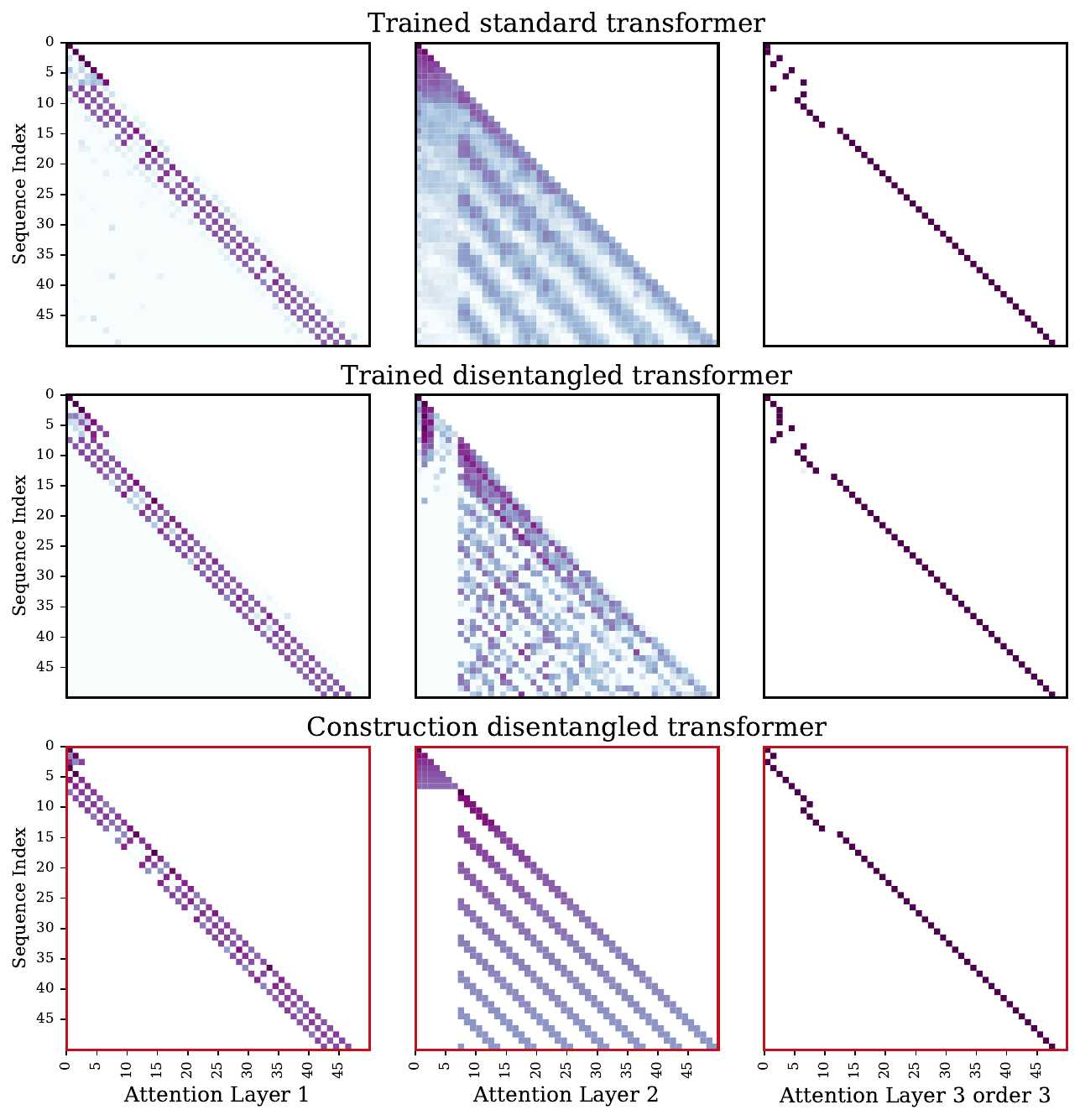}
\caption{\small \textbf{Attention Maps: Trained versus Constructed Transformers.} Heatmaps depicting the attention mechanisms of the trained standard transformer, disentangled transformer, and our constructed model are presented for training lags 3, 5, and 7 and the true lag that generated the test sequence is $k=3$. The first and second layers demonstrate a high degree of similarity, and the third layer exhibits the selective induction head.}
\label{fig:att_maps_main}
\end{figure}
In Fig.~\ref{fig:att_maps_main} we report the attention maps for the three layers for the case $\mathcal{K} = \{3,5, 7\}$ for the trained standard transformer (top row), the trained disentangled transformer (middle row) and our construction (see construction in App.~\ref{App:costr_any_order}) (bottom row).%
We observe that the attention maps of the first and third layers are nearly identical between the trained and theoretical models, with these layers functioning precisely as expected from the theoretical construction.
Notably, the attention entries of the first layer are proportional to $\log P^\star$, even when the model is trained from scratch and for both the disentangled and standard cases.
For the second layer (aggregation), the trained transformers converge to a slightly different structure, likely because aggregation is a combinatorial problem with multiple valid implementations. Despite this variability, a clear diagonal pattern emerges, closely resembling that in our theoretical construction. Furthermore, as demonstrated in Fig.~\ref{fig:kl_performance}, all models achieve comparable performance on the task. These findings strongly suggest that the trained transformer finds a solution that aligns closely with our construction. Remarkably, we also show that standard transformers trained with learned positional and semantic embeddings and attention parameterized by $Q, K, V$ produce attention maps in agreement with our construction. This provides compelling evidence that our construction is not merely a byproduct of the disentangled transformer’s architecture but can also be implemented by standard transformers. Moreover, we observe that even when the embedding dimension ($64$) is smaller than the sequence length ($128$), standard transformers are still capable of matching the optimal performances, therefore finding more efficient ways to store and use all the transitions in the past.

\vspace{-3mm}
\section{Conclusion}
We introduced a novel synthetic task based on interleaved Markov chains to study how attention-only transformers perform in-context causal structure selection. Our findings demonstrated that a 3-layer transformer can solve this task with near-optimal sample complexity, effectively showcasing the emergence of selective induction heads, attention circuits that aggregate past information, and select the correct causal structure. Moreover, we provided a fully interpretable construction of a disentangled transformer implementing these circuits to solve the task and empirically verified that both disentangled and standard transformers trained with Adam closely align with this construction.  Finally, we theoretically analyze the algorithm implemented by this construction, showing that, in certain cases, it asymptotically converges to maximum likelihood. We believe that the fundamental mechanism behind the formation of these simple circuits is the same as that underlying the emergence of more complex reasoning capabilities recently observed in larger models. Understanding this mechanism is crucial for enhancing these capabilities by developing better training strategies and architectures.

\clearpage
\setlength{\bibsep}{0.5\baselineskip}
\bibliographystyle{abbrvnat}
\setcitestyle{numbers}
\bibliography{bibliography}
\clearpage
\appendix
\paragraph{Organization of the Appendix.}
The Appendix is organized as follows. App.~\ref{sec:extended_statistical_analysis} extends the statistical analysis of the estimator implemented by the transformer in Prop.~\ref{prop:transformer_construction} and includes omitted proofs. App.~\ref{app:scaling_heads_layers} reports additional experiments and discussions about using more than $3$ layers and varying the number of heads in the second layer. Additional attention maps for different tasks and both disentangled and trained transformers as well as the construction are provided in App.~\ref{app:additional_plots}. App.~\ref{app:emp_val_claim} includes several experiments to validate Claim~\ref{claim_open}.
App.~\ref{App:taskdetail} details the algorithm used to generate the interleaved Markov chains. App.~\ref{App:alternative_third} discusses an alternative third layer construction using positional embedding.
The construction for non-contiguous lags is presented in App.~\ref{App:costr_any_order}.
Finally, App.~\ref{App:two_orders_constr} explains the single-head construction for the case of two lags.

\section{Statistical Analysis of the transformer estimator}
\label{sec:extended_statistical_analysis}

For the transformer estimator to accurately select the correct lag, the following inequalities (claim \ref{claim_open}) must hold for a lag $k$ and a sequence of length $T$ generated accordingly:
\begin{equation*}
\sum_{i=1}^T \tilde{p}_{i,k} > \sum_{j=1}^T \tilde{p}_{j,r} \quad \forall , r \neq k ; \text{and} ; r \in \mathcal{K}. 
\end{equation*}
These results enable us to recover the MLE estimator in the high-temperature limit and approximate the BMA at finite temperatures. Assuming the process is ergodic, and by taking the limit of the inequality above, we require the following condition:
\begin{equation*}
\mathbb E \Big[\frac{X_{i-r}^\top P^\star X_{i}}{\sum_{l\in \mathcal{K}} X_{i-l}^\top P^\star X_{i}}\Big] \leq \mathbb E \Big[\frac{X_{i-k}^\top P^\star X_{i}}{\sum_{l\in \mathcal{K}} X_{i-l}^\top P^\star X_{i}}\Big] \quad \text{ for } \  r\in   \mathcal{K} \text{ and } i \geq \max (\mathcal{K}), 
\end{equation*}
as formalized in Claim~\ref{claim_open}.

We leave the complete proof of this result as future work, but we have fully validated it empirically in Section~\ref{app:emp_val_claim}. We provide here the proofs of Claim~\ref{claim_open} for three specific cases.

\paragraph{Two-lag case.}
In the case of two lags, we can show the following general result for any two distributions, $P$ and $Q$, over ${1, \dots, k}$ for $k \geq 0$.
\begin{lemma}
\begin{align}
   \sum_{i=1}^k \frac{P(i) Q(i)}{P(i) + Q( i )}\leq \sum_{i=1}^k  \frac{P(i)^2} {P(i) + Q( i ) }. 
\end{align}
\label{lemma_two}
\end{lemma}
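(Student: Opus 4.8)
The plan is to reduce the claimed inequality to the single scalar bound $\sum_{i=1}^k \tfrac{P(i)Q(i)}{P(i)+Q(i)} \le \tfrac12$, and then establish the latter by a term-by-term estimate. First I would exploit the algebraic identity
\[
\frac{P(i)^2}{P(i)+Q(i)} = P(i) - \frac{P(i)Q(i)}{P(i)+Q(i)},
\]
which holds whenever $P(i)+Q(i)>0$; indices with $P(i)=Q(i)=0$ contribute nothing and are read as $0$ on both sides. Summing over $i$ and using that $P$ is a probability distribution, $\sum_{i=1}^k P(i)=1$, turns the right-hand side of the lemma into $1-\sum_{i}\tfrac{P(i)Q(i)}{P(i)+Q(i)}$. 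Writing $S \coloneq \sum_{i}\tfrac{P(i)Q(i)}{P(i)+Q(i)}$ (which is exactly the left-hand side of the lemma), the asserted inequality becomes $S \le 1-S$, i.e.\ $S \le \tfrac12$.

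The remaining bound follows pointwise from the harmonic--arithmetic mean inequality $\tfrac{xy}{x+y}\le\tfrac{x+y}{4}$ for $x,y\ge0$, which is just a rearrangement of $(x-y)^2\ge0$. Applying this with $x=P(i)$, $y=Q(i)$ and summing gives $S \le \tfrac14\sum_{i=1}^k\bigl(P(i)+Q(i)\bigr)=\tfrac14(1+1)=\tfrac12$, where the last step uses that both $P$ and $Q$ are normalized. This establishes $S\le\tfrac12$ and hence the lemma, with equality precisely when $P(i)=Q(i)$ for all $i$.

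I do not expect a real obstacle in this two-lag case: the whole argument rests on the reduction identity plus a one-line convexity estimate, and it uses nothing beyond $\sum_i P(i)=\sum_i Q(i)=1$ and nonnegativity. The only points deserving care are the convention for indices where the denominator vanishes and verifying that the identity still telescopes after those indices are discarded. The genuinely hard part lies \emph{beyond} this lemma: passing from two lags to the full Claim~\ref{claim_open} with three or more lags, where the common denominator $\sum_{l\in\mathcal K} X_{i-l}^\top P^\star X_i$ couples all the lags simultaneously and the clean ``$P(i)^2/(P+Q)=P-PQ/(P+Q)$'' splitting no longer isolates a single pairwise term — which is exactly why the excerpt defers that general case to future work.
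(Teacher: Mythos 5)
Your proof is correct, but it takes a genuinely different route from the paper's. You reduce the inequality to the scalar bound $S\coloneq\sum_i \tfrac{P(i)Q(i)}{P(i)+Q(i)}\le\tfrac12$ via the splitting $\tfrac{P(i)^2}{P(i)+Q(i)}=P(i)-\tfrac{P(i)Q(i)}{P(i)+Q(i)}$ and normalization of $P$, then close with the pointwise harmonic--arithmetic bound $\tfrac{xy}{x+y}\le\tfrac{x+y}{4}$ and normalization of $Q$. The paper instead first observes the symmetry identity $\sum_i\tfrac{P(i)^2-Q(i)^2}{P(i)+Q(i)}=\sum_i(P(i)-Q(i))=0$, so the two candidate right-hand sides $\sum_i\tfrac{P(i)^2}{P(i)+Q(i)}$ and $\sum_i\tfrac{Q(i)^2}{P(i)+Q(i)}$ coincide, and then applies Cauchy--Schwarz with weights $1/(P(i)+Q(i))$ to sandwich the cross term between them. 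The two arguments use the hypotheses slightly differently: the paper's only needs $\sum_i P(i)=\sum_i Q(i)$ (no normalization to $1$), and its Cauchy--Schwarz template is the one reused for the other cases in App.~\ref{sec:extended_statistical_analysis} (e.g.\ the no-normalization lemma); yours needs $\sum_i P(i)=\sum_i Q(i)=1$ but is more elementary, produces the explicit constant $S\le\tfrac12$, and makes the equality case $P=Q$ immediate. Your handling of vanishing denominators and your closing remark --- that the real difficulty is the coupled denominator in Claim~\ref{claim_open} for three or more lags --- are both consistent with the paper, which indeed leaves that general case open.
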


\begin{proof}
We first show that 
\begin{align*}
   \sum_{i=1}^k \frac{P(i)^2-Q(i)^2 }{P(i) + Q( i ) } =    \sum_{i=1}^k \frac{(P(i)-Q(i))(P(i)+Q(i)) }{P(i) + Q( i ) } = \sum_{i=1}^k (P(i)-Q(i)) = 0.
\end{align*}
Then, by using Cauchy-Schwarz inequality, we obtain: 
\begin{align*}
   \sum_{i=1}^k \frac{P(i)Q(i)}{P(i) + Q( i ) }\leq \sqrt{\sum_{i=1}^k  \frac{P(i)^2} {P(i) + Q( i ) }}\sqrt{\sum_{i=1}^k  \frac{Q(i)^2} {P(i) + Q( i ) }} = \sum_{i=1}^k  \frac{P(i)^2} {P(i) + Q( i )}. 
\end{align*}
\end{proof}
The result in the two-lag case follows directly. Let $\mu$ denote  the distribution of the lag-$k$ interleaved process $X_t$, (i.e, $\mu(s_i,s_j,s_k)=\mathbb P( X_i=s_i, X_j = s_j,X_k=s_k )$) . For any lag $\{r\}$ we have 
\begin{align*}  
 \mathbb E \Big[\frac{X_{i-r}^\top P^\star X_{i}}{X_{i-r}^\top P^\star X_{i} + X_{i-k}^\top P^\star X_{i}}\Big] 
 &= \sum_{s_{i-k},s_{i-r},s_i} \mu(s_{i-k}, s_{i-r},s_{i}) \frac{P_{s_{i-k},s_i} P_{s_{i-r},s_i}}{  P_{s_{i-k},s_i}+ P_{s_{i-r},s_i}} \\
 &= \sum_{s_{i-k},s_{i-r}}   \mu(s_{i-k}, s_{i-r}) \sum_{s_i} \frac{P_{s_{i-k},s_i} P_{s_{i-r},s_i}}{  P_{s_{i-k},s_i}+ P_{s_{i-r},s_i}}
\end{align*}
By applying Lemma~\ref{lemma_two}, we directly obtain 
\begin{align*}
 \mathbb E \Big[\frac{X_{i-r}^\top P^\star X_{i}}{X_{i-r}^\top P^\star X_{i} + X_{i-k}^\top P^\star X_{i}}\Big] \leq  \mathbb E \Big[\frac{X_{i-k}^\top P^\star X_{i}}{X_{i-r}^\top P^\star X_{i} + X_{i-k}^\top P^\star X_{i}}\Big] 
\end{align*}
which proves Claim~\ref{claim_open} in the case of two lags.

\paragraph{Independent lags.}
In the case where all lags in $\mathcal K$ are such that $(X_{i-l})_{l\in \mathcal K}$ are independent, we can prove Claim~\ref{claim_open}. Indeed, in this case, the distribution of the observed lags can be factorized as
$ \mu((s_{i-l})_{l\in \mathcal{K}}) = \prod_{l\in \mathcal{K}}\mu(s_{i-l})
$. Thus we have 
\begin{align*}
\mathbb E \Big[\frac{X_{i-r}^\top P^\star X_{i}-X_{i-k}^\top P^\star X_{i}}{\sum_{l\in \mathcal{K}} X_{i-l}^\top P^\star X_{i}}\Big] &= 
\sum_{s_i,(s_{i-l})_{l\in \mathcal{K}}} \mu((s_{i-l})_{l\in \mathcal{K}}) \frac{P_{s_{i-k},s_i}(P_{s_{i-r},s_i}-P_{s_{i-k},s_i})}{\sum_{l\in \mathcal{K}} P_{s_{i-l},s_i}} \\ 
&=
\sum_{s_i,(s_{i-l})_{l\in \mathcal{K}}} 
\mu({(s_{i-l})}_{l\in \mathcal{K}, l\neq k, r}) 
\mu({s_{i-k}})\mu({s_{i-r}})
\frac{P_{s_{i-k},s_i}(P_{s_{i-r},s_i}-P_{s_{i-k},s_i})}{\sum_{l\in \mathcal{K}} P_{s_{i-l},s_i}}
\end{align*}
Then, by observing that $a(a-b) + b(b-a) = (a-b)^2\geq 0$, the result follows from: 
\begin{align*}
2\sum_{s_{i-r},s_{i-l}} 
\mu({s_{i-k}})&\mu({s_{i-r}}) 
\frac{P_{s_{i-k},s_i}(P_{s_{i-r},s_i}-P_{s_{i-k},s_i})}{\sum_{l\in \mathcal{K}} P_{s_{i-l},s_i}}\\
=&\sum_{s_{i-r},s_{i-l}} 
\mu({s_{i-k}})\mu({s_{i-r}})
\frac{P_{s_{i-k},s_i}(P_{s_{i-r},s_i}-P_{s_{i-k},s_i})+P_{s_{i-r},s_i}(P_{s_{i-k},s_i}-P_{s_{i-r},s_i})}{\sum_{l\in \mathcal{K}} P_{s_{i-l},s_i}}\\
=&\sum_{s_{i-r},s_{i-l}} 
\mu({s_{i-k}})\mu({s_{i-r}})
\frac{(P_{s_{i-r},s_i}-P_{s_{i-k},s_i})^2}{\sum_{l\in \mathcal{K}} P_{s_{i-l},s_i}}\geq 0
\end{align*}
We observe that similar techniques can be applied to prove Claim~\ref{claim_open} in the case of a symmetric Markov kernel $P^\star$.

\paragraph{No normalization case.}
Our estimator's score is computed by aggregating the normalized probabilities $\tilde{p}_{i,k}$, a necessity imposed by mechanisms such as the softmax normalization in the attention layer. If we were to use the unnormalized probabilities, we could rely on the following result, which simplifies Claim~\ref{claim_open} by excluding the normalization step.
\begin{lemma}
Let $\mathcal{K}$ be a  subset of integers and $X_t$ a stationary interleaved Markov chain of lag $k \in \mathcal{K}$, then 
\begin{equation}
\mathbb E [X_{i-r}^\top P^\star X_{i}] \leq \mathbb E [X_{i-k}^\top P^\star X_{i}] \quad \text{ for } \  r\in   \mathcal{K} \text{ and } i \geq \max (\mathcal{K}). 
\end{equation} \label{lemma_uno} 
\end{lemma}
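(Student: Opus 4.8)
The plan is to evaluate both expectations explicitly using the interleaved structure and then compare them. Since $X_t$ is a \emph{stationary} interleaved chain of lag $k$, it decomposes into $k$ independent stationary copies of the $P^\star$-chain, one on each residue class modulo $k$, and every coordinate is marginally distributed as the stationary law $\pi$ of $P^\star$. Hence for the matched lag the pair $(X_{i-k},X_i)$ is a single $P^\star$-step inside one sub-chain, giving $\mathbb{E}[X_{i-k}^\top P^\star X_i]=\sum_{a,b}\pi(a)(P^\star_{ab})^2=:f(1)$. For a competing lag $r\in\mathcal{K}$, the joint law of $(X_{i-r},X_i)$ splits into two regimes according to whether $k\mid r$: if $k\nmid r$ the two indices lie in distinct sub-chains and are independent, whereas if $r=mk$ they lie in the same sub-chain exactly $m$ steps apart, so $\mathbb{E}[X_{i-r}^\top P^\star X_i]=\sum_{a,b}\pi(a)(P^\star)^m_{ab}P^\star_{ab}=:f(m)$.

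I would first dispatch the independent regime. There $\mathbb{E}[X_{i-r}^\top P^\star X_i]=\sum_{a,b}\pi(a)\pi(b)P^\star_{ab}=\sum_b\pi(b)^2=\lVert\pi\rVert_2^2$, using stationarity $\sum_a\pi(a)P^\star_{ab}=\pi(b)$. The bound $\lVert\pi\rVert_2^2\le f(1)$ then follows from one weighted Cauchy--Schwarz step: $\pi(b)^2=\bigl(\sum_a\pi(a)P^\star_{ab}\bigr)^2\le\bigl(\sum_a\pi(a)\bigr)\bigl(\sum_a\pi(a)(P^\star_{ab})^2\bigr)$, and summing over $b$ with $\sum_a\pi(a)=1$ yields exactly $f(1)$.

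The same-sub-chain regime reduces to the single scalar inequality $f(m)\le f(1)$ for all $m\ge 1$, where $f(m)=\mathbb{E}_\pi[P^\star_{X_0,X_m}]$; note the independent value $\lVert\pi\rVert_2^2$ is recovered in the limit $m\to\infty$, so this one inequality subsumes both regimes. To prove it I would use the spectral decomposition of $P^\star$ on $L^2(\pi)$. When $P^\star$ is reversible, let $\{\psi_i\}$ be its $\pi$-orthonormal right eigenfunctions with eigenvalues $1=\lambda_1>\lvert\lambda_i\rvert$ (irreducible and aperiodic) and $\psi_1\equiv 1$; then $P^\star_{ab}=\pi(b)\sum_i\lambda_i\psi_i(a)\psi_i(b)$, and the orthogonality relations $\sum_a\pi(a)\psi_i(a)\psi_j(a)=\delta_{ij}$ collapse the double sum to $f(m)=\sum_i\lambda_i^{m+1}c_i$ with $c_i=\sum_b\pi(b)^2\psi_i(b)^2\ge 0$. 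Therefore $f(1)-f(m)=\sum_i\lambda_i^2\bigl(1-\lambda_i^{m-1}\bigr)c_i\ge 0$, since $\lambda_i^2\ge 0$, $c_i\ge 0$, and $\lambda_i^{m-1}\le\lvert\lambda_i\rvert^{m-1}\le 1$, while the $i=1$ term vanishes. This closes the lemma (and, via the limit, re-derives the independent case).

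The main obstacle is exactly this step $f(m)\le f(1)$ once reversibility is dropped: for a general kernel the left and right eigenvectors no longer align, the cross terms in the double sum survive, and $f(m)$ ceases to be a nonnegative combination of eigenvalue powers. I expect the spectral argument above to settle all symmetric (reversible) kernels, consistent with the remark following the independent-lags case, whereas the fully general non-reversible chain amounts to bounding $\mathbb{E}_\pi[P^\star_{X_0,X_m}]$ by its one-step value $\mathbb{E}_\pi[P^\star_{X_0,X_1}]$; a natural route I would pursue is to exploit the contraction of $P^\star$ on $L^2(\pi)$ together with the factorization of $f$ through the reversibilized kernel (the time-reversal composed with $P^\star$), which I flag as the remaining technical point.
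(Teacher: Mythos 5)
Your decomposition into an independent regime ($k\nmid r$) and a same-sub-chain regime ($r=mk$) is sound, and the independent case is fully and correctly handled. But the proof as written has a genuine gap: the same-sub-chain case reduces to $f(m)\leq f(1)$ with $f(m)=\sum_{a,b}\pi(a)(P^{\star})^m_{ab}P^{\star}_{ab}$, and you only establish this under reversibility, explicitly flagging the non-reversible case as an unresolved technical point. Since the lemma assumes only irreducibility and aperiodicity, this is precisely the case that must be covered, so the argument is incomplete. (A second, minor imprecision: near the start of the sequence the dichotomy ``independent iff $k\nmid r$'' is too coarse, because several elements of the same residue class can fall among the first $\hat k$ i.i.d.\ draws and thus be independent even when $k\mid r$.)

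The paper avoids the spectral machinery entirely and never needs the $m$-step kernel. It conditions $X_i$ on $X_{i-k}$ to write
$\mathbb E[X_{i-r}^\top P^\star X_i]=\sum_{s_i}\sum_{a,c}\mu(a,c)\,P_{c,s_i}P_{a,s_i}$,
where $\mu$ is the joint law of $(X_{i-r},X_{i-k})$, and applies weighted Cauchy--Schwarz in $(a,c)$ for each fixed $s_i$. The only property of $\mu$ used is that \emph{both marginals are $\pi$} (it is a coupling of $\pi$ with itself, whether the two variables are independent or linked by some power of $P^\star$), which makes the two square-root factors identical and yields exactly $\sum_{a,s_i}\pi(a)P_{a,s_i}^2=\mathbb E[X_{i-k}^\top P^\star X_i]$. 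Your own Cauchy--Schwarz step in the independent case is the product-coupling instance of this argument; running it with the general coupling would have closed the lemma in one stroke. For what it is worth, the route you flag does also work and is short: writing $f(m)=\sum_b\langle (P^{\star})^{m-1}g_b,g_b\rangle_{\pi}$ with $g_b=P^{\star}_{\cdot b}$, Cauchy--Schwarz in $L^2(\pi)$ and the fact that any stationary Markov operator is an $L^2(\pi)$-contraction (Jensen) give $f(m)\leq\sum_b\lVert g_b\rVert_\pi^2=f(1)$ without reversibility --- but this step is not in your proposal, so as submitted the proof does not go through.
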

\begin{proof}
\begin{align*}
\mathbb E [X_{i-r}^\top P^\star X_{i}] & = \sum_{s_{i-r},s_{i-k},s_{i}} \mu(s_{i-r},s_{i-k},s_{i}) P_{s_{i-r},s_i}\\
&= \sum_{s_{i-r},s_{i-k},s_{i}} \mu(s_{i-r},s_{i-k})P_{s_{i-k},s_i} P_{s_{i-r},s_i} \\
&\leq \sum_{s_{i}} \sqrt{ \sum_{s_{i-r},s_{i-k}} \mu(s_{i-r},s_{i-k})P_{s_{i-k},s_i}^2} \sqrt{ \sum_{s_{i-r},s_{i-k}} \mu(s_{i-r},s_{i-k}) P_{s_{i-r},s_i}^2}\\&
\leq \sum_{s_{i}} \sqrt{ \sum_{s_{i-k}}(\sum_{s_{i-r}} \mu(s_{i-r},s_{i-k}))P_{s_{i-k},s_i}^2} \sqrt{ \sum_{s_{i-r}} (\sum_{s_{i-k}}\mu(s_{i-r},s_{i-k})) P_{s_{i-r},s_i}^2}, 
\end{align*}
where the inequality follows from the Cauchy-Schwarz inequality. 

Assuming that $\mu(s_{i-r},s_{i-k})$ is a coupling of the stationary measure $\pi$, we then have:   
   \begin{align*}
\mathbb E [X_{i-r}^\top P^\star X_{i}] & \leq \sum_{s_{i}} \sqrt{ \sum_{s_{i-k}} \pi(s_{i-k}) P_{s_{i-k},s_i}^2} \sqrt{ \sum_{s_{i-r}} \pi(s_{i-r}) P_{s_{i-r},s_i}^2} \\
  & = \sum_{s_{i},s_{i-k}} \pi(s_{i-k}) P_{s_{i-k},s_i}^2 \\
&= \mathbb E [X_{i-k}^\top P^\star X_{i}].
\end{align*}

It remains to prove that $\mu(s_{i-r},s_{i-k})$ is a coupling of the stationary measure $\pi$.

First, let's  assume that $r$ and $k$ are such that $X_{i-r}$ and  $X_{i-k}$ are independent. In this case $\mu(s_{i-r},s_{i-k}) =\mu(s_{i-r})\mu(s_{i-k})$ and we have both that $\sum_{s_{i-r}} \mu(s_{i-r},s_{i-k})) = \mu(s_{i-k})$ and $\sum_{s_{i-k}}\mu(s_{i-r},s_{i-k}) =\mu(s_{i-r})$. 

Alternatively, if $r$ and $k$ are such that $X_r$ and  $X_k$ come from the same Markov Chain with $r>k$. We have thus that $X_{i-r}\sim \mu$ and $X_{i-k}|X_{i-r}\sim s_{i-r}^\top P^{l}$ for some $l\geq 0$ and $\mu(s_{i-r},s_{i-k}) = \pi(s_{i-r}) s_{i-r}^\top P^{l} s_{i-k}$. Since $P$ is a stochastic matrix, summing over $s_{i-k}$ gives  $\sum_{s_{i-k}}\mu(s_{i-r},s_{i-k}) =\mu(s_{i-r}) = \pi (s_{i-r})$. Finally, by definition of the stationary distribution $\pi$,  summing over $s_{i-r}$ gives $\sum_{s_{i-r}} \mu(s_{i-r},s_{i-k}) = \pi(s_{i-k})$.
\end{proof}
\clearpage

\section{Scaling heads and layers}
\label{app:scaling_heads_layers}
In this section, we investigate how varying the number of heads in the second layer and the number of layers affects the model’s performance. We train standard transformers with learned positional and semantic embeddings in the same setup as reported in Section~\ref{sec:experiments}. In Figure~\ref{fig:scaling_heads} (left) we consider the task given by $\mathcal{K} = \{1,2,3\}$ and first show the behavior of the model with 2 layers and different combinations of heads $[1,1],[3,1],[1,3],[3,3]$\footnote{With this notation we intend the following $[\text{\#heads layer 1}, \dots, \text{\#heads layer L}]$}, the results show that transformers with $2$ layers can't solve the task. Second, we show that increasing the number of layers beyond $3$ does not change the performance. In Figure~\ref{fig:scaling_heads} (right) instead we consider the task defined by $\mathcal{K} = \{1,2,3,4,5\}$ and train transformers with fewer, equal to, or more than  $K$  heads. As predicted by our construction increasing the number of heads leads to performances that get closer to the maximum likelihood up to having the number of heads equal to the number of lags in the set $K$. Beyond this point adding more heads does not improve performance, this is expected as ML is optimal.  Figures \ref{fig:maps_5_lags_1_1_1},\ref{fig:maps_5_lags_1_2_1},\ref{fig:maps_5_lags_1_3_1} illustrates the attention maps for a $3$-layer transformer with only $1,2,3$ heads respectively in the second layer, despite the task having $5$ lags. Remarkably, even with fewer than  K  heads, the layers remain consistent with our theoretical construction, displaying analogous patterns: the first layer extracts transition probabilities, the second aggregates them, and the third implements the selective head. However, in the case of fewer heads, the second layer appears to find an efficient way to superpose information—a mechanism we could not yet interpret. Understanding this behaviour in the second layer remains an open question for future work.
\begin{figure}[h]
\centering
\includegraphics[width=0.45\linewidth]{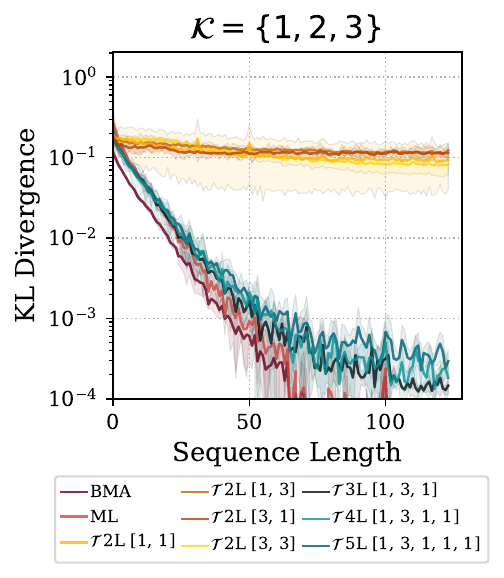}
\includegraphics[width=0.45\linewidth]{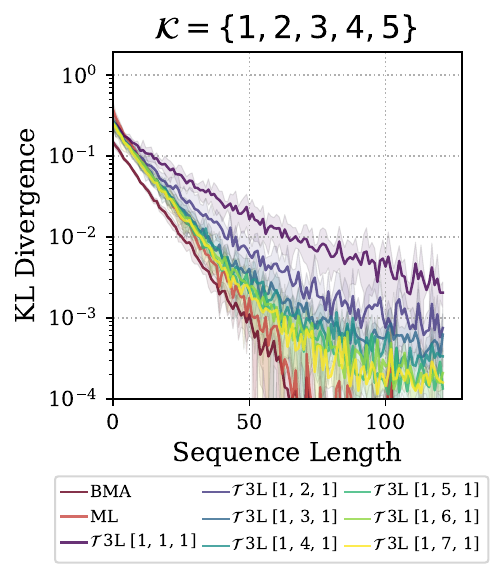}
    \caption{\small \textbf{(left) Scaling number of layers:}  we train standard transformers with learned position and semantic embeddings. Transformers with $2$ layers can't solve the task for any combination of heads. Transformers with more than $3$ layers achieve the same performance as for 3 layers. \textbf{(right) Scaling number of heads in the second layer:} we train standard transformers with learned position and semantic embeddings increasing the number of heads in the second layer. As predicted by the construction increasing the number of layers leads to performance closer to the Maximum Likelihood.}
\label{fig:scaling_heads}
\end{figure}

\begin{figure}[h]
\centering
\begin{subfigure}[b]{0.7\linewidth}
    \includegraphics[width=\linewidth]{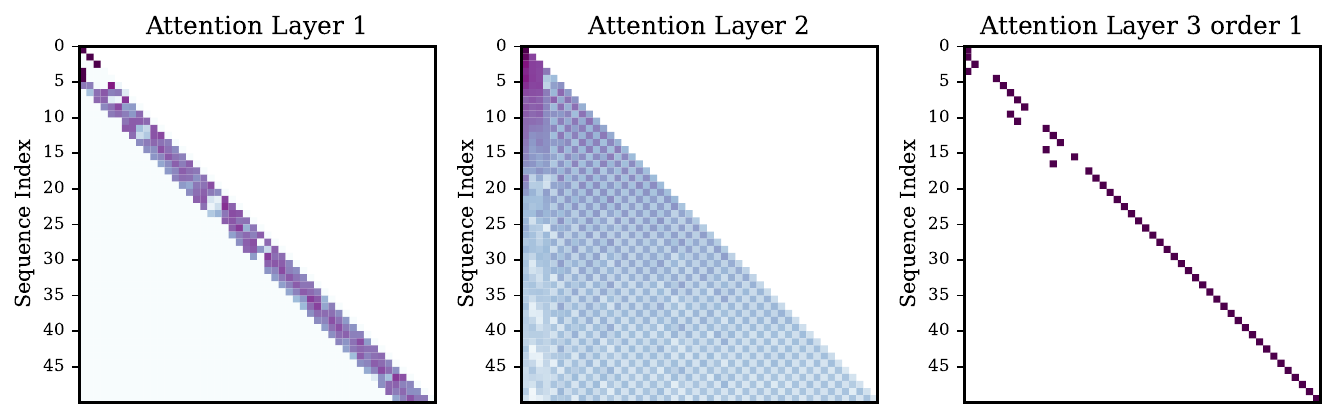}
    \caption{\small \textbf{Attention maps for $\mathcal{T} \, 3L \, [1,1,1]$ and $\mathcal{K} = \{1,2,3,4,5\}$:}}
    \label{fig:maps_5_lags_1_1_1}
\end{subfigure}
\hfill
\begin{subfigure}[b]{0.7\linewidth}
    \includegraphics[width=\linewidth]{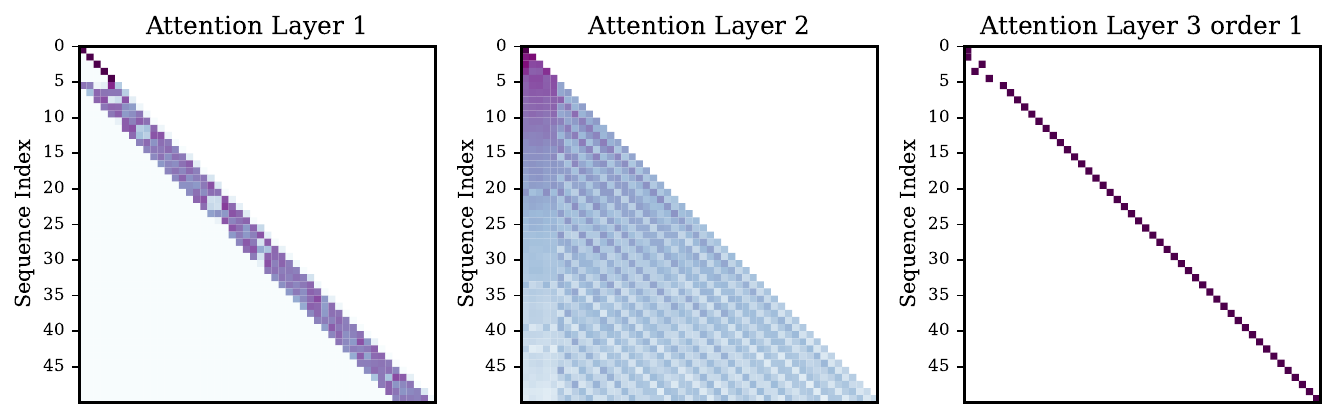}
    \caption{\small \textbf{Attention maps for $\mathcal{T} \, 3L \, [1,2,1]$ and $\mathcal{K} = \{1,2,3,4,5\}$:}}
    \label{fig:maps_5_lags_1_2_1}
\end{subfigure}
\hfill
\begin{subfigure}[b]{0.7\linewidth}
    \includegraphics[width=\linewidth]{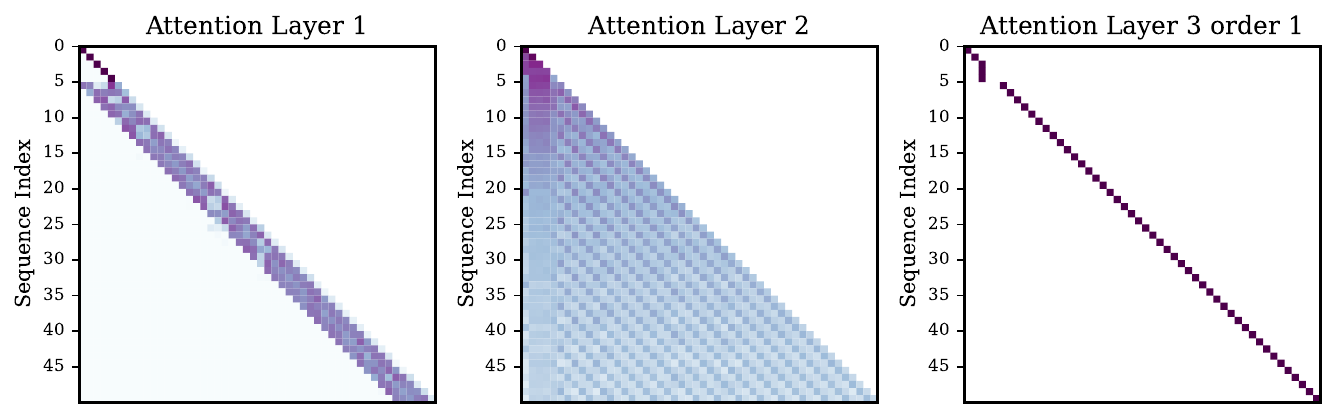}
    \caption{\small \textbf{Attention maps for $\mathcal{T} \, 3L \, [1,3,1]$ and $\mathcal{K} = \{1,2,3,4,5\}$:}}
    \label{fig:maps_5_lags_1_3_1}
\end{subfigure}

 \caption{\small \textbf{Attention maps for different heads in the second layer and $\mathcal{K} = \{1,2,3,4,5\}$:} we observe how even with fewer heads the transformer learns layers which are consistent with the operations in our construction. In particular the first layer is still extracting the transition probabilities, the second is aggregating them and the third one implements the selective head.}
\label{fig:attention_maps_comparison}
\end{figure}

\clearpage

\section{Additional Attention Maps plots}
\label{app:additional_plots}
As an additional confirmation for our construction, we report here comparison of the attention maps after softmax for the task introduced in Figure \ref{fig:teaser}. We compare, trained standard 3-layer attention-only transformer with learned positional encoding and one attention head per layer, trained disentangled transformer and our construction. The standard transformer was trained in the same setup already introduced in Section \ref{sec:experiments}. In Figure \ref{fig:attention_maps_standard_transf_K_12} we train on data with $\mathcal{K}={1,2}$, we observe a remarkable similarity between the attention maps of our construction and the trained transformer. This further confirms that the disentangled transformer is a good proxy to study the residual stream and the flow of information inside the transformer in a more interpretable way. Moreover, it confirms that our construction is realistic and aligns with what transformers learn in practice by gradient descent. Moreover, In order to showcase the adaptivity in-context of the selective induction head depending on the true lag of the input sequence, in Figures \ref{fig:attention_maps_standard_transf_K_123_1}, \ref{fig:attention_maps_standard_transf_K_123_2}, \ref{fig:attention_maps_standard_transf_K_123_3} we train on lags $\mathcal{K} = \{1,2,3\}$ and test on sequences generated with each one of the training lags, similarly in Figures \ref{fig:attention_maps_standard_transf_K_134_1}, \ref{fig:attention_maps_standard_transf_K_134_3}, \ref{fig:attention_maps_standard_transf_K_134_4} we train on lags $\mathcal{K} = \{1,2,3\}$ and test on each. As expected, the third layer adapts to the input sequence selecting the correct lag and copying the correspondent token via the selective induction head. 

\begin{figure}[h]
\centering
\includegraphics[width=0.7\linewidth]{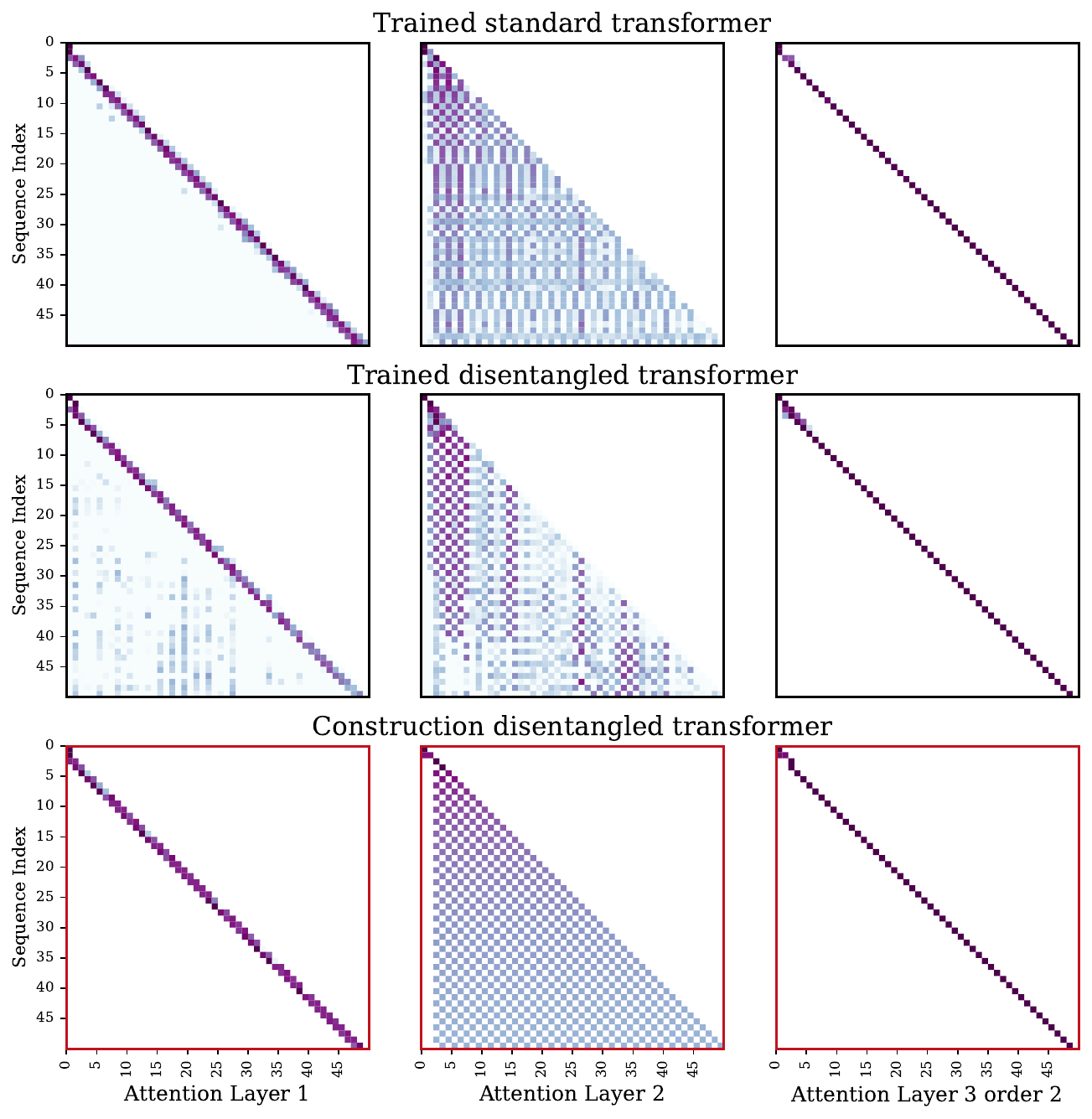}
    \caption{\small \textbf{Attention maps} $\mathcal{K} = \{1,2\}$ \textbf{and true lag} $k=2$.}
\label{fig:attention_maps_standard_transf_K_12}
\end{figure}

\begin{figure}[h]
\centering
\includegraphics[width=0.65\linewidth]{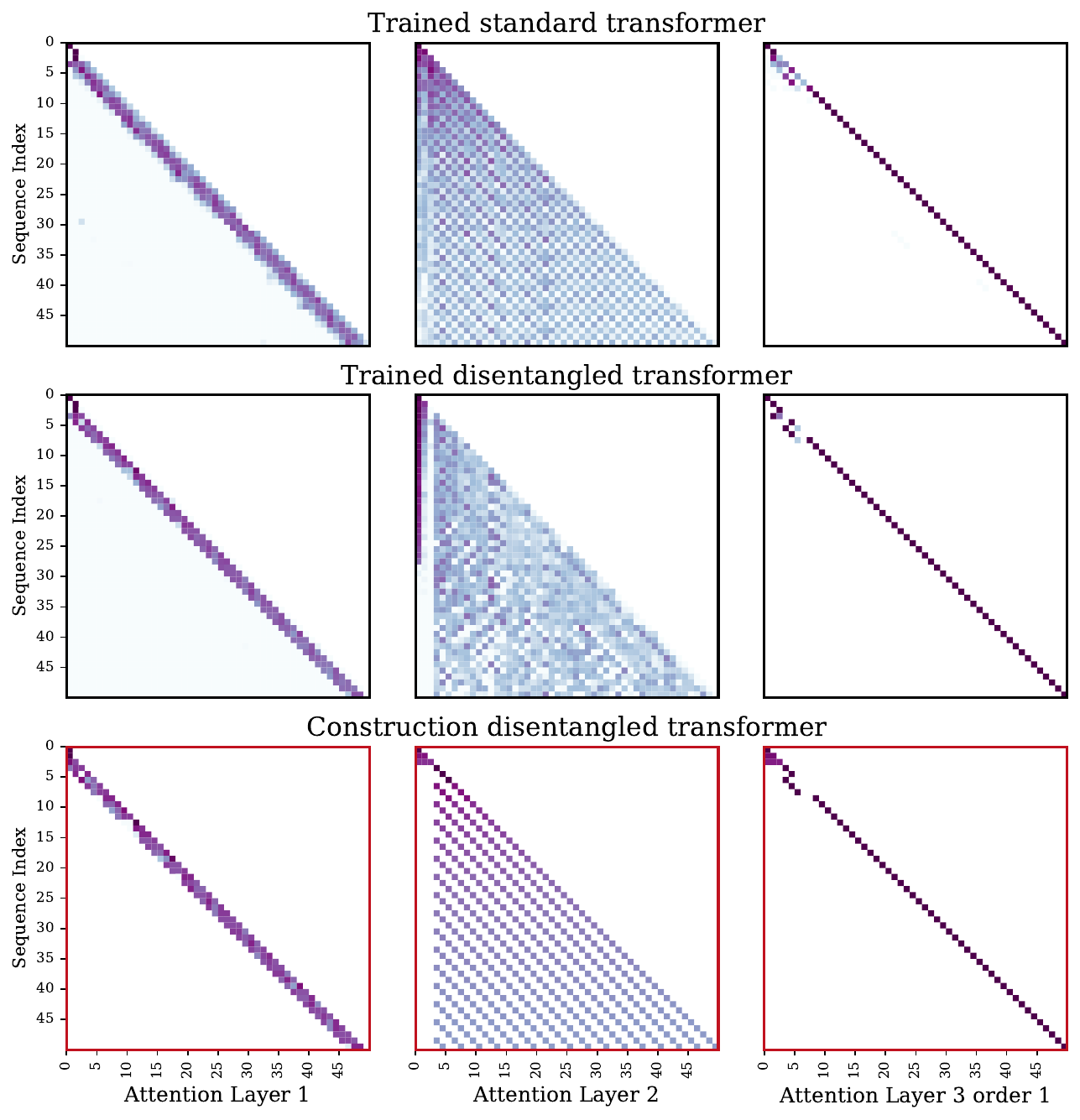}
    \caption{\small \textbf{Attention maps} $\mathcal{K} = \{1,2,3\}$ \textbf{and true lag} $k=1$.}
\label{fig:attention_maps_standard_transf_K_123_1}
\end{figure}
\begin{figure}[h]
\centering
\includegraphics[width=0.65\linewidth]{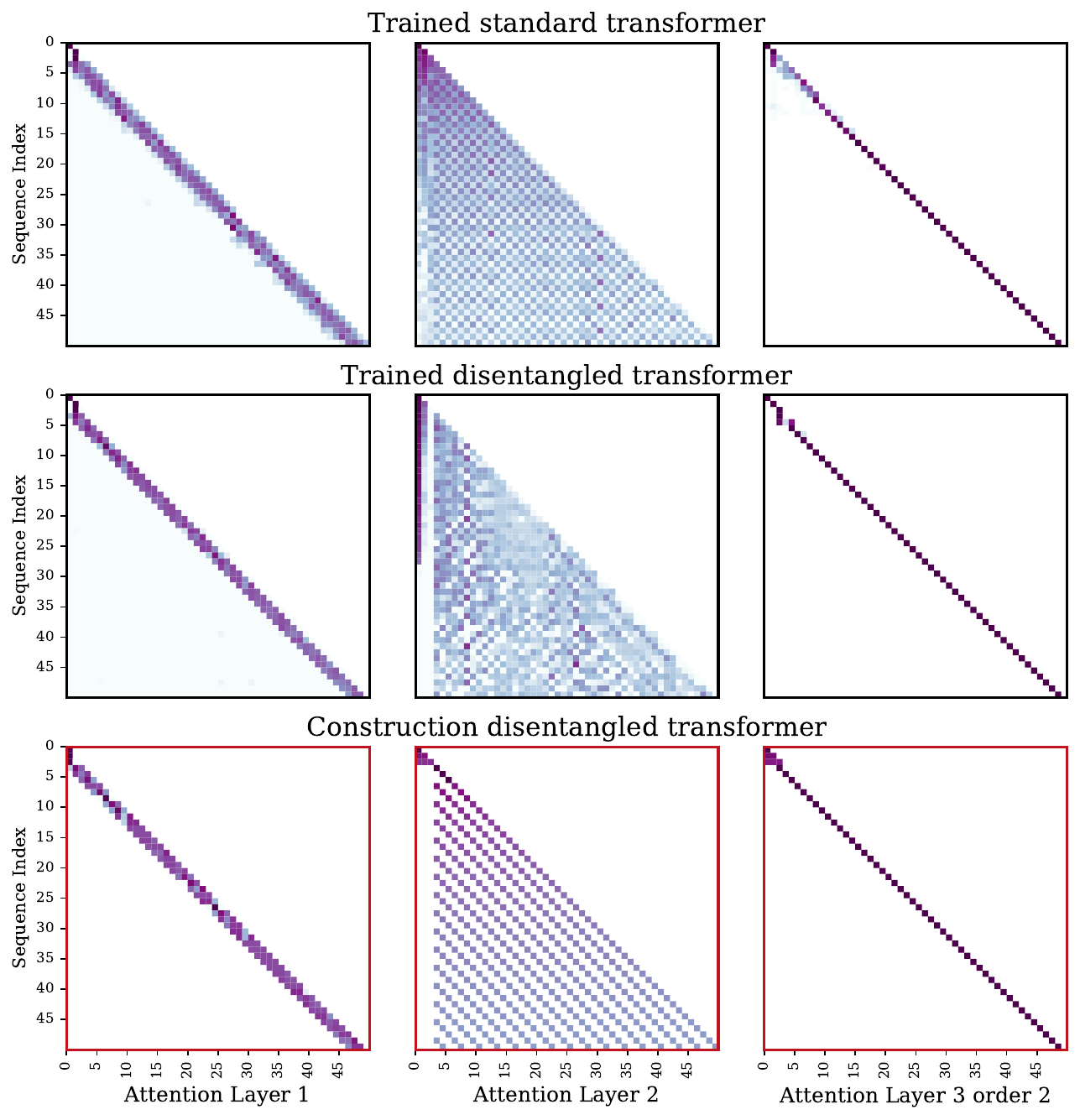}
    \caption{\small \textbf{Attention maps} $\mathcal{K} = \{1,2,3\}$ \textbf{and true lag} $k=2$.}
\label{fig:attention_maps_standard_transf_K_123_2}
\end{figure}

\begin{figure}[h]
\centering
\includegraphics[width=0.65\linewidth]{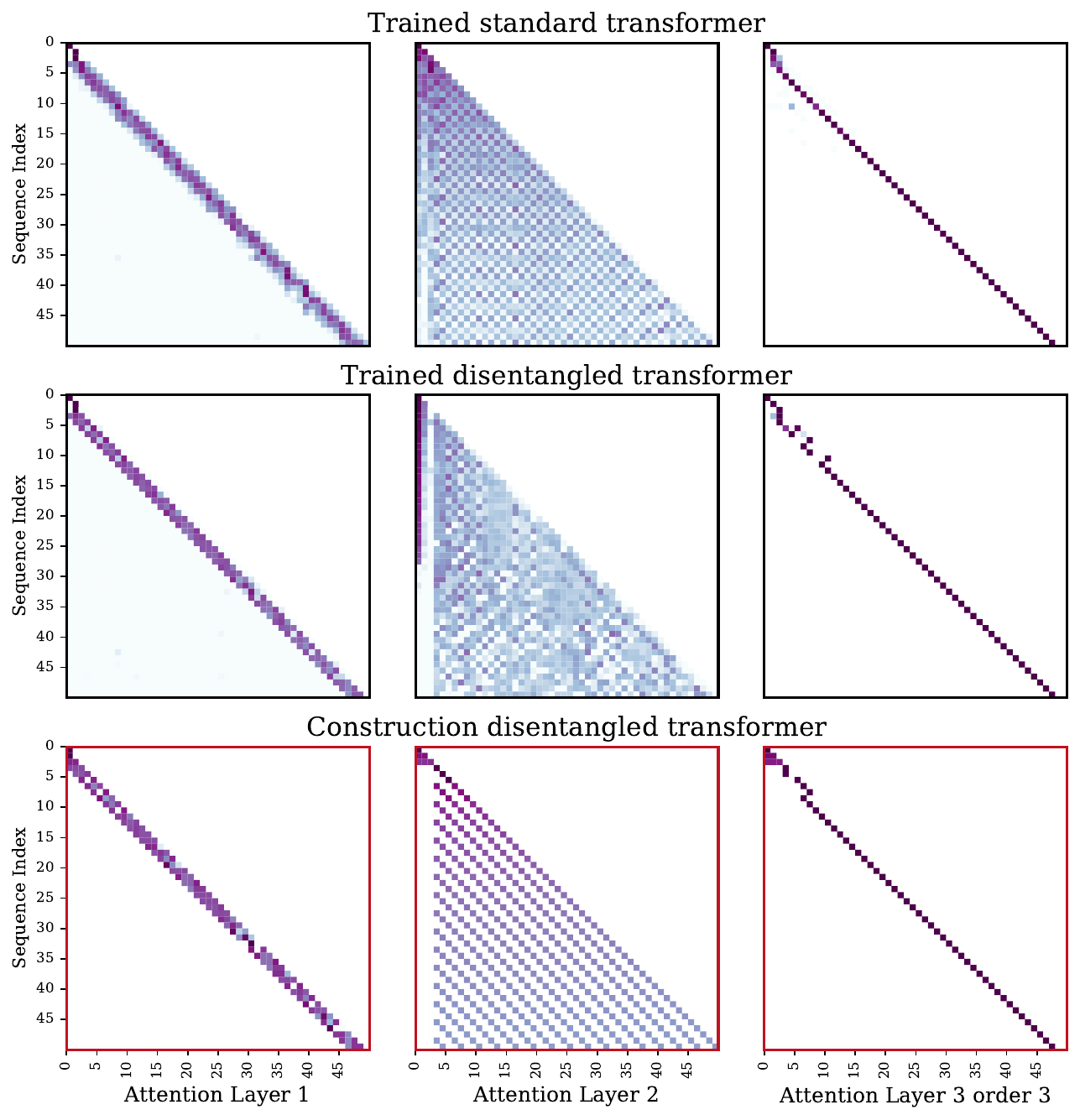}
    \caption{\small \textbf{Attention maps} $\mathcal{K} = \{1,2,3\}$ \textbf{and true lag} $k=3$.}
\label{fig:attention_maps_standard_transf_K_123_3}
\end{figure}

\begin{figure}[h]
\centering
\includegraphics[width=0.65\linewidth]{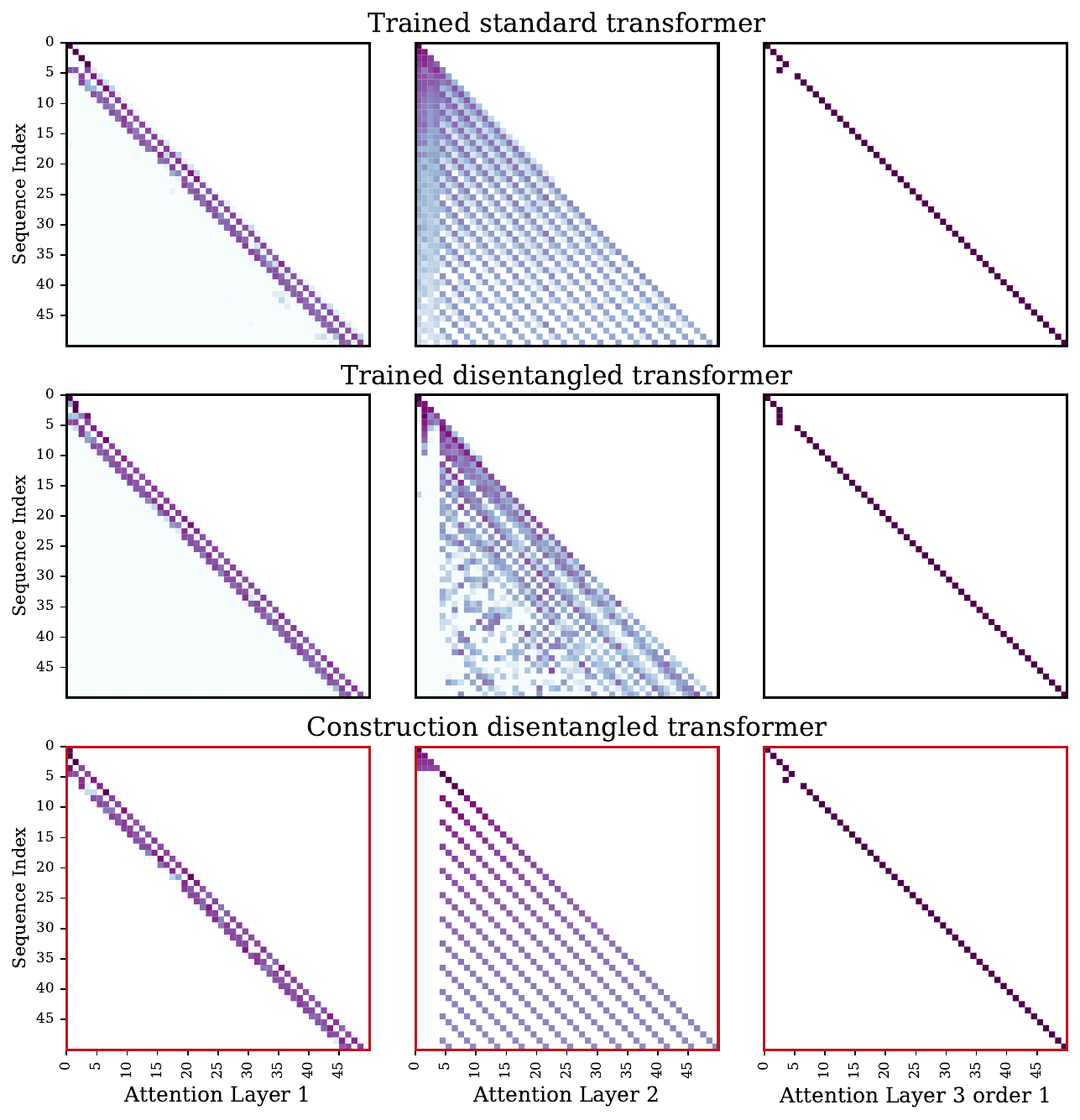}
    \caption{\small \textbf{Attention maps} $\mathcal{K} = \{1,3,4\}$ \textbf{and true lag} $k=1$.}
\label{fig:attention_maps_standard_transf_K_134_1}
\end{figure}

\begin{figure}[h]
\centering
\includegraphics[width=0.65\linewidth]{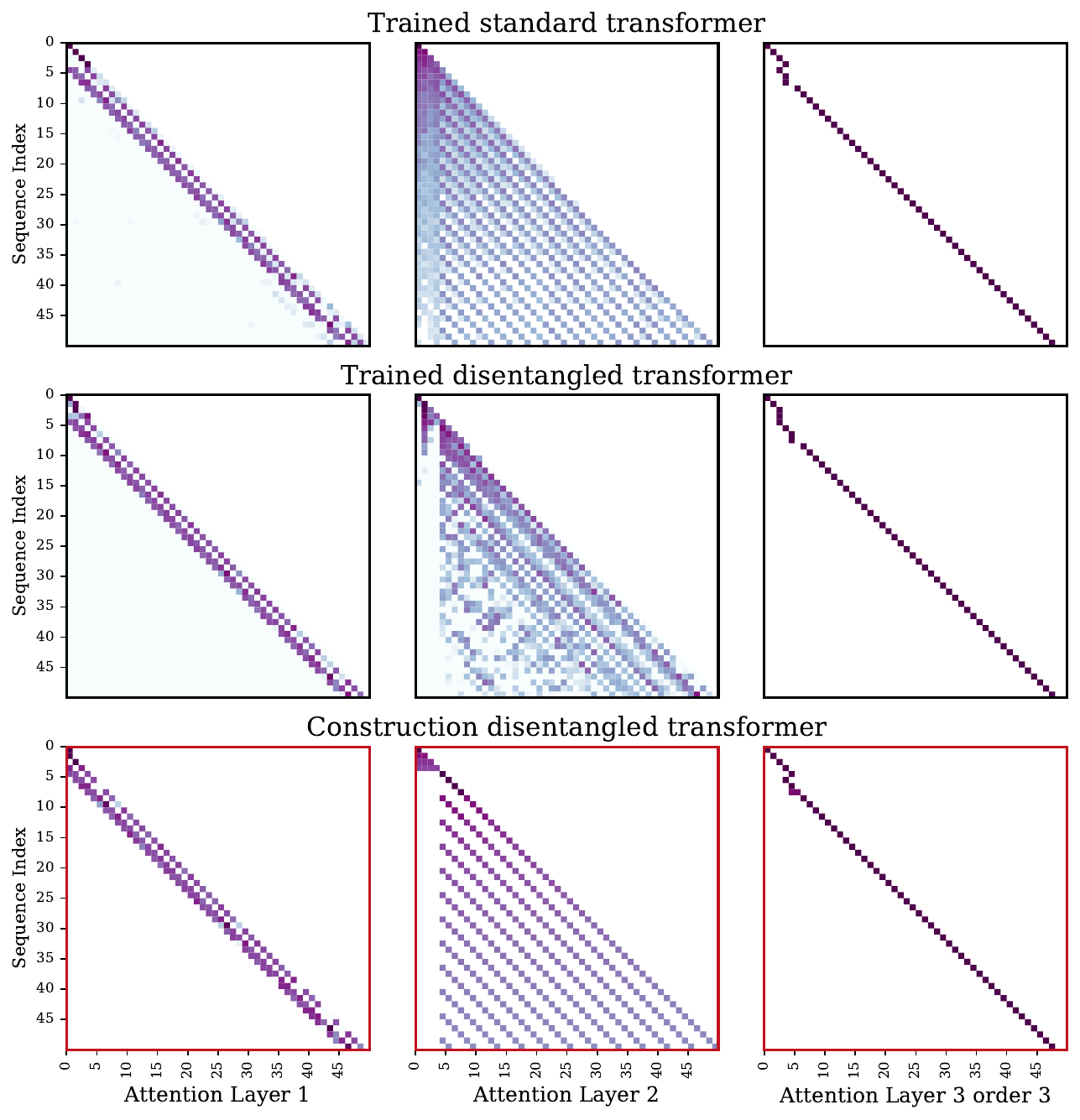}
    \caption{\small \textbf{Attention maps} $\mathcal{K} = \{1,3,4\}$ \textbf{and true lag} $k=3$.}
\label{fig:attention_maps_standard_transf_K_134_3}
\end{figure}

\begin{figure}[h]
\centering
\includegraphics[width=0.65\linewidth]{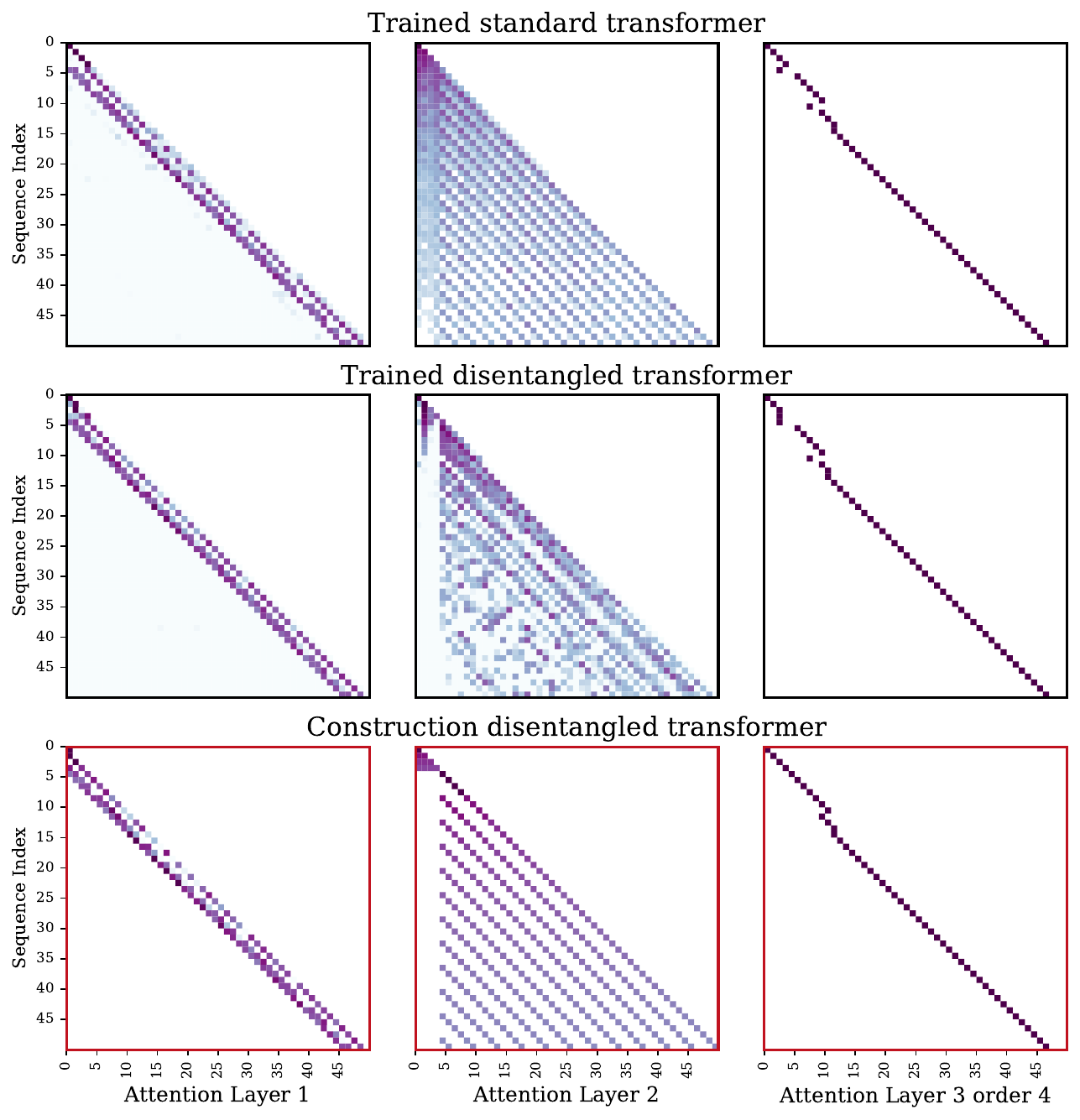}
    \caption{\small \textbf{Attention maps} $\mathcal{K} = \{1,3,4\}$ \textbf{and true lag} $k=4$.}
\label{fig:attention_maps_standard_transf_K_134_4}
\end{figure}

\clearpage

\section{Task Details}
\label{App:taskdetail}
In this section, we illustrate the algorithm we used to generate batches of new samples at each iteration. 
\begin{algorithm}
\caption{Generate a batch of N Sequences from Interleaved Markov Chains}
\begin{algorithmic}[1]
\Require $N$ (sequences), $T$ (length), $\mathcal{S}$ (state space), $\mathcal{K}$ (set of lag values), $P^*$ (transition matrix)
\Ensure Batch $\mathcal{D}$ of $N$ sequences

\State $\mathcal{D} \gets \emptyset$
\State $\pi \gets$ stationary distribution of $P^*$

\For{$i = 1$ to $N$}
    \State $k \gets$ Uniform($\mathcal{K}$) \Comment{Randomly select lag for this sequence}
    \State $X_0 \gets$ Sample from $\pi$ \Comment{Initialize first state}
    \State $S \gets [X_0]$ \Comment{Initialize sequence}
    
    \For{$t = 1$ to $T-1$}
        \If{$t < \hat{k}$}
            \State $X_t \gets$ Sample from $\pi$ \Comment{Sample from stationary distribution}
        \Else
            \State $X_t \gets$ Sample from $P^*[X_{t-k}, :]$ \Comment{Transition based on lag $k$}
        \EndIf
        \State Append $X_t$ to $S$
    \EndFor
    
    \State Add $S$ to $\mathcal{D}$
\EndFor

\Return $\mathcal{D}$
\end{algorithmic}
\label{algo:data_creation}
\end{algorithm}

\section{Empirical validation of Claim~\ref{claim_open}}
\label{app:emp_val_claim}
To empirically validate Claim~\ref{claim_open} we first sample a set of $12$ lags uniformly between $1$ and $30$; we then sample $1000$ different transition matrices and for each matrix and each lag $1000$ sequences of length $1000$ according to the respective Interleaved Markov chain. For each lag and each set of sequences, we then compute the expectation in Claim~\ref{claim_open} by averaging the last transition in each sampled sequence. We then compute the following quantity:
\begin{equation}
\mathbb{E}[\frac{X_{i-k}^\top P^\star X_{i}}{\sum_{l\in \mathcal{K}} X_{i-l}^\top P^\star X_{i}}] - \max_{\substack{r \neq k \\ r \in \mathcal{K}}} \mathbb{E}[\frac{X_{i-r}^\top P^\star X_{i}}{\sum_{l\in \mathcal{K}} X_{i-l}^\top P^\star X_{i}}]
\label{eqn:histogram}
\end{equation}
and report it in the histogram in Fig.\ref{fig:hist_temperature}. We can see that all values in the histogram are positive therefore confirming our claim.  Similarly, the results in Fig.\ref{fig:expected_probs_10} report the quantity in the claim for each single lag. As per our claim, the expected normalized transition probabilities of the true lag is always larger than the same quantity for any other lag. As a further confirmation of the claim, in Fig.\ref{fig:cum_prob_10} and Fig.\ref{fig:cum_prob_25} we report the cumulative average of the normalized transition probabilities along the sequence for a single sequence. We observe that even with few samples (small $t$) the cumulative average for the true order is always larger than the same quantity for the other lags. 

\begin{figure}[h]
\centering
\includegraphics[width=0.8\linewidth]{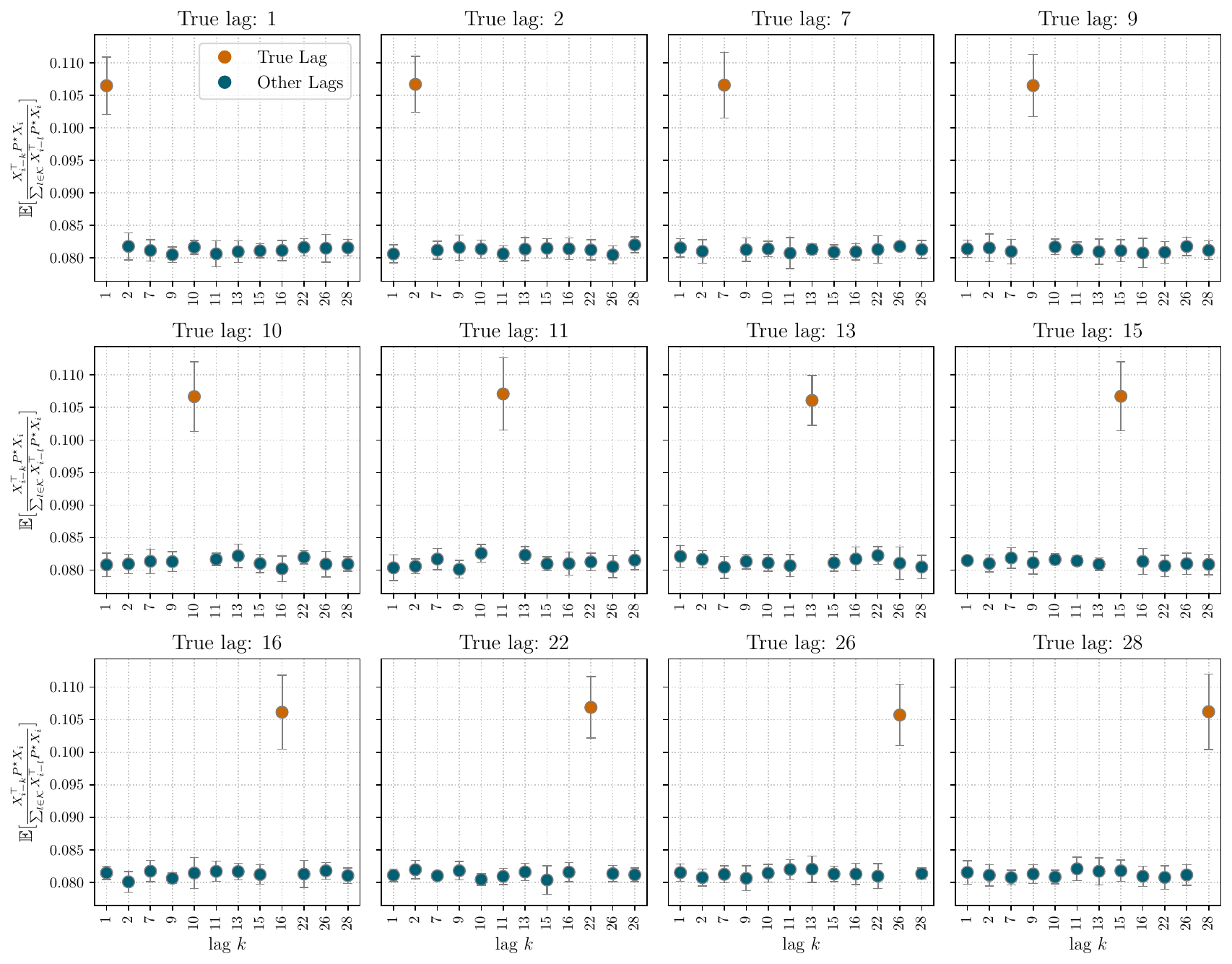}
\caption{\small \textbf{Expected Normalized Transition Probabilities for $|\mathcal{{S}}| = 10$:} The sampled set of lags is $\mathcal{{K}} = \{1,2,7,9,10,11,13,15,16,22,26,28\}$, we sampled $10$ different transition matrices and for each lag and each matrix sampled $1000$ sequences of length $1000$. The expected normalized transition probability is always larger for the true lag. }
\label{fig:expected_probs_10}
\end{figure}
\begin{figure}[h]
\centering
\includegraphics[width=0.8\linewidth]{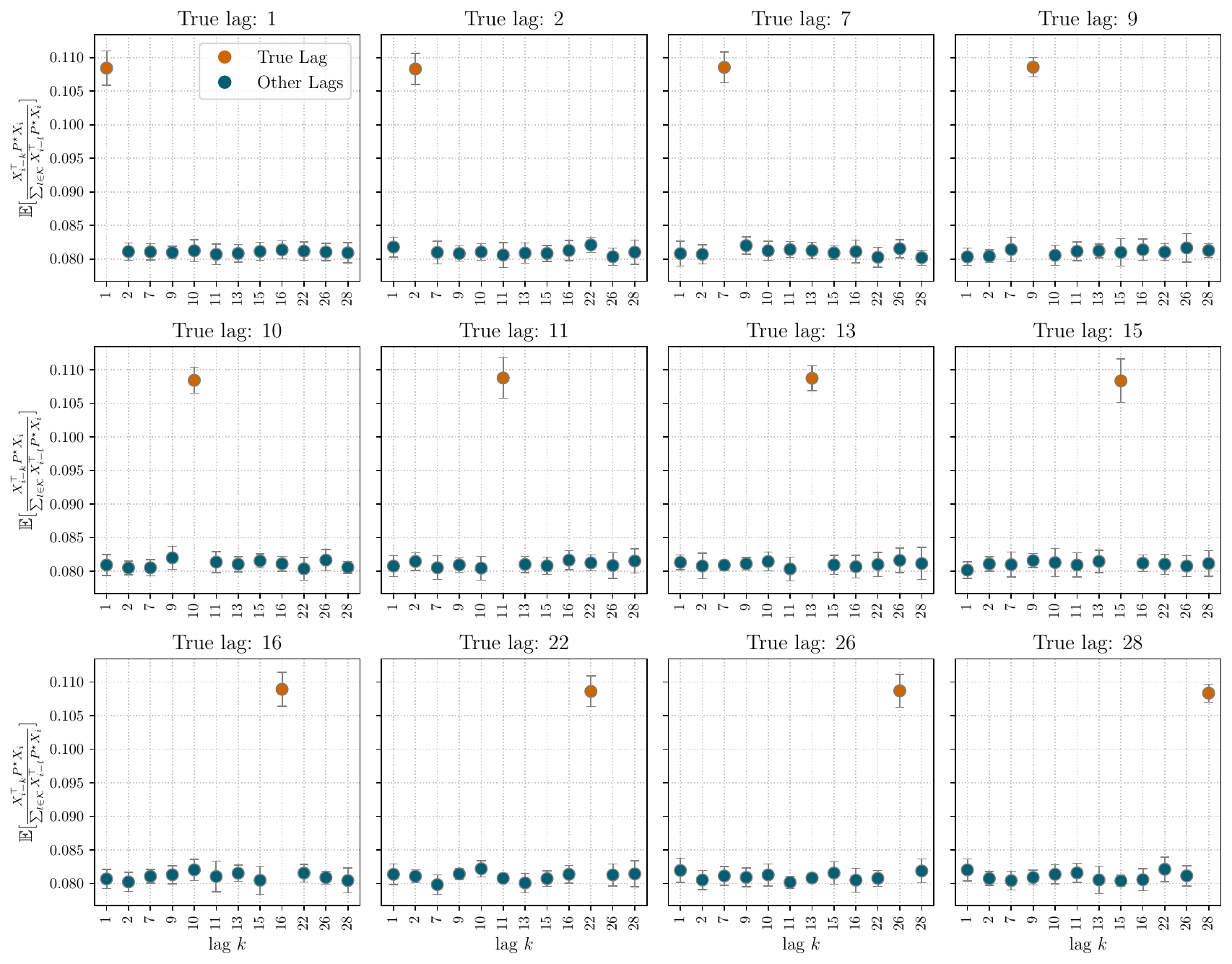}
\caption{\small \textbf{Expected Normalized Transition Probabilities for $|\mathcal{{S}}| = 25$:} The sampled set of lags is $\mathcal{{K}} = \{1,2,7,9,10,11,13,15,16,22,26,28\}$, we sampled $10$ different transition matrices and for each lag and each matrix sampled $1000$ sequences of length $1000$. The expected normalized transition probability is always larger for the true lag. }
\label{fig:expected_probs_25}
\end{figure}

\begin{figure}[h]
\centering
\includegraphics[width=0.8\linewidth]{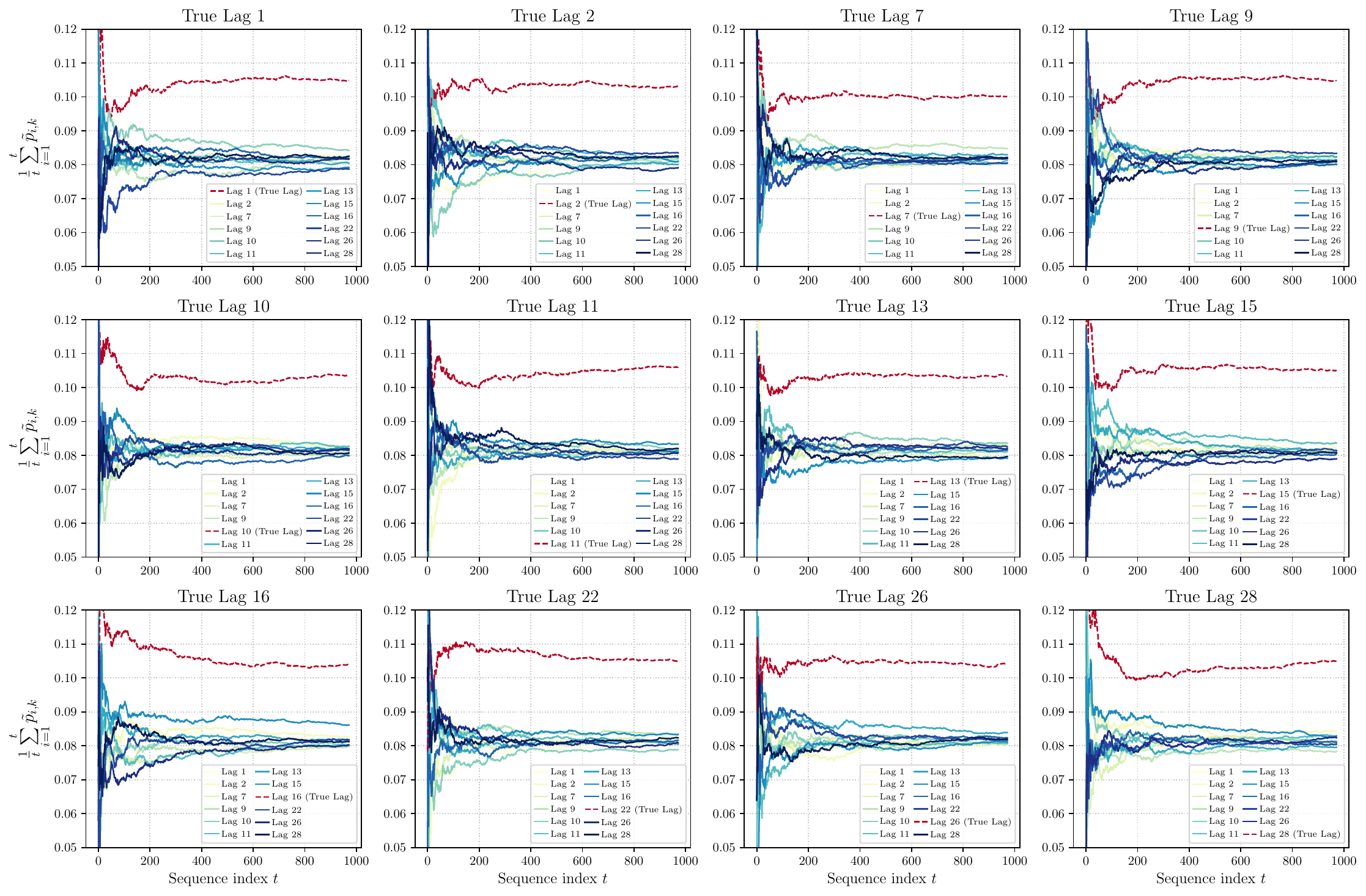}
\caption{\small \textbf{Cumulative average of Normalized Transition Probabilities for $|\mathcal{{S}}| = 10$:} The sampled set of lags is $\mathcal{{K}} = \{1,2,7,9,10,11,13,15,16,22,26,28\}$, we report one sequence sampled according to one the transition matrix. The cumulative average of normalized transition probability quickly becomes larger for the true lag.}
\label{fig:cum_prob_10}
\end{figure}
\begin{figure}[h]
\centering
\includegraphics[width=0.8\linewidth]{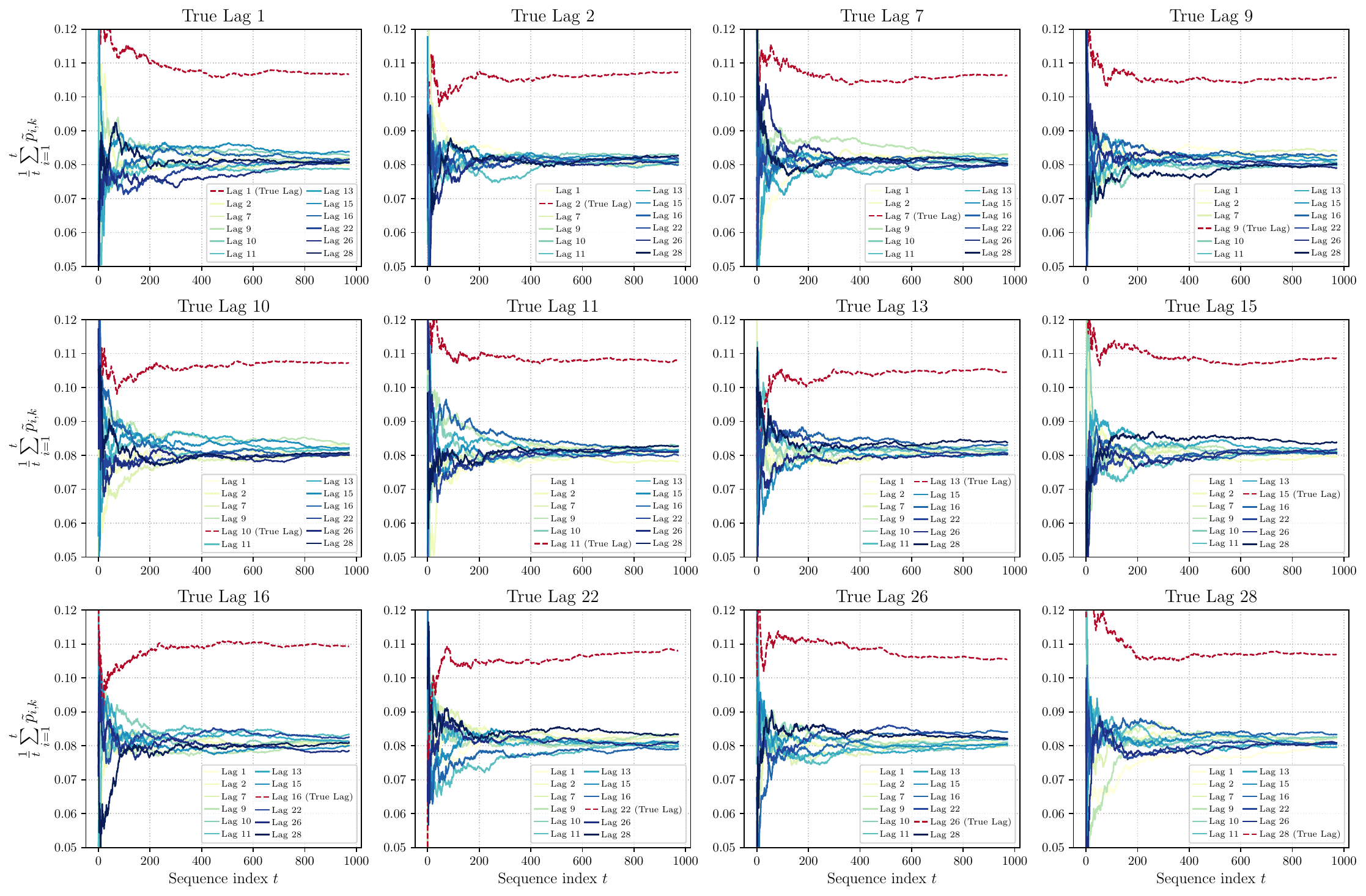}
\caption{\small \textbf{Cumulative average of Normalized Transition Probabilities for $|\mathcal{{S}}| = 25$:} The sampled set of lags is $\mathcal{{K}} = \{1,2,7,9,10,11,13,15,16,22,26,28\}$, we report one sequence sampled according to one the transition matrix. The cumulative average of normalized transition probability quickly becomes larger for the true lag.}
\label{fig:cum_prob_25}
\end{figure}

\clearpage

\newcommand{\Aoneany}{%
\begin{pNiceMatrix}[name=Aoneany]
    \mlam     & \mlam     & \mlam     & \mlam     & \mlam     & \mlam     & \mlam     & \mlam     & \mlam     & \mlam     & \mlam     \\
    \mlam     & \mlam     & \mlam     & \mlam     & \mlam     & \mlam     & \mlam     & \mlam     & \mlam     & \mlam     & \mlam     \\
    \blam     & \mlam     & \blam     & \mlam     & \mlam     & \mlam     & \mlam     & \mlam     & \mlam     & \mlam     & \mlam     \\
    \mlam     & \blam     & \mlam     & \blam     & \mlam     & \mlam     & \mlam     & \mlam     & \mlam     & \mlam     & \mlam     \\
    \mlam     & \mlam     & \blam     & \mlam     & \blam     & \mlam     & \mlam     & \mlam     & \mlam     & \mlam     & \mlam     \\
    \mlam     & \mlam     & \mlam     & \blam     & \mlam     & \blam     & \mlam     & \mlam     & \mlam     & \mlam     & \mlam     \\
    \mlam     & \mlam     & \mlam     & \mlam     & \blam     & \mlam     & \blam     & \mlam     & \mlam     & \mlam     & \mlam     \\
    \mlam     & \mlam     & \mlam     & \mlam     & \mlam     & \blam     & \mlam     & \blam     & \mlam     & \mlam     & \mlam     \\
    \mlam     & \mlam     & \mlam     & \mlam     & \mlam     & \mlam     & \blam     & \mlam     & \blam     & \mlam     & \mlam     \\
    \mlam     & \mlam     & \mlam     & \mlam     & \mlam     & \mlam     & \mlam     & \blam     & \mlam     & \blam     & \mlam     \\
\end{pNiceMatrix}
}

\newcommand{\Aoneaany}{%
\begin{pNiceMatrix}[name=Aoneaany]
    \mlam     & \mlam     & \mlam     & \mlam     & \mlam     & \mlam     & \mlam     & \mlam     & \mlam     & \mlam     & \mlam     \\
    \mlam     & \mlam     & \mlam     & \mlam     & \mlam     & \mlam     & \mlam     & \mlam     & \mlam     & \mlam     & \mlam     \\
    \blam     & \mlam     & \blam     & \mlam     & \mlam     & \mlam     & \mlam     & \mlam     & \mlam     & \mlam     & \mlam     \\
    \blam     & \blam     & \mlam     & \blam     & \mlam     & \mlam     & \mlam     & \mlam     & \mlam     & \mlam     & \mlam     \\
    \mlam     & \blam     & \blam     & \mlam     & \blam     & \mlam     & \mlam     & \mlam     & \mlam     & \mlam     & \mlam     \\
    \mlam     & \mlam     & \blam     & \blam     & \mlam     & \blam     & \mlam     & \mlam     & \mlam     & \mlam     & \mlam     \\
    \mlam     & \mlam     & \mlam     & \blam     & \blam     & \mlam     & \blam     & \mlam     & \mlam     & \mlam     & \mlam     \\
    \mlam     & \mlam     & \mlam     & \mlam     & \blam     & \blam     & \mlam     & \blam     & \mlam     & \mlam     & \mlam     \\
    \mlam     & \mlam     & \mlam     & \mlam     & \mlam     & \blam     & \blam     & \mlam     & \blam     & \mlam     & \mlam     \\
    \mlam     & \mlam     & \mlam     & \mlam     & \mlam     & \mlam     & \blam     & \blam     & \mlam     & \blam     & \mlam     \\
    \mlam     & \mlam     & \mlam     & \mlam     & \mlam     & \mlam     & \mlam     & \blam     & \blam     & \mlam     & \blam     \\
\end{pNiceMatrix}
}

\newcommand{\calAoneAny}{%
\begin{pNiceMatrix}[name=calAoneAny]
    \grayBlock{1}  &  \gzero &  \gzero &  \gzero &  \gzero &  \gzero &  \gzero &  \gzero &  \gzero & \gzero \\ 
    \grayBlockk{\nicefrac{1}{2}} & \grayBlockk{\nicefrac{1}{2}} & \gzero & \gzero & \gzero & \gzero & \gzero & \gzero & \gzero & \gzero\\
    \grayBlockkk{\nicefrac{1}{3}} & \grayBlockkk{\nicefrac{1}{3}} & \grayBlockkk{\nicefrac{1}{3}} & \gzero & \gzero & \gzero & \gzero & \gzero & \gzero & \gzero\\
    \blueBlock{4} & \gzero & \orangeBlock{4} & \gzero & \gzero & \gzero & \gzero & \gzero & \gzero & \gzero\\
    \gzero & \blueBlock{5} & \gzero & \orangeBlock{5} & \gzero & \gzero & \gzero & \gzero & \gzero & \gzero\\
    \gzero & \gzero & \blueBlock{6} & \gzero & \orangeBlock{6} & \gzero & \gzero & \gzero & \gzero & \gzero\\
    \gzero & \gzero & \gzero & \blueBlock{7} & \gzero & \orangeBlock{7} & \gzero & \gzero & \gzero & \gzero\\
    \gzero & \gzero & \gzero & \gzero & \blueBlock{8} & \gzero & \orangeBlock{8} & \gzero & \gzero & \gzero\\
    \gzero & \gzero & \gzero & \gzero & \gzero & \blueBlock{9} & \gzero & \orangeBlock{9} & \gzero & \gzero\\
    \gzero & \gzero & \gzero & \gzero & \gzero & \gzero & \blueBlock{1\gzero} & \gzero & \orangeBlock{1\gzero} & \gzero\\
\end{pNiceMatrix}
}

\newcommand{\hathOneAny}{%
\begin{pNiceMatrix}[name=hathOneAny]
    \verticall{11}  \grayBlock{\tilde{s}_1} & \verticall{11}  \grayBlockk{\tilde{s}_2} & \verticall{11}  \grayBlockkk{\tilde{s}_3} & \verticall{11} \grayBlock{\tilde{s}_4} & \verticall{11} \grayBlockkk{\tilde{s}_5} & \verticall{11} \grayBlock{\tilde{s}_6}& \verticall{11} \grayBlockk{\tilde{s}_7}& \verticall{11} \grayBlock{\tilde{s}_8}& \verticall{11} \grayBlockk{\tilde{s}_9}& \grayBlockkk{\tilde{s}_{10}}\\
    \grayBlock{1} & \grayBlockk{\nicefrac{1}{2}} & \grayBlockkk{\nicefrac{1}{3}} & \blueBlock{4} & 0 & 0 & 0 & 0 & 0 & 0 \\
    0 & \grayBlockk{\nicefrac{1}{2}} & \grayBlockkk{\nicefrac{1}{3}} & 0 & \blueBlock{5} & 0 & 0 & 0 & 0 & 0 \\
    0 & 0 & \grayBlockkk{\nicefrac{1}{3}} & \orangeBlock{4} & 0 & \blueBlock{6} & 0 & 0 & 0 & 0 \\
    0 & 0 & 0 & 0 & \orangeBlock{5} & 0 & \blueBlock{7} & 0 & 0 & 0 \\
    0 & 0 & 0 & 0 & 0 & \orangeBlock{6} & 0 & \blueBlock{8} & 0 & 0 \\
    0 & 0 & 0 & 0 & 0 & 0 & \orangeBlock{7} & 0 & \blueBlock{9} & 0 \\
    0 & 0 & 0 & 0 & 0 & 0 & 0 & \orangeBlock{8} & 0 & \blueBlock{10} \\
    0 & 0 & 0 & 0 & 0 & 0 & 0 & 0 & \orangeBlock{9} & 0 \\
    0 & 0 & 0 & 0 & 0 & 0 & 0 & 0 & 0 & \orangeBlock{10} \\
    0 & 0 & 0 & 0 & 0 & 0 & 0 & 0 & 0 & 0 \\
\end{pNiceMatrix}
}

\newcommand{\calAoneAAny}{%
\begin{pNiceMatrix}[name=calAoneAAny]
    \grayBlock{1}  &  \gzero &  \gzero &  \gzero &  \gzero &  \gzero &  \gzero &  \gzero &  \gzero &  \gzero &  \gzero & \gzero \\
    \grayBlockk{\nicefrac{1}{2}} & \grayBlockk{\nicefrac{1}{2}} & \gzero & \gzero & \gzero & \gzero & \gzero & \gzero & \gzero & \gzero & \gzero & \gzero\\
    \grayBlockkk{\nicefrac{1}{3}} & \grayBlockkk{\nicefrac{1}{3}} & \grayBlockkk{\nicefrac{1}{3}} & \gzero & \gzero & \gzero & \gzero & \gzero & \gzero & \gzero & \gzero & \gzero \\
    \grayBlockkk{\nicefrac{1}{4}} & \grayBlockkk{\nicefrac{1}{4}} & \grayBlockkk{\nicefrac{1}{4}} & \grayBlockkk{\nicefrac{1}{4}} & \gzero & \gzero & \gzero & \gzero & \gzero & \gzero & \gzero & \gzero\\
    \greenBlock{5} & \blueBlock{5} & \gzero & \orangeBlock{5} & \gzero & \gzero & \gzero & \gzero & \gzero & \gzero & \gzero & \gzero\\
    \gzero & \greenBlock{6} & \blueBlock{6} & \gzero & \orangeBlock{6} & \gzero & \gzero & \gzero & \gzero & \gzero & \gzero & \gzero\\
    \gzero & \gzero & \greenBlock{7} & \blueBlock{7} & \gzero & \orangeBlock{7} & \gzero & \gzero & \gzero & \gzero & \gzero & \gzero\\
    \gzero & \gzero & \gzero & \greenBlock{8} & \blueBlock{8} & \gzero & \orangeBlock{8} & \gzero & \gzero & \gzero & \gzero & \gzero\\
    \gzero & \gzero & \gzero & \gzero & \greenBlock{9} & \blueBlock{9} & \gzero & \orangeBlock{9} & \gzero & \gzero & \gzero & \gzero\\
    \gzero & \gzero & \gzero & \gzero & \gzero & \greenBlock{1\gzero} & \blueBlock{1\gzero} & \gzero & \orangeBlock{1\gzero} & \gzero & \gzero & \gzero\\
    \gzero & \gzero & \gzero & \gzero & \gzero & \gzero & \greenBlock{11} & \blueBlock{11} & \gzero & \orangeBlock{11}  & \gzero & \gzero\\
    \gzero & \gzero & \gzero & \gzero & \gzero & \gzero & \gzero & \greenBlock{12} & \blueBlock{11} & \gzero & \orangeBlock{11} & \gzero\\
\end{pNiceMatrix}
}

\newcommand{\hathOneAAny}{%
\begin{pNiceMatrix}[name=hathOneAAny]
    \verticall{13}  \grayBlock{\tilde{s}_1} & \verticall{13}  \grayBlockk{\tilde{s}_2} & \verticall{13}  \grayBlockkk{\tilde{s}_3} & \verticall{13} \grayBlock{\tilde{s}_4} & \verticall{13} \grayBlockkk{\tilde{s}_5} & \verticall{13} \grayBlock{\tilde{s}_6}& \verticall{13} \grayBlockk{\tilde{s}_7}& \verticall{13} \grayBlock{\tilde{s}_8}& \verticall{13} \grayBlockk{\tilde{s}_9}& \verticall{13} \grayBlockkk{\tilde{s}_{10}} & \verticall{13} \grayBlockk{\tilde{s}_{11}}& \grayBlockkk{\tilde{s}_{12}}\\
    \grayBlock{1} & \grayBlockk{\nicefrac{1}{2}} & \grayBlockkk{\nicefrac{1}{3}} & \grayBlockkk{\nicefrac{1}{4}} & \greenBlock{5}  & 0 & 0 & 0 & 0 & 0 \\
    0 & \grayBlockk{\nicefrac{1}{2}} & \grayBlockkk{\nicefrac{1}{3}} & \grayBlockkk{\nicefrac{1}{4}} & \blueBlock{5} & \greenBlock{6} & 0 & 0 & 0 & 0 & 0 & 0 \\
    0 & 0 & \grayBlockkk{\nicefrac{1}{3}} & \grayBlockkk{\nicefrac{1}{4}} & 0 & \blueBlock{6} & \greenBlock{7} & 0 & 0 & 0 & 0 & 0 \\
    0 & 0 & 0 & \grayBlockkk{\nicefrac{1}{4}} & \orangeBlock{5} & 0 & \blueBlock{7} & \greenBlock{8} & 0 & 0 & 0 & 0 \\
    0 & 0 & 0 & 0 & 0 & \orangeBlock{6} & 0 & \blueBlock{8} & \greenBlock{9} & 0 & 0 & 0 \\
    0 & 0 & 0 & 0 & 0 & 0 & \orangeBlock{7} & 0 & \blueBlock{9} & \greenBlock{10} & 0 & 0 \\
    0 & 0 & 0 & 0 & 0 & 0 & 0 & \orangeBlock{8} & 0 & \blueBlock{10} & \greenBlock{11} & 0 \\
    0 & 0 & 0 & 0 & 0 & 0 & 0 & 0 & \orangeBlock{9} & 0 & \blueBlock{11} & \greenBlock{12} \\
    0 & 0 & 0 & 0 & 0 & 0 & 0 & 0 & 0 & \orangeBlock{10} & 0 & \blueBlock{12} \\
    0 & 0 & 0 & 0 & 0 & 0 & 0 & 0 & 0 & 0 & \orangeBlock{11} & 0 \\
    0 & 0 & 0 & 0 & 0 & 0 & 0 & 0 & 0 & 0 & 0 & \orangeBlock{12} \\
    0 & 0 & 0 & 0 & 0 & 0 & 0 & 0 & 0 & 0 & 0 & 0 \\
\end{pNiceMatrix}
}

\newcommand{\Atwooneany}{%
\begin{pNiceMatrix}[name=Atwooneany]
    \mlam & \mlam & \mlam & \mlam & \mlam & \mlam & \mlam & \mlam & \mlam & \mlam \\
    \mlam      & \mlam & \mlam & \mlam & \mlam & \mlam & \mlam & \mlam & \mlam & \mlam \\
    \mlam      & \mlam & \mlam & \mlam & \mlam & \mlam & \mlam & \mlam & \mlam & \mlam \\
    \mlam      & \mlam & \mlam & \rlam & \mlam & \mlam & \mlam & \mlam & \mlam & \mlam \\
    \mlam      & \mlam & \mlam & \rlam & \rlam & \mlam & \mlam & \mlam & \mlam & \mlam \\
    \mlam      & \mlam & \mlam & \mlam & \rlam & \rlam & \mlam & \mlam & \mlam & \mlam \\
    \mlam      & \mlam & \mlam & \mlam & \mlam & \rlam & \rlam & \mlam & \mlam & \mlam \\
    \mlam      & \mlam & \mlam & \rlam & \mlam & \mlam & \rlam & \rlam & \mlam & \mlam \\
    \mlam      & \mlam & \mlam & \rlam & \rlam & \mlam & \mlam & \rlam & \rlam & \mlam \\
    \mlam      & \mlam & \mlam & \mlam & \rlam & \rlam & \mlam & \mlam & \rlam & \rlam \\
\end{pNiceMatrix}
}
\newcommand{\Atwotwoany}{%
\begin{pNiceMatrix}[name=Atwotwoany]
    \mlam & \mlam & \mlam & \mlam & \mlam & \mlam & \mlam & \mlam & \mlam & \mlam \\
    \mlam      & \mlam & \mlam & \mlam & \mlam & \mlam & \mlam & \mlam & \mlam & \mlam \\
    \mlam      & \mlam & \mlam & \mlam & \mlam & \mlam & \mlam & \mlam & \mlam & \mlam \\
    \mlam      & \mlam & \mlam & \mlam & \mlam & \mlam & \mlam & \mlam & \mlam & \mlam \\
    \mlam      & \mlam & \mlam & \mlam & \mlam & \mlam & \mlam & \mlam & \mlam & \mlam \\
    \mlam      & \mlam & \mlam & \grlam & \mlam & \mlam & \mlam & \mlam & \mlam & \mlam \\
    \mlam      & \mlam & \mlam & \grlam & \grlam & \mlam & \mlam & \mlam & \mlam & \mlam \\
    \mlam      & \mlam & \mlam & \mlam & \grlam & \grlam & \mlam & \mlam & \mlam & \mlam \\
    \mlam      & \mlam & \mlam & \mlam & \mlam & \grlam & \grlam & \mlam & \mlam & \mlam \\
    \mlam      & \mlam & \mlam & \grlam & \mlam & \mlam & \grlam & \grlam & \mlam & \mlam \\
\end{pNiceMatrix}
}

\newcommand{\Atwooneaany}{%
\begin{pNiceMatrix}[name=Atwooneaany]
    \mlam     & \mlam     & \mlam     & \mlam     & \mlam     & \mlam     & \mlam     & \mlam     & \mlam     & \mlam     & \mlam     & \mlam \\
    \mlam     & \mlam     & \mlam     & \mlam     & \mlam     & \mlam     & \mlam     & \mlam     & \mlam     & \mlam     & \mlam     & \mlam \\
    \mlam     & \mlam     & \mlam     & \mlam     & \mlam     & \mlam     & \mlam     & \mlam     & \mlam     & \mlam     & \mlam     & \mlam \\
    \mlam     & \mlam     & \mlam     & \mlam     & \mlam     & \mlam     & \mlam     & \mlam     & \mlam     & \mlam     & \mlam     & \mlam \\
    \mlam     & \mlam     & \mlam     & \mlam     & \rlam     & \mlam     & \mlam     & \mlam     & \mlam     & \mlam     & \mlam     & \mlam \\
    \mlam     & \mlam     & \mlam     & \mlam     & \mlam     & \rlam     & \mlam     & \mlam     & \mlam     & \mlam     & \mlam     & \mlam \\
    \mlam     & \mlam     & \mlam     & \mlam     & \mlam     & \mlam     & \rlam     & \mlam     & \mlam     & \mlam     & \mlam     & \mlam \\
    \mlam     & \mlam     & \mlam     & \mlam     & \mlam     & \mlam     & \mlam     & \rlam     & \mlam     & \mlam     & \mlam     & \mlam \\
    \mlam     & \mlam     & \mlam     & \mlam     & \rlam     & \mlam     & \mlam     & \mlam     & \rlam     & \mlam     & \mlam     & \mlam \\
    \mlam     & \mlam     & \mlam     & \mlam     & \mlam     & \rlam     & \mlam     & \mlam     & \mlam     & \rlam     & \mlam     & \mlam \\
    \mlam     & \mlam     & \mlam     & \mlam     & \mlam     & \mlam     & \rlam     & \mlam     & \mlam     & \mlam     & \rlam     & \mlam \\
    \mlam     & \mlam     & \mlam     & \mlam     & \mlam     & \mlam     & \mlam     & \rlam     & \mlam     & \mlam     & \mlam     & \rlam \\
\end{pNiceMatrix}
}

\newcommand{\Atwotwoaany}{%
\begin{pNiceMatrix}[name=Atwotwoaany]
    \mlam     & \mlam     & \mlam     & \mlam     & \mlam     & \mlam     & \mlam     & \mlam     & \mlam     & \mlam     & \mlam     & \mlam \\
    \mlam     & \mlam     & \mlam     & \mlam     & \mlam     & \mlam     & \mlam     & \mlam     & \mlam     & \mlam     & \mlam     & \mlam \\
    \mlam     & \mlam     & \mlam     & \mlam     & \mlam     & \mlam     & \mlam     & \mlam     & \mlam     & \mlam     & \mlam     & \mlam \\
    \mlam     & \mlam     & \mlam     & \mlam     & \mlam     & \mlam     & \mlam     & \mlam     & \mlam     & \mlam     & \mlam     & \mlam \\
    \mlam     & \mlam     & \mlam     & \mlam     & \mlam     & \mlam     & \mlam     & \mlam     & \mlam     & \mlam     & \mlam     & \mlam \\
    \mlam     & \mlam     & \mlam     & \mlam     & \grlam    & \mlam     & \mlam     & \mlam     & \mlam     & \mlam     & \mlam     & \mlam \\
    \mlam     & \mlam     & \mlam     & \mlam     & \mlam     & \grlam    & \mlam     & \mlam     & \mlam     & \mlam     & \mlam     & \mlam \\
    \mlam     & \mlam     & \mlam     & \mlam     & \mlam     & \mlam     & \grlam    & \mlam     & \mlam     & \mlam     & \mlam     & \mlam \\
    \mlam     & \mlam     & \mlam     & \mlam     & \mlam     & \mlam     & \mlam     & \grlam    & \mlam     & \mlam     & \mlam     & \mlam \\
    \mlam     & \mlam     & \mlam     & \mlam     & \grlam    & \mlam     & \mlam     & \mlam     & \grlam    & \mlam     & \mlam     & \mlam \\
    \mlam     & \mlam     & \mlam     & \mlam     & \mlam     & \grlam    & \mlam     & \mlam     & \mlam     & \grlam    & \mlam     & \mlam \\
    \mlam     & \mlam     & \mlam     & \mlam     & \mlam     & \mlam     & \grlam    & \mlam     & \mlam     & \mlam     & \grlam    & \mlam \\
    \mlam     & \mlam     & \mlam     & \mlam     & \mlam     & \mlam     & \mlam     & \grlam    & \mlam     & \mlam     & \mlam     & \grlam \\
\end{pNiceMatrix}
}

\newcommand{\Atwothraany}{%
\begin{pNiceMatrix}[name=Atwothraany]
    \mlam     & \mlam     & \mlam     & \mlam     & \mlam     & \mlam     & \mlam     & \mlam     & \mlam     & \mlam     & \mlam     & \mlam \\
    \mlam     & \mlam     & \mlam     & \mlam     & \mlam     & \mlam     & \mlam     & \mlam     & \mlam     & \mlam     & \mlam     & \mlam \\
    \mlam     & \mlam     & \mlam     & \mlam     & \mlam     & \mlam     & \mlam     & \mlam     & \mlam     & \mlam     & \mlam     & \mlam \\
    \mlam     & \mlam     & \mlam     & \mlam     & \mlam     & \mlam     & \mlam     & \mlam     & \mlam     & \mlam     & \mlam     & \mlam \\
    \mlam     & \mlam     & \mlam     & \mlam     & \mlam     & \mlam     & \mlam     & \mlam     & \mlam     & \mlam     & \mlam     & \mlam \\
    \mlam     & \mlam     & \mlam     & \mlam     & \mlam     & \mlam     & \mlam     & \mlam     & \mlam     & \mlam     & \mlam     & \mlam \\
    \mlam     & \mlam     & \mlam     & \mlam     & \bblam    & \mlam     & \mlam     & \mlam     & \mlam     & \mlam     & \mlam     & \mlam \\
    \mlam     & \mlam     & \mlam     & \mlam     & \mlam     & \bblam    & \mlam     & \mlam     & \mlam     & \mlam     & \mlam     & \mlam \\
    \mlam     & \mlam     & \mlam     & \mlam     & \mlam     & \mlam     & \bblam    & \mlam     & \mlam     & \mlam     & \mlam     & \mlam \\
    \mlam     & \mlam     & \mlam     & \mlam     & \mlam     & \mlam     & \mlam     & \bblam    & \mlam     & \mlam     & \mlam     & \mlam \\
    \mlam     & \mlam     & \mlam     & \mlam     & \bblam    & \mlam     & \mlam     & \mlam     & \bblam    & \mlam     & \mlam     & \mlam \\
    \mlam     & \mlam     & \mlam     & \mlam     & \mlam     & \bblam    & \mlam     & \mlam     & \mlam     & \bblam    & \mlam     & \mlam \\
\end{pNiceMatrix}
}

\newcommand{\Atwofouraany}{%
\begin{pNiceMatrix}[name=Atwofouraany]
    \mlam     & \mlam     & \mlam     & \mlam     & \mlam     & \mlam     & \mlam     & \mlam     & \mlam     & \mlam     & \mlam     & \mlam \\
    \mlam     & \mlam     & \mlam     & \mlam     & \mlam     & \mlam     & \mlam     & \mlam     & \mlam     & \mlam     & \mlam     & \mlam \\
    \mlam     & \mlam     & \mlam     & \mlam     & \mlam     & \mlam     & \mlam     & \mlam     & \mlam     & \mlam     & \mlam     & \mlam \\
    \mlam     & \mlam     & \mlam     & \mlam     & \mlam     & \mlam     & \mlam     & \mlam     & \mlam     & \mlam     & \mlam     & \mlam \\
    \mlam     & \mlam     & \mlam     & \mlam     & \mlam     & \mlam     & \mlam     & \mlam     & \mlam     & \mlam     & \mlam     & \mlam \\
    \mlam     & \mlam     & \mlam     & \mlam     & \mlam     & \mlam     & \mlam     & \mlam     & \mlam     & \mlam     & \mlam     & \mlam \\
    \mlam     & \mlam     & \mlam     & \mlam     & \mlam     & \mlam     & \mlam     & \mlam     & \mlam     & \mlam     & \mlam     & \mlam \\
    \mlam     & \mlam     & \mlam     & \mlam     & \plam     & \mlam     & \mlam     & \mlam     & \mlam     & \mlam     & \mlam     & \mlam \\
    \mlam     & \mlam     & \mlam     & \mlam     & \mlam     & \plam     & \mlam     & \mlam     & \mlam     & \mlam     & \mlam     & \mlam \\
    \mlam     & \mlam     & \mlam     & \mlam     & \mlam     & \mlam     & \plam     & \mlam     & \mlam     & \mlam     & \mlam     & \mlam \\
    \mlam     & \mlam     & \mlam     & \mlam     & \mlam     & \mlam     & \mlam     & \plam     & \mlam     & \mlam     & \mlam     & \mlam \\
    \mlam     & \mlam     & \mlam     & \mlam     & \mlam     & \mlam     & \mlam     & \mlam     & \plam     & \mlam     & \mlam     & \mlam \\
\end{pNiceMatrix}
}

\newcommand{\AThreeany}{%
\begin{pNiceMatrix}[name=AThreeany]
    \mlam & \mlam & \mlam & \mlam & \mlam & \mlam & \mlam & \mlam & \mlam & \mlam \\
    \mlam & \mlam & \mlam & \mlam & \mlam & \mlam & \mlam & \mlam & \mlam & \mlam \\
    \plam & \mlam & \plam & \mlam & \mlam & \mlam & \mlam & \mlam & \mlam & \mlam \\
    \mlam & \plam & \mlam & \plam & \mlam & \mlam & \mlam & \mlam & \mlam & \mlam \\
    \mlam & \mlam & \plam & \mlam & \plam & \mlam & \mlam & \mlam & \mlam & \mlam \\
    \mlam & \mlam & \mlam & \plam & \mlam & \plam & \mlam & \mlam & \mlam & \mlam \\
    \mlam & \mlam & \mlam & \mlam & \plam & \mlam & \plam & \mlam & \mlam & \mlam \\
    \mlam & \mlam & \mlam & \mlam & \mlam & \plam & \mlam & \plam & \mlam & \mlam \\
    \mlam & \mlam & \mlam & \mlam & \mlam & \mlam & \plam & \mlam & \plam & \mlam \\
    \mlam & \mlam & \mlam & \mlam & \mlam & \mlam & \mlam & \plam & \mlam & \plam \\
\end{pNiceMatrix}
}
\newcommand{\AThreeaany}{%
\begin{pNiceMatrix}[name=AThreeaany]
    \mlam     & \mlam     & \mlam     & \mlam     & \mlam     & \mlam     & \mlam     & \mlam     & \mlam     & \mlam     & \mlam     & \mlam     \\
    \mlam     & \mlam     & \mlam     & \mlam     & \mlam     & \mlam     & \mlam     & \mlam     & \mlam     & \mlam     & \mlam     & \mlam     \\
    \mlam     & \mlam     & \mlam     & \mlam     & \mlam     & \mlam     & \mlam     & \mlam     & \mlam     & \mlam     & \mlam     & \mlam     \\
    \plam     & \plam     & \mlam     & \plam     & \mlam     & \mlam     & \mlam     & \mlam     & \mlam     & \mlam     & \mlam     & \mlam     \\
    \mlam     & \plam     & \plam     & \mlam     & \plam     & \mlam     & \mlam     & \mlam     & \mlam     & \mlam     & \mlam     & \mlam     \\
    \mlam     & \mlam     & \plam     & \plam     & \mlam     & \plam     & \mlam     & \mlam     & \mlam     & \mlam     & \mlam     & \mlam     \\
    \mlam     & \mlam     & \mlam     & \plam     & \plam     & \mlam     & \plam     & \mlam     & \mlam     & \mlam     & \mlam     & \mlam     \\
    \mlam     & \mlam     & \mlam     & \mlam     & \plam     & \plam     & \mlam     & \plam     & \mlam     & \mlam     & \mlam     & \mlam     \\
    \mlam     & \mlam     & \mlam     & \mlam     & \mlam     & \plam     & \plam     & \mlam     & \plam     & \mlam     & \mlam     & \mlam     \\
    \mlam     & \mlam     & \mlam     & \mlam     & \mlam     & \mlam     & \plam     & \plam     & \mlam     & \plam     & \mlam     & \mlam     \\
    \mlam     & \mlam     & \mlam     & \mlam     & \mlam     & \mlam     & \mlam     & \plam     & \plam     & \mlam     & \plam     & \mlam     \\
\end{pNiceMatrix}
}

\newcommand{\BOneany}{%
\begin{pNiceMatrix}[name=BOneany]
   \gzero &   \tealBlock{1} &   \tealBlock{1} &  \gzero   &  \gzero  &   \tealBlock{1} &   \tealBlock{1} &   \gzero  &   \gzero  &   \tealBlock{1} \\
  \gzero &  \gzero  &  \tealBlock{1} &  \tealBlock{1} &  \gzero  & \gzero   &  \tealBlock{1} &  \tealBlock{1} &  \gzero  &  \gzero  \\
  \gzero &  \gzero  &  \gzero  &  \tealBlock{1} &  \tealBlock{1} &  \gzero  &  \gzero  &  \tealBlock{1} &  \tealBlock{1} &   \gzero  \\
 \gzero  &  \gzero  & \gzero   & \gzero   &  \tealBlock{1} &  \tealBlock{1} &  \gzero  &  \gzero  &  \tealBlock{1} &  \tealBlock{1}  \\
 \gzero  &  \gzero  & \gzero   &  \gzero  &  \gzero  &  \tealBlock{1} &  \tealBlock{1} & \gzero   & \gzero   &  \tealBlock{1}  \\
\gzero   &  \gzero  &  \gzero  &  \gzero  & \gzero   &  \gzero  &  \tealBlock{1} &  \tealBlock{1} &  \gzero  &  \gzero   \\
\gzero   &  \gzero  &  \gzero  &  \gzero  &  \gzero  &  \gzero  &  \gzero  &  \tealBlock{1} &  \tealBlock{1} &  \gzero   \\
 \gzero  &  \gzero  &  \gzero  & \gzero   &  \gzero  &  \gzero  &  \gzero  &  \gzero  &  \tealBlock{1} &  \tealBlock{1}  \\
 \gzero  &  \gzero  &  \gzero  &  \gzero  &  \gzero  &  \gzero  &  \gzero  &  \gzero  &  \gzero  &  \tealBlock{1}  \\
\gzero   &  \gzero  &  \gzero  &  \gzero  &  \gzero  & \gzero   &  \gzero  &  \gzero  & \gzero   & \gzero  \\
\end{pNiceMatrix}
}

\newcommand{\AtildethreeVany}{%
\begin{pNiceMatrix} 
& \Block[fill=mpale!40,rounded-corners]{2-2}{} 0 & 0 & \Block[borders={left,bottom,tikz={dashed}}]{2-2}{0} && \Block[borders={left,bottom,tikz={dashed}}]{2-2}{0} && \Block[fill=teal!40,rounded-corners]{2-2}{} 0&0 & \\ 
& 0 & A^{(3)}&&&&&0&B^{(3)} & \\ 
&  \Block[borders={top,tikz={dashed}}]{2-2}{0} && \Block[borders={left,top,tikz={dashed}}]{2-2}{0} && \Block[borders={left,top,right,tikz={dashed}}]{2-2}{0}  && \Block[borders={top,bottom,tikz={dashed}}]{2-2}{0} && \Block[borders={top,bottom,tikz={dashed}}]{2-1}{0} \\ 
&&&&&&&&&\\ 
& \Block[borders={top,tikz={dashed}}]{2-2}{0} && \Block[borders={left,top,tikz={dashed}}]{2-2}{0} && \Block[borders={left,top,right,tikz={dashed}}]{2-2}{0}  && \Block[borders={top,bottom,tikz={dashed}}]{2-2}{0} && \Block[borders={top,bottom,tikz={dashed}}]{2-1}{0} \\ 
&&&&&&&&&\\ 
& \Block[borders={top,tikz={dashed}}]{2-2}{0} && \Block[borders={left,top,tikz={dashed}}]{2-2}{0} && \Block[borders={left,top,right,tikz={dashed}}]{2-2}{0}  && \Block[borders={top,tikz={dashed}}]{2-2}{0} && \Block[borders={top,tikz={dashed}}]{2-1}{0} \\ 
&&&&&&&&&\\ 
\end{pNiceMatrix}
}

\newcommand{\BOneTwo}{%
\begin{pNiceMatrix}[name=BOneTwo]
 \gzero &  \gzero &  \gzero &  \gzero &  \gzero &  \gzero &  \gzero &  \gzero &  \gzero &  \gzero \\
\tealBlock{1} &  \gzero &  \gzero &  \gzero &  \gzero &  \gzero &  \gzero &  \gzero &  \gzero &  \gzero \\
\tealBlock{1} & \tealBlock{1} &  \gzero &  \gzero &  \gzero &  \gzero &  \gzero &  \gzero &  \gzero &  \gzero \\
\mredBlock{-1}  & \tealBlock{1} & \tealBlock{1} &  \gzero &  \gzero &  \gzero &  \gzero &  \gzero &  \gzero &  \gzero \\
\mredBlock{-1}  &\mredBlock{-1} & \tealBlock{1} & \tealBlock{1} &  \gzero &  \gzero &  \gzero &  \gzero &  \gzero &  \gzero \\
\tealBlock{1} &\mredBlock{-1} &\mredBlock{-1} & \tealBlock{1} & \tealBlock{1} &  \gzero &  \gzero &  \gzero &  \gzero &  \gzero \\
\tealBlock{1} & \tealBlock{1} &\mredBlock{-1} &\mredBlock{-1} & \tealBlock{1} & \tealBlock{1} &  \gzero &  \gzero &  \gzero &  \gzero \\
\mredBlock{-1}  & \tealBlock{1} & \tealBlock{1} &\mredBlock{-1} &\mredBlock{-1} & \tealBlock{1} & \tealBlock{1} &  \gzero &  \gzero &  \gzero \\
\mredBlock{-1}  &\mredBlock{-1} & \tealBlock{1} & \tealBlock{1} &\mredBlock{-1} &\mredBlock{-1} & \tealBlock{1} & \tealBlock{1} &  \gzero &  \gzero \\
\tealBlock{1} &\mredBlock{-1} &\mredBlock{-1} & \tealBlock{1} & \tealBlock{1} &\mredBlock{-1} &\mredBlock{-1} & \tealBlock{1} & \tealBlock{1} &  \gzero \\
\end{pNiceMatrix}
}

\section{Alternative Third layer construction using positional embedding}
\label{App:alternative_third}
In this section, we illustrate an alternative but equivalent construction that implements the same predictor as in Proposition \ref{prop:transformer_construction}. The first and second layers remain identical, the only difference is in the third layer which implements the selective sum of the normalized transition probabilities. This selection mechanism is implemented through the combination of multiple blocks within the third attention matrix, $\tilde{A}^{(3)}$, which, in this alternative construction is structured as follows:
\begin{equation}
    \tilde{A}^{(3)} = 
    \begin{adjustbox}{angle=0,origin=c,scale=0.7}
    $\AtildethreeV$
    \end{adjustbox}
    \label{eqn:Atilde_3_alt}
\end{equation}
We can notice how, compared to the construction in Section~\ref{sec:theoretical_transformer}, the blocks $B^{(3,1)}, \dots, B^{(3,H_2)}$ are now positioned all in the first column. Moreover, they are not parameterized by the same matrix contrary to the other construction. The matrix $A^{(3)}$ acts on the positional embedding of the input similarly to the matrix $A^{(1)}$ in the first layer as in the previous construction: 
\begin{equation*}
        A^{(3)}_{ij} = \lambda_1 \begin{cases}
        +1 \quad \text{if} \quad j-i+1 \in \mathcal{K} \\ 
        -1 \quad \text{if} \quad j-i+1 \not\in \mathcal{K} \\ 
    \end{cases} 
    A^{(3)} = \begin{adjustbox}{angle=0,origin=c,scale=0.65}
    $\AThree$
    \end{adjustbox}
\end{equation*}
This ensures that the only non-zero entries after softmax will be the ones on the diagonals corresponding to the lags seen during training. 
The matrices $B^{(3,1)}, \dots, B^{(3,H_2)}$ are again responsible for the summation; each matrix operates on the output of a corresponding head in the second layer. To understand how this selective sum is implemented, let us consider the output of the first head in the second layer $h^{(2)} = [[e_{s_i},e_i],\hat{h}^{(1)}_i,\hat{h}^{(2,1)}_i,\dots,\hat{h}^{(2,H_2)}_i]$ in our example for the tokens 8,9 and 10:
\begin{equation*} 
 \begin{adjustbox}{angle=0,origin=c,scale=0.65} $
\begin{NiceMatrix}[] \hat{h}^{(2,1)}_{10} = & \Block[fill=mred!35 ,rounded-corners]{1-1}{}\nicefrac{1}{3} & \cdot  {\Big(} & \grayBlockk{ \scalebox{0.8}{$\sum\limits_{i= 4,7,10}$} e_{s_i} } & 0 & 0 & 0 & \oneBlock & 0 & 0 & \oneBlock & 0 & 0 & \oneBlock & \grayBlockk{\scalebox{0.8}{$\sum\limits_{i= 4,7,10}$} \tilde{s}_i } & \blueBlock{4} & \redBlock{4} & \orangeBlock{4} & \blueBlock{7} & \redBlock{7} & \orangeBlock{7} & \blueBlock{10} & \redBlock{10} & \orangeBlock{10} & 0 {\Big)}\\[-3mm]
\\ 
\hat{h}^{(2,1)}_9 = & \Block[fill=mred!35 ,rounded-corners]{1-1}{}\nicefrac{1}{2} & \cdot {\Big(} & \grayBlockk{e_{s_6} + e_{s_9}}  & 0 & 0 & 0 & 0 & 0 & \oneBlock & 0 & 0 & \oneBlock & 0 & \grayBlockk{\tilde{s}_6 + \tilde{s}_9} & 0 & 0 & \blueBlock{6} & \redBlock{6} & \orangeBlock{6} & \blueBlock{9} & \redBlock{9} & \orangeBlock{9} & 0 & 0 {\Big)}\\[-3mm]
\\ %
\hat{h}^{(2,1)}_8 = & \Block[fill=mred!35 ,rounded-corners]{1-1}{}\nicefrac{1}{2} & \cdot  {\Big(} & \grayBlockk{e_{s_5} + e_{s_8}} & 0 & 0 & 0 & 0 & \oneBlock & 0 & 0 & \oneBlock & 0 & 0 & \grayBlockk{\tilde{s}_5 + \tilde{s}_8} & 0 & \blueBlock{5} & \redBlock{5} & \orangeBlock{5} & \blueBlock{8} & \redBlock{8} & \orangeBlock{8} & 0 & 0 & 0 {\Big)} \\ %
  & & & & & \Block[]{1-8}{\underbrace{\phantom{\hspace{5.1cm}}}_{\textstyle \hat{m}^{(2,1)}_{8}}} &&&&&&&&&&&
  \Block[]{1-8}{\underbrace{\phantom{\hspace{8.8cm}}}_{%
  \textstyle \hat{p}^{(2,1)}_{8}  %
  }} &&&&&&&&
\end{NiceMatrix}$
\end{adjustbox}
\end{equation*}
 and define $\hat{p}^{(2,h)}_i \in \mathbb{R}^T$ as the block of $\hat{h}^{(2,h)}_i$ which contains the normalized transition probabilities. %
 By the structure in \eqref{eqn:Atilde_3_alt} we can see how, when computing the attention, the matrices $B^{(3,h)}$ act on these two blocks:
\begin{equation*} 
h^{(2)\top}_i \tilde{A}^{(3)} h^{(2)}_j  = \sum_{h=1}^K p_i^{(2,h)\top} B^{(3,h)} e_j + e_iA^{(3)}e_j
\end{equation*}
Here we notice how the difference compared to the construction in Section~\ref{sec:theoretical_transformer} lies in the fact that, due to the position, we are not using the copy of the attention to construct the boolean vector but directly the one-hot encoding of the position. 
Each operation involving $B^{(3,h)}$ is still selectively summing the transition probabilities from the corresponding head, but with a slightly different mechanism. Let us consider the product $h^{(2)\top}_i \tilde{A}^{(3)} h^{(2)}_{i-k}$ which will be the only non zero entries after softmax, and show how it only sums the transitions of lag $k$. The main idea is that $B^{(3,h)}$ a boolean matrix such that each column sums only the entries containing the transitions for one of the lags. To achieve this, each column in the matrix follows a pattern in which the entries are spaced at intervals of $K$, and the pattern shifts by one position between successive columns. This shift creates a cyclic arrangement across the columns, which repeats with frequency $K$. For each head $h$, the matrix $B^{(3,h)}$ is structured such that the product $\hat{p}^{(2,h)}_i B^{(3,h)}$ results in a vector where each element is the sum of the transitions for a given $k$.In particular, the first element of the vector corresponds to the sum of $k_{\text{min}}$, the $K$-th element corresponds to the sum of $k_{\text{max}}$, and this pattern repeats cyclically for subsequent elements. To give an example, consider the product $\hat{p}^{(2,1)}_{10} B^{(3,1)} e_8$ in \eqref{eqn:mask_example_alter}, which sums the transitions stored in $\hat{h}^{(2,1)}$. Notice the structure of $B^{(3,1)}$; the first column aligns with the transitions of lag $1$ in $\hat{p}^{(2,1)}_{10}$. Given that the index of $\hat{p}^{(2,1)}_i$ is $10$ and the index of $e_j$ is 8, the sum constructs $\mathcal{A}^{(3)}_{(10,8)}$, which is used to copy the \nth{8} token to predict the \nth{11} if the lag of the sequence is $3$. Hence, $\hat{p}^{(2,1)}_{10} B^{(3,1)} e_8$ has to sum the transitions of lag $3$: 

\begin{align}
    \hat{p}^{(2,1)\top}_{10} B^{(3,1)} e_8 &= \frac{\beta}{3}
    \hspace{2mm}
    \begin{adjustbox}{angle=0,origin=c,scale=0.7}
    $ 
    {\begin{pNiceMatrix}[] \blueBlock{4} %
    \\ \redBlock{4} \\ \orangeBlock{4} \\ \blueBlock{7} \\ \redBlock{7} \\ \orangeBlock{7} \\ \blueBlock{10} \\ \redBlock{10} \\ \orangeBlock{10} \\ 0 %
    \end{pNiceMatrix}}^{\textstyle\top} 
    \BOne 
    \begin{pNiceMatrix}[] 0 \\ 0 \\ 0 \\ 0 \\ 0 \\ 0 \\ 0 \\ \grayBlock{1} \\ 0 \\ 0 \\ \end{pNiceMatrix}
    $ 
   \end{adjustbox} =
   \frac{\beta}{3}
    \hspace{2mm}
   \begin{adjustbox}{angle=0,origin=c,scale=0.7}
    $ 
    {\begin{pNiceMatrix}[] 
    \orangeBlock{4} & + & \orangeBlock{7} & + & \orangeBlock{10} \\ 
    \blueBlock{4} & + & \blueBlock{7} & + & \blueBlock{10}\\ 
    \redBlock{4} & + & \redBlock{7} & + & \redBlock{10} \\ 
    \orangeBlock{4} & + &  \orangeBlock{7} & + & \orangeBlock{10} \\
    \blueBlock{4} & + & \blueBlock{7} & + & \blueBlock{10}\\ 
    \redBlock{4} & + & \redBlock{7} & + & \redBlock{10} \\
    \orangeBlock{4} & + & \orangeBlock{7}  & + & \orangeBlock{10}  \\
    \blueBlock{4} & + & \blueBlock{7} & + & \blueBlock{10} \\
    \redBlock{4} & + & \redBlock{7} & + & \redBlock{10} \\
    \orangeBlock{4} & + & \orangeBlock{7} & + & \orangeBlock{10} \\
    \end{pNiceMatrix}}^{\textstyle\top}
    \begin{pNiceMatrix}[] 0 \\ 0 \\ 0 \\ 0 \\ 0 \\ 0 \\ 0 \\ \grayBlock{1} \\ 0 \\ 0 \\ \end{pNiceMatrix} 
    $ 
    \end{adjustbox} \\
    &= \frac{\beta}{3} 
    \begin{pNiceMatrix}[] 
    &\blueBlock{10} &+& \blueBlock{7} &+& \blueBlock{4}
    \end{pNiceMatrix}
    \label{eqn:mask_example_alter}
\end{align}

We can see how the operation implemented by this different parameterization is the same then in the other construction. Therefore the overall predictor remains unchanged. The additional matrices $B^{(3,h)}$, which act on the outputs of the other heads $\hat{h}^{(2,h)}$, perform the same operation by summing the transitions stored in the outputs of the respective heads. The difference in the construction of the matrix $B^{(3,h)}$ for $h \neq 1$ is that the columns are shifted by $h$ positions relative to $h = 1$. Specifically, for each $h$, the columns are shifted by $h$ positions compared to the matrix $B^{(3,1)}$.
In more generality, the matrix $B^{(3,h)}$ is constructed as follows:
\begin{equation*}
    B^{(3,h)}_{ij} =
    \beta \begin{cases}
    +1, & \text{if} \; \left((i-j-h+1) \, \mod \, K = 0 \right) \\
    0, & \text{otherwise}
    \end{cases}
\end{equation*}
where $h$ takes into account for the shift.

\section{Construction for any set of lags}
\label{App:costr_any_order}
The construction illustrated in Section~\ref{sec:theoretical_transformer} considers only contiguous lags, i.e. set of lags that are intervals of the positive integers. However, both our interleaved Markov chain framework and the Transformer construction can be extended to any set of lags. The implemented algorithm is the same, but the structure of the weights in the different layers becomes more complex because the mechanism with which the transition probabilities are aggregated depends on the relative distance between the lags in the set. 
Due to the difficulties in finding a general formulation of the matrices involved for any set of lags as well as the optimal number of heads which depends now not only on the number of lags but on the relative distance between them, we limit this section into illustrate two example with $T=10$ and $\mathcal{K} = \{1,3\}$ and $T=12$ $\mathcal{K} = \{1,3,4\}$ for which we will visualize the matrices and operations involved. 

\subsection{Example for $\mathcal{K} = \{1,3\}$  }
\paragraph{First layer:} The structure of the first layer remains unchanged from Section~\ref{sec:theoretical_transformer}. The important difference is that now the diagonals in the matrix $A^{(1)}$ with positive entries are only \nth{2} and \nth{4}: 
\begin{align*}
\begin{split}
    &\widetilde{A}^{(1)} = \Atildeone \\ \\
    &A^{(1)}_{ij} = \begin{cases}
        + \lambda \quad \text{if} \quad j-i \in \mathcal{K} \\
        - \lambda \quad \text{if} \quad j-i \not\in \mathcal{K} \, .  
    \end{cases}
\end{split}
\begin{adjustbox}{angle=0,origin=c,scale=1} 
$\qquad A^{(1)} =$
\end{adjustbox} 
\begin{adjustbox}{angle=0,origin=c,scale=0.7} %
    $\;\; \Aoneany$ 
\end{adjustbox}    
\label{eqn:A_1}
\end{align*}
The output token at index $i$ after the first layer still corresponds to a weighted average of the past tokens $h^{(0)}_{i-k}$ for $k \in \mathcal{K}$ where the weights are given by  the normalized probabilities $\tilde{p}_{i,k}$: 

\begin{equation*}
    \hat{h}^{(1)}_i=\text{Attn}(h^{(0)}_{1:T};\tilde{A}^{(1)})_i  =  
    \left\{ \begin{array}{cl}\sum_{j=1}^i \mathds{1}\left[i-j \in \mathcal{K} \right] \frac{ P_{s_j,s_i}}{ \sum_{r \in \mathcal{K}} P_{s_r,s_i}} h^{(0)}_j  & \text{if} \; i>1 \\ 
     h_1^{(0)} & \text{if} \; i=1 \end{array} \right.
    = \left\{ \begin{array}{cl}
        \sum\limits_{k \in \mathcal{K}, k < i}  \tilde{p}_{i,k} h^{(0)}_{i-k} & \text{if} \; i>1 \\
        h_1^{(0)} & \text{if} \; i=1
    \end{array} \right.
\end{equation*}
Due to the lack of the entries on the \nth{3} diagonal, both the attention and the output token will change accordingly: 
\begin{equation}
\mathcal{A}^{(1)} = 
\begin{adjustbox}{angle=0,origin=c,scale=0.6}
     $\calAoneAny$ %
\end{adjustbox}
\;
\hat{h}^{(1)} = 
\begin{adjustbox}{angle=0,origin=c,scale=0.6}
     $\hathOneAny$ %
\end{adjustbox}
\label{eqn:attn_1_any}
\end{equation}

\paragraph{Second layer.} Similarly to the construction for contiguous lags, the second layer is responsible for aggregating the normalized transition probabilities such that they are stored in the embedding of the current vector for its entire history. The second attention needs to learn an effective way of doing a convex combination of the input tokens such that the overlap is minimized and all the transitions are stored without mixing them.  
Consider the token at $i=10$ in \eqref{eqn:attn_1_any}, summing two consecutive tokens such as $\hat{h}^{(1)}_9$ and $\hat{h}^{(2)}_{10}$ ,contrary to the contiguous case in \eqref{eqn:attn_1}, does not lead to any mixing due to the absence of transitions of lag $2$. Therefore, $2$ attention heads are still sufficient to copy all the transitions in the past as long as they learn to attend two consecutive tokens each.

Therefore, the optimal way to combine past tokens strictly depends on the number of tokens and the relative distance between them. Hence, finding a general formula for the positions at which the second attention $\mathcal{A}^{(2)}$ should be attended to minimize overlap, is challenging and beyond the scope of this work. Similar considerations apply to the optimal number of heads required, which depends on the solution of the previous problem. However, the task for arbitrary sets of lags, can always be solved by consider the correspondent contiguous problem with $\hat{k}-\min(\mathcal{K}+1)$ heads. However there are cases in which we can leverage the structure given by the distance between the lags to use fewer heads. One example is the one considered in this section with $\mathcal{K} = \{1,3\}$ we only need two heads to achieve optimal sample complexity.  The form of the matrix $\tilde{A}^{(2,h)}$ remains unchanged:
\begin{equation*}
\tilde{A}^{(2,h)} =
\begin{adjustbox}{angle=0,origin=c,scale=1.0} 
$\Atildetwo$ 
\end{adjustbox} 
\end{equation*}
Considering the case illustrated in \eqref{eqn:attn_1_any}, in order for the two heads to copy all the tokens without overlap, it is sufficient to sum two consecutive tokens and skip two. Therefore, the first attention has the pattern : $(0,0,1,1)$  while the second one $(1,1,0,0)$ as illustrated in the following: 
\begin{align}
\begin{adjustbox}{angle=0,origin=c,scale=1} 
$\, A^{(2, 1)}=$
\end{adjustbox} 
\begin{adjustbox}{angle=0,origin=c,scale=0.7}
$\Atwooneany$ 
\end{adjustbox}
\,
\mathcal{A}^{(2, 2)} = 
\begin{adjustbox}{angle=0,origin=c,scale=0.7}   
$\Atwotwoany$ 
\end{adjustbox}
\label{eqn:A_21_any}
\end{align}

where the first $\hat{k}$ rows and columns are empty because the first $\hat{k}$ elements of the sequence are sampled independently from the stationary distribution and therefore no transitions are present.

The attention computes the same operation as before:
$$\hat{h}_i^{(2)} = \text{Attn}(h^{(1)}_{1:T}; \tilde{A}^{(2,h)})_i = \sum_{j=\hat{k}}^{i} \frac{\mathds{1}\left[ A^{(2,h)}_{ij} = +\lambda \right]}{\sum_{m=1}^{i}\mathds{1}\left[ A^{(2,h)}_{im} = +\lambda \right]} h^{(1)}_j \, .$$
The output of each head is then concatenated into the residual stream. 
The structure of the third layer for the  general case of any set of lags, also needs some modifications to take into account the particular structure that was enforced in the second layer. We extend the construction introduced in Section~\ref{App:alternative_third} using the positional embeddings. First of all, the matrix $A^{(3)}$ remains unchanged compared to the previous constructions, it has positive values along the diagonals correspondent to the lags shifted by one position to take into account the fact that we are predicting the next token in the sequence:
\begin{equation}
        A^{(3)}_{ij} = \begin{cases}
        +\lambda \quad \text{if} \quad j-i+1 \in \mathcal{K} \\ 
        -\lambda \quad \text{if} \quad j-i+1 \not\in \mathcal{K} \\ 
    \end{cases} 
    A^{(3)} = \begin{adjustbox}{angle=0,origin=c,scale=0.65}
    $\AThreeany$
    \end{adjustbox}
\end{equation}

The matrix $B^{(3)}$ is responsible for the sum of the normalized transitions; each block operates on the output of a corresponding head in the second layer. To understand how, consider the following tokens in output of the first head in the second layer:

\begin{equation} 
\begin{adjustbox}{angle=0,origin=c,scale=0.7} $
\begin{NiceMatrix}[] \hat{h}^{(2)}_{10} = & \Block[fill=mred!35 ,rounded-corners]{1-1}{}\nicefrac{1}{4} & \cdot  {\Big(} & \grayBlockk{ \scalebox{0.8}{$\sum\limits_{i= 5,6,9,10}$} e_{s_i} } & 0 & 0 & 0 & 0 & \oneBlock & \oneBlock & 0 & 0 & \oneBlock & \oneBlock & \grayBlockk{\scalebox{0.8}{$\sum\limits_{i= 5,6,9,10}$} \tilde{s}_i } & 0 & \blueBlock{5}  & \blueBlock{6}  & \orangeBlock{5} & \orangeBlock{6} & \blueBlock{9} & \blueBlock{10} & \orangeBlock{9} & \orangeBlock{10} & 0 {\Big)}\\[-3mm]
\\ 
\hat{h}^{(2)}_9  = & \Block[fill=mred!35 ,rounded-corners]{1-1}{}\nicefrac{1}{4} & \cdot  {\Big(} & \grayBlockk{ \scalebox{0.8}{$\sum\limits_{i= 4,5,8,9}$} e_{s_i} } & 0 & 0 & 0 & \oneBlock & \oneBlock & 0 & 0 & \oneBlock & \oneBlock & 0 & \grayBlockk{\scalebox{0.8}{$\sum\limits_{i= 4,5,8,9}$} \tilde{s}_i } & \blueBlock{4}  & \blueBlock{5}  & \orangeBlock{4}  & \orangeBlock{5} & \blueBlock{8} & \blueBlock{9} & \orangeBlock{8} & \orangeBlock{9} & 0 & 0 {\Big)}\\[-3mm]
\\ %
\hat{h}^{(2)}_8  = & \Block[fill=mred!35 ,rounded-corners]{1-1}{}\nicefrac{1}{3} & \cdot  {\Big(} & \grayBlockk{ \scalebox{0.8}{$\sum\limits_{i= 4,7,8}$} e_{s_i} } & 0 & 0 & 0 & \oneBlock & 0 & 0 & \oneBlock & \oneBlock & 0 & 0 & \grayBlockk{\scalebox{0.8}{$\sum\limits_{i= 4,7,8}$} \tilde{s}_i } & \blueBlock{4} & 0 & \orangeBlock{4}  & \blueBlock{7} & \blueBlock{8} & \orangeBlock{7} & \orangeBlock{8} & 0 & 0 & 0 {\Big)}\\ %
  &&&&&&&&&&&&&&&
  \Block[]{1-9}{\underbrace{\phantom{\hspace{9.0cm}}}_{%
  \textstyle \hat{p}^{(2)}_{8}  %
  }} &&&&&&&   
\end{NiceMatrix}$
\end{adjustbox}
\label{eqn:token_10_h_hat_2_any}
\end{equation}
 By the structure of $\tilde{A}^{(3)}$ we can see how, when computing the attention, the matrices $B^{(3,h)}$ are applied on the positional encoding $e_j$ and the result is multiplied by $\hat{p}^{(2,h)}_i$:
\begin{equation}
    h^{(2)\top}_i \tilde{A}^{(3)} h^{(2)\top}_j  = \sum_{h=1}^K p_i^{(2,h)\top} B^{(3,h)} e_j + e_iA^{(3)}e_j
\end{equation}
where $B^{(3,h)}$ is selectively summing the transition probabilities from the corresponding head.  As for the simpler case of contiguous lags,  for the sum to be selective it must hold that $h^{(2)\top}_i \tilde{A}^{(3)} h^{(2)}_{i-k+1} \propto \sum_{j \leq i} \tilde{p}_{j,k}$, where $i-k+1$ are the only non-zero entries due to $A^{(3)}$ after applying softmax. As before, $B^{(3,h)}$ is a boolean matrix such that each column sums only the entries containing the transitions for one of the lags. To achieve this, the matrix needs to learn the same pattern as in the attention of the second layer $\mathcal{A}^{(2)}$, which was used to sum the vectors and create the current inputs. Each column is shifted by one position, and they cyclically repeat with frequency $K$. In the following example, we consider $\hat{p}^{(2,1)}_{10} B^{(3,1)} e_8$ in \eqref{eqn:mask_example_1_any}, which sums the transitions stored in $\hat{h}^{(2,1)}$ in the entry $\mathcal{A}^{(3)}_{10,8}$:

\begin{equation}
\begin{split}
    \hat{p}^{(2,1)\top}_{10} B^{(3,1)} e_8 &= \frac{\beta}{4}
    \hspace{2mm}
    \begin{adjustbox}{angle=0,origin=c,scale=0.7}
    $
    \begin{pNiceMatrix}[] 0 \\ \blueBlock{5}  \\ \blueBlock{6}  \\ \orangeBlock{5} \\ \orangeBlock{6} \\ \blueBlock{9} \\ \blueBlock{10} \\ \orangeBlock{9} \\ \orangeBlock{10} \\ 0 %
    \end{pNiceMatrix}^{\textstyle\top}
    \BOneany 
    \begin{pNiceMatrix}[] 0 \\ 0 \\ 0 \\ 0 \\ 0 \\ 0 \\ 0 \\ \grayBlock{1} \\ 0 \\ 0 \\ \end{pNiceMatrix}
    $ 
   \end{adjustbox} =
   \frac{\beta}{4}
    \hspace{2mm}
   \begin{adjustbox}{angle=0,origin=c,scale=0.7}
    $ 
    \begin{pNiceMatrix}[] 0 \\ \blueBlock{5}  \\ \blueBlock{6}  \\ \orangeBlock{5} \\ \orangeBlock{6} \\ \blueBlock{9} \\ \blueBlock{10} \\ \orangeBlock{9} \\ \orangeBlock{10} \\ 0  \end{pNiceMatrix}^{\textstyle\top}
    \begin{pNiceMatrix}[]  0 \\ \grayBlock{1} \\ \grayBlock{1} \\ 0 \\ 0\\ \grayBlock{1} \\ \grayBlock{1} \\ 0 \\ 0 \\ 0 \\ \end{pNiceMatrix} 
    $ 
    \end{adjustbox}\\ 
    &= \frac{\beta}{4}%
    \begin{pNiceMatrix}[]
    &\blueBlock{10} &+& \blueBlock{9} &+& \blueBlock{6} &+& \blueBlock{5} 
    \end{pNiceMatrix}
    \end{split} 
    \label{eqn:mask_example_1_any} 
\end{equation}

Notice how the matrix $B^{(3,1)}$ has the same pattern as $A^{(2,1)}$ in \eqref{eqn:A_21_any} but along the columns instead of the rows. Intuitively, it makes sense since we need to sum the same entries resulting from the sum in the previous attention. The matrix $B^{(3,2)}$  acting on the second head will have the same pattern but shifted by two positions in order to have the same pattern as $A^{(2,2)}$. 
\subsection{Example with $\mathcal{K} = \{1,3,4\}$ }
The case of two lags $\mathcal{K} = \{1,3\}$, despite not being contiguou,s does not adequately represent the general case. Indeed due to the structure, we could always sum two consecutive tokens and therefore recover optimal performance using $2$ heads. It is helpful to also consider a case where the lags do not form a structure that allows for fewer heads in the construction. For example the case of three lags $\mathcal{K} = \{1,3,4\}$ and $T=12$:
\paragraph{First layer:} The main structure of the first layer remains unchanged, the diagonals in the matrix $A^{(1)}$ with positive entries are \nth{2}, \nth{3}  and \nth{4}: 
\begin{align}
\begin{split}
    &\widetilde{A}^{(1)} = \Atildeone \\ \\
    &A^{(1)}_{ij} = \begin{cases}
        + \lambda \quad \text{if} \quad j-i \in \mathcal{K} \\
        - \lambda \quad \text{if} \quad j-i \not\in \mathcal{K} \, .  
    \end{cases}
\end{split}
\begin{adjustbox}{angle=0,origin=c,scale=1} 
$\qquad A^{(1)} =$
\end{adjustbox} 
\begin{adjustbox}{angle=0,origin=c,scale=0.7} %
    $\;\; \Aoneaany$ 
\end{adjustbox}    
\label{eqn:A_1}
\end{align}
The output token at index $i$ after the first layer still corresponds to a weighted average of the past tokens $h^{(0)}_{i-k}$ for $k \in \mathcal{K}$ where the weights are given by  the normalized probabilities $\tilde{p}_{i,k}$: 
\begin{equation}
\mathcal{A}^{(1)} = 
\begin{adjustbox}{angle=0,origin=c,scale=0.5}
     $\calAoneAAny$ %
\end{adjustbox}
\;
\hat{h}^{(1)} = 
\begin{adjustbox}{angle=0,origin=c,scale=0.5}
     $\hathOneAAny$ %
\end{adjustbox}
\label{eqn:attn_1_aany}
\end{equation}
\paragraph{Second layer, aggregation of transition probabilities:}
In this case, we can't use the fact that 2 consecutive tokens can be summed without mixing the information. In fact, summing the last tow tokens $\hat{h}^{(1)}_{11}$ and $\hat{h}^{(2)}_{12}$ would now result in the mixing of $\tilde{p}_{11,3}$ and $\tilde{p}_{12,4}$. In order to avoid this, the only possibility is to sum one token every $4$, similar to the case where we would have $4$ contiguous lags. This solution is less efficient because summing each $4$ tokens while having the missing transition corresponding to lag $2$ leaves an empty element in the embedding of the token and adds an additional head, increasing both the dimension and the number of parameters.  This means that even if we only have 3 lags, in order to not have any overlap, we still need 4 attention heads for our construction to not mix the information.  Each head has the pattern $(0,0,0,1)$ shifted by one position as if the lags would be 1,2,3,4:
\begin{align*}
\begin{adjustbox}{angle=0,origin=c,scale=1} 
$\, A^{(2, 1)}=$
\end{adjustbox} 
\begin{adjustbox}{angle=0,origin=c,scale=0.7}
$\Atwooneaany$ 
\end{adjustbox}
\,
\mathcal{A}^{(2, 2)} = 
\begin{adjustbox}{angle=0,origin=c,scale=0.7}   
$\Atwotwoaany$ 
\end{adjustbox}
\end{align*}
\begin{align*}
\begin{adjustbox}{angle=0,origin=c,scale=1} 
$\, A^{(2, 3)}=$
\end{adjustbox} 
\begin{adjustbox}{angle=0,origin=c,scale=0.7}
$\Atwothraany$ 
\end{adjustbox}
\,
\mathcal{A}^{(2, 4)} = 
\begin{adjustbox}{angle=0,origin=c,scale=0.7}   
$\Atwofouraany$ 
\end{adjustbox}
\end{align*}

\paragraph{Third layer}
For the third layer we use again the construction with the positional encoding that was introduced in App~\ref{App:alternative_third}:
\begin{equation*}
    A^{(3)}_{ij} = \lambda_1 \begin{cases}
    +1 \quad \text{if} \quad j-i+1 \in \mathcal{K} \\ 
    -1 \quad \text{if} \quad j-i+1 \not\in \mathcal{K} \\ 
\end{cases} 
    A^{(3)} = \begin{adjustbox}{angle=0,origin=c,scale=0.65}
    $\AThreeaany$
    \end{adjustbox}
\end{equation*}
For the selective sum, the matrices $B^{(3,1)}, \dots, B^{(3,4)}$ have the same form as before but considering now the fact that even if we only have 3 lags in the set, we still need $4$ heads: 
\begin{equation*}
    B^{(3,h)}_{ij} =
    \beta \begin{cases}
    +1, & \text{if} \; \left((i-j-h+1) \, \mod \, \hat{k}- \min(\mathcal{K}) +1 = 0 \right) \\
    0, & \text{otherwise}
    \end{cases}
\end{equation*}
The computation related to the matrix $B^{(3,1)}$ in the attention are reported in the following:
\begin{equation*} 
\begin{split}
    \hat{p}^{(2,1)\top}_{12} B^{(3,1)} e_{10} &= \frac{\beta}{2}
    \hspace{2mm}
    \begin{adjustbox}{angle=0,origin=c,scale=0.7}
    $ 
    \begin{pNiceMatrix}[] 0 \\ 0 \\ 0 \\ \greenBlock{8} \\  \blueBlock{8} \\ 0 \\ \orangeBlock{8} \\  \greenBlock{12} \\ \blueBlock{12}  \\ 0 \\ \orangeBlock{12} \\ 0  %
    \end{pNiceMatrix}^{\textstyle\top} 
    \BOoone 
    \begin{pNiceMatrix}[] 0 \\ 0 \\ 0 \\ 0 \\ 0 \\ 0 \\ 0 \\ 0 \\ 0 \\ \grayBlock{1} \\ 0 \\ 0 \\ \end{pNiceMatrix}
    $ 
   \end{adjustbox} =
   \frac{\beta}{2}
    \hspace{2mm}
   \begin{adjustbox}{angle=0,origin=c,scale=0.7}
    $ 
    \begin{pNiceMatrix}[] 0 \\ 0 \\ 0 \\ \greenBlock{8} \\  \blueBlock{8} \\ 0 \\ \orangeBlock{8} \\  \greenBlock{12} \\ \blueBlock{12}  \\ 0 \\ \orangeBlock{12} \\ 0  %
    \end{pNiceMatrix}^{\textstyle\top} 
    \begin{pNiceMatrix}[] 0 \\ 0 \\ 0 \\ 0 \\ \grayBlock{1} \\ 0 \\ 0 \\ 0 \\ \grayBlock{1} \\ 0 \\ 0 \\ 0 \\ \end{pNiceMatrix} 
    $ 
    \end{adjustbox}
    \\ 
    &= \frac{\beta}{2}
    \begin{pNiceMatrix}[]
    &\blueBlock{12} &+& \blueBlock{8} 
    \end{pNiceMatrix}
    \end{split}
\end{equation*}

\section{Construction for two lags and Single head}
\label{App:two_orders_constr}
In the constructions illustrated so far, in order to store all the transitions in the history of the current token and not lose any information, we had to scale the number of heads at least as the number of lags in the task $K$. This allows to achieve optimal sample complexity. However, driven by experimental evidence, we observed that scaling the number of heads as the number of lags is not necessary in the special case of $|\mathcal{K}|=2$. In this case indeed, there exists a solution, which transformers can learn, that achieves optimal sample complexity using only one head in the second layer.
In the following we will report the construction that proves the previous statement while illustrating it for the case of $\mathcal{K} = \{1,3\}$ analogous to Section~\ref{App:costr_any_order}. 
\paragraph{First layer:} The structure of the first layer remains unchanged from Section~\ref{sec:theoretical_transformer}. The important difference is that now the diagonals in the matrix $A^{(1)}$ with positive entries are only \nth{2} and \nth{4}: 
\begin{align*}
\begin{split}
    &\widetilde{A}^{(1)} = \Atildeone \\ \\
    &A^{(1)}_{ij} = \begin{cases}
        + \lambda \quad \text{if} \quad j-i \in \mathcal{K} \\
        - \lambda \quad \text{if} \quad j-i \not\in \mathcal{K} \, .  
    \end{cases}
\end{split}
\begin{adjustbox}{angle=0,origin=c,scale=1} 
$\qquad A^{(1)} =$
\end{adjustbox} 
\begin{adjustbox}{angle=0,origin=c,scale=0.7} %
    $\;\; \Aoneany$ 
\end{adjustbox}    
\label{eqn:A_1}
\end{align*}
Remarking that each input element $s_i$ is encoded as $h^{(0)}_i = [e_{s_i}, e_i] \in \{0, 1\}^{|\mathcal{S}| + T}$, the output token at index $i$ after the first layer still corresponds to a weighted average of the past tokens $h^{(0)}_{i-k}$ for $k \in \mathcal{K}$ where the weights are given by  the normalized probabilities $\tilde{p}_{i,k}$: 
\begin{equation*}
    \hat{h}^{(1)}_i=\text{Attn}(h^{(0)}_{1:T};\tilde{A}^{(1)})_i  =  
    \left\{ \begin{array}{cl}\sum_{j=1}^i \mathds{1}\left[i-j \in \mathcal{K} \right] \frac{ P_{s_j,s_i}}{ \sum_{r \in \mathcal{K}} P_{s_r,s_i}} h^{(0)}_j  & \text{if} \; i>1 \\ 
     h_1^{(0)} & \text{if} \; i=1 \end{array} \right.
    = \left\{ \begin{array}{cl}
        \sum\limits_{k \in \mathcal{K}, k < i}  \tilde{p}_{i,k} h^{(0)}_{i-k} & \text{if} \; i>1 \\
        h_1^{(0)} & \text{if} \; i=1
    \end{array} \right.
\end{equation*}
Due to the lack of the entries on the \nth{3} diagonal, both the attention and the output token will change accordingly: 
\begin{equation}
\mathcal{A}^{(1)} = 
\begin{adjustbox}{angle=0,origin=c,scale=0.6}
     $\calAoneAny$ %
\end{adjustbox}
\;
\hat{h}^{(1)} = 
\begin{adjustbox}{angle=0,origin=c,scale=0.6}
     $\hathOneAny$ %
\end{adjustbox}
\label{eqn:attn_1_any_2}
\end{equation}
\paragraph{Second Layer:} the second layer uses only the first head $\tilde{A}^{(2)} = \tilde{A}^{(2,1)}$ compared to the construction illustrated in Section~\ref{App:costr_any_order} and the matrix $A^{(2)} = A^{(2,1)}$ remains identical. 
\begin{equation*}
\tilde{A}^{(2)} =
\begin{adjustbox}{angle=0,origin=c,scale=1.0} 
$\begin{pNiceMatrix}[margin]
0 & 0 & \Block[borders={left,bottom,tikz={dashed}}]{2-2}{0} \\
0 &\Block[fill=mred!40,rounded-corners]{1-1}{}A^{(2)} &  \\ 
\Block[borders={top,right,tikz={dashed}}]{2-2}{0} & & \Block[]{2-2}{0} \\ 
& & &  \\
\end{pNiceMatrix}$
\end{adjustbox} 
\;
\;
A^{(2)} = 
\begin{adjustbox}{angle=0,origin=c,scale=0.7}
$\Atwooneany$ 
\end{adjustbox}
\end{equation*}
For the case of two lags examined here, we can derive a mathematical expression for the matrix $A^{(2)}$ which is valid for any set of lags. For convenience we introduce $\bar{k} = \min \mathcal{K}$:
\begin{equation}
    A^{(2)}_{i,j} =
\begin{cases}
0, & \text{if } i < \hat{k} \text{ or } j < \hat{k}, \\
\lambda, & \text{if } j \leq i \text{ and } \left( |i - j| \mod 2(\hat{k} - \bar{k}) \right) < (\hat{k} - \bar{k}), \\
0, & \text{otherwise}
\end{cases}
\end{equation}
where the first condition ensures that all elements in the first $\hat{k}$ rows and the first $\hat{k} $ columns of the matrix are zero. The condition $j \leq i$ ensures that only the lower triangular part of the matrix (including the diagonal). Finally, the condition $\left( |i - j| \mod 2d \right) < d $  introduces a periodic pattern within the lower triangular part of the matrix. The modulo operation creates a repeating cycle of length $2(\hat{k}-\bar{k})$, and the condition $ < (\hat{k}-\bar{k}) $ determines whether to place a one or a zero within each cycle segment.

\paragraph{Third layer:} the third layer instead has a different structure. As before, there are only two non-zero blocks $A^{(3)}$ and $B^{(3)}$, but the latter appears in the transpose position compared to the previous constructions: 
\begin{equation}
    \tilde{A}^{(3)} = 
    \begin{adjustbox}{angle=0,origin=c,scale=0.7}
    $\AtildethreeVany$
    \end{adjustbox}
    \; \; 
     A^{(3)} =
    \begin{adjustbox}{angle=0,origin=c,scale=0.5}
    $\AThreeany$
    \end{adjustbox}
    \; \; 
     B^{(3)} = \beta
    \begin{adjustbox}{angle=0,origin=c,scale=0.5}
    $\BOneTwo$
    \end{adjustbox}
    \label{eqn:Atilde_3_two_orders}
\end{equation}
The matrix $A^{(3)}$ remains unchanged and has positive entries along the diagonals, corresponding to the lags shifted by one position. The main difference lies in the matrix $B^{(3)}$, which now includes negative entries in positions that previously contained zeros. The general formulation of this matrix is the following: 
\begin{equation*}
           A^{(3)}_{ij} = \begin{cases}
        +\lambda \quad \text{if} \quad j-i+1 \in \mathcal{K} \\ 
        -\lambda \quad \text{if} \quad j-i+1 \not\in \mathcal{K} \\ 
    \end{cases}  \qquad B^{(3)}_{i,j} = 
\begin{cases}
\phantom{+}0, & \text{if } j \geq i, \\
+\beta, & \text{if } \left( (i - j - 1) \mod 2(\hat{k} - \bar{k}) \right) < (\hat{k} - \bar{k}), \\
-\beta, & \text{otherwise}.
\end{cases}
\end{equation*}
So far the matrix $B^{(3)}$ has been structured such that it would compute the selective sum of the normalized transition of the lag of the corresponding entry in the attention: $\tilde{A}^{(3)}_{ij} \propto h^{(2)\top}_i \tilde{A}^{(3)} h^{(2)}_{i-k+1} \propto \sum_{j \leq i} \tilde{p}_{j,k}$, where $i-k+1$ are the only non-zero entries due to $A^{(3)}$ after applying softmax.  To understand the impact of having negative entries, let us consider the previous example for the case of $\mathcal{K} = \{1,3\}$ and the output of the second attention for the \nth{8},\nth{9} and \nth{10} token:
\begin{equation*} 
\begin{adjustbox}{angle=0,origin=c,scale=0.7} $
\begin{NiceMatrix}[] \hat{h}^{(2)}_{10} = & \Block[fill=mred!35 ,rounded-corners]{1-1}{}\nicefrac{1}{4} & \cdot  {\Big(} & \grayBlockk{ \scalebox{0.8}{$\sum\limits_{i= 5,6,9,10}$} e_{s_i} } & 0 & 0 & 0 & 0 & \oneBlock & \oneBlock & 0 & 0 & \oneBlock & \oneBlock & \grayBlockk{\scalebox{0.8}{$\sum\limits_{i= 5,6,9,10}$} \tilde{s}_i } & 0 & \blueBlock{5}  & \blueBlock{6}  & \orangeBlock{5} & \orangeBlock{6} & \blueBlock{9} & \blueBlock{10} & \orangeBlock{9} & \orangeBlock{10} & 0 {\Big)}\\[-3mm]
\\ 
\hat{h}^{(2)}_9  = & \Block[fill=mred!35 ,rounded-corners]{1-1}{}\nicefrac{1}{4} & \cdot  {\Big(} & \grayBlockk{ \scalebox{0.8}{$\sum\limits_{i= 4,5,8,9}$} e_{s_i} } & 0 & 0 & 0 & \oneBlock & \oneBlock & 0 & 0 & \oneBlock & \oneBlock & 0 & \grayBlockk{\scalebox{0.8}{$\sum\limits_{i= 4,5,8,9}$} \tilde{s}_i } & \blueBlock{4}  & \blueBlock{5}  & \orangeBlock{4}  & \orangeBlock{5} & \blueBlock{8} & \blueBlock{9} & \orangeBlock{8} & \orangeBlock{9} & 0 & 0 {\Big)}\\[-3mm]
\\ %
\hat{h}^{(2)}_8  = & \Block[fill=mred!35 ,rounded-corners]{1-1}{}\nicefrac{1}{3} & \cdot  {\Big(} & \grayBlockk{ \scalebox{0.8}{$\sum\limits_{i= 4,7,8}$} e_{s_i} } & 0 & 0 & 0 & \oneBlock & 0 & 0 & \oneBlock & \oneBlock & 0 & 0 & \grayBlockk{\scalebox{0.8}{$\sum\limits_{i= 4,7,8}$} \tilde{s}_i } & \blueBlock{4} & 0 & \orangeBlock{4}  & \blueBlock{7} & \blueBlock{8} & \orangeBlock{7} & \orangeBlock{8} & 0 & 0 & 0 {\Big)}\\ %
  &&&&&&&&&&&&&&&
  \Block[]{1-9}{\underbrace{\phantom{\hspace{9.0cm}}}_{%
  \textstyle \hat{p}^{(2)}_{8}  %
  }} &&&&&&&   
\end{NiceMatrix}$
\end{adjustbox}
\end{equation*}
 and define $\hat{p}^{(2)}_i \in \mathbb{R}^T$ as the block of $\hat{h}^{(2)}_i$ which contains the normalized transition probabilities such that  $\hat{h}^{(2)}_i = [\sum_{j \in N_i} e_{s_j}, \hat{m}^{(2)}_i, \sum_{j \in N_i} \tilde{s}_j, \hat{p}^{(2)}_i]$. %
 By the different structure in \eqref{eqn:Atilde_3_two_orders} we can see how, when computing the attention for the concatenated tokens $h^{(2)}_i = [[e_{s_i},e_i],\hat{h}^{(1)}_i,\hat{h}^{(2)}_i]$, the order of the multiplication has been reversed ($e_i$ is now on the left) and the matrix $B^{(3)}$ is applied to $\hat{p}^{(2)}_j$:
\begin{equation}
   h^{(2)\top }_i\tilde{A}^{(3)} h^{(2)}_j  = e_i^\top B^{(3)} \hat{p}_j^{(2)} + e_iA^{(3)}e_j \,.
\end{equation}

To better understand the implications of the reverse order in the multiplication and the presence of negative entries, consider the product $e_{10}^\top B^{(3,1)} \hat{p}^{(2)}_{8}$ in \eqref{eqn:mask_example_1_two}, which sums the transitions stored in $\hat{h}^{(2,1)}$ which, after softmax, will correspond to $\mathcal{A}^{(3)}_{10,8}$:
\begin{equation}
\begin{split}
    e_{10}^\top B^{(3)} \hat{p}^{(2)\top}_{8} &= \frac{\beta}{3}
    \hspace{2mm}
    \begin{adjustbox}{angle=0,origin=c,scale=0.7}
    $ 
    \begin{pNiceMatrix}[] 0 \\ 0 \\ 0 \\ 0 \\ 0 \\ 0 \\ 0 \\ 0 \\ 0 \\ \grayBlock{1} \\ \end{pNiceMatrix}^{\textstyle\top} 
    \BOneTwo 
    \begin{pNiceMatrix}[] \blueBlock{4} \\ 0   \\ \orangeBlock{4} \\ \blueBlock{7} \\ \blueBlock{8} \\ \orangeBlock{7} \\ \orangeBlock{8} \\ 0 \\ 0 \\ 0 \end{pNiceMatrix}
    $ 
   \end{adjustbox} =
   \frac{\beta}{3}
    \hspace{2mm}
   \begin{adjustbox}{angle=0,origin=c,scale=0.7}
    $ 
    \begin{pNiceMatrix}[] \tealBlock{1} \\ \mredBlock{-1} \\ \mredBlock{-1} \\ \tealBlock{1} \\ \tealBlock{1} \\ \mredBlock{-1} \\ \mredBlock{-1} \\ \tealBlock{1} \\ \tealBlock{1} \\ 0 \\ \end{pNiceMatrix}^{\textstyle\top} 
    \begin{pNiceMatrix}[] \blueBlock{4} \\ 0   \\ \orangeBlock{4} \\ \blueBlock{7} \\ \blueBlock{8} \\ \orangeBlock{7} \\ \orangeBlock{8} \\ 0 \\ 0 \\ 0 \end{pNiceMatrix}
    $ 
    \end{adjustbox}\\ 
    &= \frac{\beta}{3}%
    \begin{pNiceMatrix}[]
    &\blueBlock{8} &+& \blueBlock{7} &+& \blueBlock{4} &-& \orangeBlock{8} &-& \orangeBlock{7} &-& \orangeBlock{4} 
    \end{pNiceMatrix}
    \end{split} 
    \label{eqn:mask_example_1_two} 
\end{equation}
where we observe how, the product involving $B^{(3)}$, is now not only computing the sum of transitions for the lag $3$ as for the previous constructions to copy the \nth{8} to predict the \nth{11}, it is also subtracting all the transitions of lag $3$. To fully understand the implications, we also consider the entry of the attention correspondent to the other lag in the set, $1$ and the relative product $e_{10}^\top B^{(3,1)} \hat{p}^{(2)}_{10}$:  

\begin{equation}
\begin{split}
    e_{10}^\top B^{(4)} \hat{p}^{(2)\top}_{10} &= \frac{\beta}{3} 
    \hspace{2mm}
    \begin{adjustbox}{angle=0,origin=c,scale=0.7}
    $ 
    \begin{pNiceMatrix}[] 0 \\ 0 \\ 0 \\ 0 \\ 0 \\ 0 \\ 0 \\ 0 \\ 0 \\ \grayBlock{1} \\ \end{pNiceMatrix}^{\textstyle\top} 
    \BOneTwo 
    \begin{pNiceMatrix}[] 0 \\ \blueBlock{5}  \\ \blueBlock{6}  \\ \orangeBlock{5} \\ \orangeBlock{6} \\ \blueBlock{9} \\ \blueBlock{10} \\ \orangeBlock{9} \\ \orangeBlock{10} \\ 0 %
    \end{pNiceMatrix}
    $ 
   \end{adjustbox} =
   \frac{\beta}{4} 
    \hspace{2mm}
   \begin{adjustbox}{angle=0,origin=c,scale=0.7}
    $ 
    \begin{pNiceMatrix}[] \tealBlock{1} \\ \mredBlock{-1} \\ \mredBlock{-1} \\ \tealBlock{1} \\ \tealBlock{1} \\ \mredBlock{-1} \\ \mredBlock{-1} \\ \tealBlock{1} \\ \tealBlock{1} \\ 0 \\ \end{pNiceMatrix}^{\textstyle\top} 
    \begin{pNiceMatrix}[] 0 \\ \blueBlock{5}  \\ \blueBlock{6}  \\ \orangeBlock{5} \\ \orangeBlock{6} \\ \blueBlock{9} \\ \blueBlock{10} \\ \orangeBlock{9} \\ \orangeBlock{10} \\ 0 %
    \end{pNiceMatrix}
    $ 
    \end{adjustbox}\\ 
    &= \frac{\beta}{4}
    \begin{pNiceMatrix}[]
         & \orangeBlock{10} &+& \orangeBlock{9} &+& \orangeBlock{6} &+& \orangeBlock{5} &-& \blueBlock{10} &-& \blueBlock{9} &-& \blueBlock{6} &-& \blueBlock{5} 
    \end{pNiceMatrix}
    \end{split} 
    \label{eqn:mask_example_1_two_alt} 
\end{equation}

Therefore, both products contain the sum of the transitions for the respective lags and the negative sum of the other lag, and notice how they are all computed on different elements of the past. The first one contains the transitions for the tokens 8,7,4, whereas the second one contains the remaining ones 10,9,6,5. If we now compute the softmax: 
\begin{align*}
    \mathcal{A}_{10,10} &= \frac{\exp \bigg( e_{10}^\top B^{(3)} \hat{p}^{(2)}_{10} +\lambda \bigg)}{ \sum_{\substack{i=1      \\ i \neq 8}}^{9} \exp \bigg( e_{10}^\top B^{(3)} \hat{p}^{(2)}_{j} -\lambda \bigg) + \exp \bigg( e_{10}^\top B^{(3)} \hat{p}^{(2)}_{8} +\lambda \bigg) + \exp \bigg( e_{10}^\top B^{(3)} \hat{p}^{(2)}_{10} +\lambda \bigg) } \\ 
    &=\frac{\exp \bigg( e_{10}^\top B^{(3)} \hat{p}^{(2)}_{10}\bigg)}{ \sum_{\substack{i=1      \\ i \neq 8}}^{9} \exp \bigg( e_{10}^\top B^{(3)} \hat{p}^{(2)}_{j} -2\lambda \bigg) + \exp \bigg( e_{10}^\top B^{(3)} \hat{p}^{(2)}_{8} \bigg) + \exp \bigg( e_{10}^\top B^{(3)} \hat{p}^{(2)}_{10} \bigg) }
\end{align*}
Considering the limit of $\lambda \to \infty$: 
\begin{align*}
    \lim_{\lambda \to \infty} \mathcal{A}_{10,10} &= \frac{\exp \bigg( e_{10}^\top B^{(3)} \hat{p}^{(2)}_{10} \bigg)}{\exp \bigg( e_{10}^\top B^{(3)} \hat{p}^{(2)}_{8}  \bigg) + \exp \bigg( e_{10}^\top B^{(3)} \hat{p}^{(2)}_{10} \bigg) } \\ 
    & = \frac{1}{\exp \bigg( e_{10}^\top B^{(3)} \hat{p}^{(2)}_{8} - e_{10}^\top B^{(3)} \hat{p}^{(2)}_{10} \bigg) + 1 } \\
    & = \frac{1}{\exp \bigg(+\frac{\beta}{3} \sum_{i \in \{8,7,4\}} \tilde{p}_{i,3}  -\frac{\beta}{3} \sum_{i \in \{8,7,4\}} \tilde{p}_{i,1} -\frac{\beta}{4} \sum_{i \in \{10,9,6,5\}} \tilde{p}_{i,1} + \sum_{i \in \{10,9,6,5\}} \tilde{p}_{i,3} \bigg) + 1 }
\end{align*}
which is considering all the possible transitions as in the case of two heads, therefore achieving optimal sample complexity.

\end{document}